\newcommand{\neur}[1]{#1_j^{(i)}}
\DeclareDocumentCommand\W{ g g }{%
        \IfNoValueTF {#1} {\mathbf{W}} {
            \IfNoValueTF {#2} {\mathbf{W}^{(#1)}}{\mathbf{W}^{(#1)}_{#2}}
        }
}
\DeclareDocumentCommand\bias{ g g }{%
        \IfNoValueTF {#1} {\mathbf{b}} {
            \IfNoValueTF {#2} {\mathbf{b}^{(#1)}}{\mathbf{b}^{(#1)}_{#2}}
        }
}
\DeclareDocumentCommand\betavar{ g g }{%
        \IfNoValueTF {#1} {\bm{\beta}} {
            \IfNoValueTF {#2} {{\bm{\beta}^{(#1)}}{}}{\bm{\beta}^{(#1)}_{#2}}
        }
}
\DeclareDocumentCommand\xivar{ g g }{%
        \IfNoValueTF {#1} {\bm{\xi}} {
            \IfNoValueTF {#2} {{\bm{\xi}^{(#1)}}{}}{\bm{\xi}^{(#1)}_{#2}}
        }
}
\DeclareDocumentCommand\xivarn{ g g }{%
        \IfNoValueTF {#1} {\bm{\xi^-}} {
            \IfNoValueTF {#2} {\bm{\xi^-}^{+(#1)}}{\bm{\xi^-}^{+(#1)}_{#2}}
        }
}
\DeclareDocumentCommand\xivarp{ g g }{%
        \IfNoValueTF {#1} {\bm{\xi^+}} {
            \IfNoValueTF {#2} {\bm{\xi^+}^{+(#1)}}{\bm{\xi^+}^{+(#1)}_{#2}}
        }
}
\DeclareDocumentCommand\nuvar{ g g }{%
        \IfNoValueTF {#1} {{\bm{\nu}}} {
            \IfNoValueTF {#2} {{\bm{\nu}^{(#1)}}{}}{\nu^{(#1)}_{#2}{}}
        }
}
\DeclareDocumentCommand\hnuvar{ g g }{%
        \IfNoValueTF {#1} {\bm{\hat{\nu}}} {
            \IfNoValueTF {#2} {{\bm{\hat{\nu}}^{(#1)}}{}}{\hat{\nu}^{(#1)}_{#2}{}}
        }
}
\DeclareDocumentCommand\muvar{ g g }{%
        \IfNoValueTF {#1} {\bm{\mu}} {
            \IfNoValueTF {#2} {{\bm{\mu}^{(#1)}}{}}{\mu^{(#1)}_{#2}}
        }
}
\DeclareDocumentCommand\tauvar{ g g }{%
        \IfNoValueTF {#1} {\bm{\tau}} {
            \IfNoValueTF {#2} {{\bm{\tau}^{(#1)}}{}}{\tau^{(#1)}_{#2}}
        }
}
\DeclareDocumentCommand\pivar{ g g }{%
        \IfNoValueTF {#1} {\bm{\pi}} {
            \IfNoValueTF {#2} {{\bm{\pi}^{(#1)}}{}}{\pi^{(#1)}_{#2}}
        }
}
\DeclareDocumentCommand\gammavar{ g g }{%
        \IfNoValueTF {#1} {\bm{\gamma}} {
            \IfNoValueTF {#2} {{\bm{\gamma}^{(#1)}}{}}{\gamma^{(#1)}_{#2}}
        }
}
\DeclareDocumentCommand\lambdavar{ g g }{%
        \IfNoValueTF {#1} {\bm{\lambda}} {
            \IfNoValueTF {#2} {{\bm{\lambda}^{(#1)}}{}}{\lambda^{(#1)}_{#2}}
        }
}
\DeclareDocumentCommand\tbetavar{ g g }{%
        \IfNoValueTF {#1} {{\bm{\tilde{\beta}}}} {
            \IfNoValueTF {#2} {{{\bm{\tilde{\beta}}}^{(#1)}}{}}{{{\tilde{\beta}}^{(#1)}_{#2}}}
        }
}
\DeclareDocumentCommand\alphavar{ g g }{%
        \IfNoValueTF {#1} {\bm{\alpha}} {
            \IfNoValueTF {#2} {{\bm{\alpha}^{(#1)}}}{\alpha^{(#1)}_{#2}}
        }
}
\DeclareDocumentCommand\hfunc{ g g }{%
        \IfNoValueTF {#1} {h} {
            \IfNoValueTF {#2} {{{h}^{(#1)}}}{h^{(#1)}_{#2}}
        }
}
\DeclareDocumentCommand\zcut{ g g }{%
        \IfNoValueTF {#1} {\bm{q}} {
            \IfNoValueTF {#2} {{\bm{q}^{(#1)}}}{q^{(#1)}_{#2}}
        }
}
\DeclareDocumentCommand\Zcut{ g g }{%
        \IfNoValueTF {#1} {\bm{Q}} {
            \IfNoValueTF {#2} {{\bm{Q}^{(#1)}}}{\bm{Q}^{(#1)}_{#2}}
        }
}
\DeclareDocumentCommand\xcut{ g g }{%
        \IfNoValueTF {#1} {\bm{h}} {
            \IfNoValueTF {#2} {{\bm{h}^{(#1)}}}{h^{(#1)}_{#2}}
        }
}
\DeclareDocumentCommand\Xcut{ g g }{%
        \IfNoValueTF {#1} {\bm{H}} {
            \IfNoValueTF {#2} {{\bm{H}^{(#1)}}}{\bm{H}^{(#1)}_{#2}}
        }
}
\DeclareDocumentCommand\hxcut{ g g }{%
        \IfNoValueTF {#1} {\bm{g}} {
            \IfNoValueTF {#2} {{\bm{g}^{(#1)}}}{g^{(#1)}_{#2}}
        }
}
\DeclareDocumentCommand\hXcut{ g g }{%
        \IfNoValueTF {#1} {\bm{G}} {
            \IfNoValueTF {#2} {{\bm{G}^{(#1)}}}{\bm{G}^{(#1)}_{#2}}
        }
}
\DeclareDocumentCommand\D{ g g }{%
        \IfNoValueTF {#1} {\mathbf{D}} {
            \IfNoValueTF {#2} {\mathbf{D}^{(#1)}}{\mathbf{D}^{(#1)}_{#2}}
        }
}
\DeclareDocumentCommand\A{ g g }{%
        \IfNoValueTF {#1} {\mathbf{A}} {
            \IfNoValueTF {#2} {\mathbf{A}^{(#1)}}{\mathbf{A}^{(#1)}_{#2}}
        }
}
\DeclareDocumentCommand\a{ g g }{%
        \IfNoValueTF {#1} {\mathbf{a}} {
            \IfNoValueTF {#2} {\mathbf{a}^{(#1)}}{{a}^{(#1)}_{#2}}
        }
}
\DeclareDocumentCommand\al{ g g }{%
        \IfNoValueTF {#1} {\underline{\mathbf{a}}} {
            \IfNoValueTF {#2} {\underline{\mathbf{a}}^{(#1)}}{\underline{a}^{(#1)}_{#2}}
        }
}
\DeclareDocumentCommand\au{ g g }{%
        \IfNoValueTF {#1} {\overline{\mathbf{a}}} {
            \IfNoValueTF {#2} {\overline{\mathbf{a}}^{(#1)}}{\overline{a}^{(#1)}_{#2}}
        }
}
\DeclareDocumentCommand\c{ g }{%
        \IfNoValueTF {#1} {\bm{c}} {
            {c^{({#1})}}
        }
}
\DeclareDocumentCommand\cl{ g }{%
        \IfNoValueTF {#1} {\underline{c}} {
            {\underline{c}^{({#1})}}
        }
}
\DeclareDocumentCommand\cu{ g }{%
        \IfNoValueTF {#1} {\overline{c}} {
            {\overline{c}^{({#1})}}
        }
}
\DeclareDocumentCommand\AA{ g g }{
        \IfNoValueTF {#1} {\mathbf{\Omega}} {
            \IfNoValueTF {#2} {\mathbf{\Omega}(#1, #1)}{\mathbf{\Omega}(#1, #2)}
        }
}
\DeclareDocumentCommand\S{ g g }{%
        \IfNoValueTF {#1} {\mathbf{S}} {
            \IfNoValueTF {#2} {\mathbf{S}^{(#1)}}{\mathbf{S}^{(#1)}_{#2}}
        }
}
\DeclareDocumentCommand\K{ g g }{%
        \IfNoValueTF {#1} {\mathbf{K}} {
            \IfNoValueTF {#2} {\mathbf{K}^{(#1)}}{\mathbf{K}^{(#1)}_{#2}}
        }
}
\DeclareDocumentCommand\B{ g g }{%
        \IfNoValueTF {#1} {\mathbf{B}} {
            \IfNoValueTF {#2} {\mathbf{B}^{(#1)}}{\mathbf{B}^{(#1)}_{#2}}
        }
}
\DeclareDocumentCommand\lowerb{ g g }{%
        \IfNoValueTF {#1} {{\mathbf{\underline{b}}}} {
            \IfNoValueTF {#2} {{\mathbf{\underline{b}}}^{(#1)}}{{\mathbf{\underline{b}}}^{(#1)}_{#2}}
        }
}
\DeclareDocumentCommand\z{ g g }{%
        \IfNoValueTF {#1} {\mathbf{z}} {
            \IfNoValueTF {#2} {\mathbf{z}^{(#1)}}{z^{(#1)}_{#2}}
        }
}
\DeclareDocumentCommand\hz{ g g }{%
        \IfNoValueTF {#1} {\hat{\mathbf{z}}} {
            \IfNoValueTF {#2} {\hat{\mathbf{z}}^{(#1)}}{\hat{z}^{(#1)}_{#2}}
        }
}
\DeclareDocumentCommand\hx{ g g }{%
        \IfNoValueTF {#1} {\hat{\boldsymbol{x}}} {
            \IfNoValueTF {#2} {\hat{\boldsymbol{x}}^{(#1)}}{\hat{x}^{(#1)}_{#2}}
        }
}
\DeclareDocumentCommand\x{ g g }{%
        \IfNoValueTF {#1} {\boldsymbol{x}} {
            \IfNoValueTF {#2} {\boldsymbol{x}^{(#1)}}{x^{(#1)}_{#2}}
        }
}
\DeclareDocumentCommand\bu{ g g }{%
        \IfNoValueTF {#1} {\boldsymbol{u}} {
            \IfNoValueTF {#2} {\boldsymbol{u}^{(#1)}}{{u}^{(#1)}_{#2}}
        }
}
\DeclareDocumentCommand\buh{ g g }{%
        \IfNoValueTF {#1} {\boldsymbol{u}^*} {
            \IfNoValueTF {#2} {\boldsymbol{{u}}^{*(#1)}}{{{u}}^{*(#1)}_{#2}}
        }
}
\DeclareDocumentCommand\bl{ g g }{%
        \IfNoValueTF {#1} {\boldsymbol{l}} {
            \IfNoValueTF {#2} {\boldsymbol{l}^{(#1)}}{{l}^{(#1)}_{#2}}
        }
}
\DeclareDocumentCommand\blh{ g g }{%
        \IfNoValueTF {#1} {\boldsymbol{l}^*} {
            \IfNoValueTF {#2} {\boldsymbol{l}^{*(#1)}}{{{l}}^{*(#1)}_{#2}}
        }
}
\DeclareDocumentCommand\aaa{ g }{%
        \IfNoValueTF {#1} {\bm{a}} {
            {\bm{a}^{({#1})}}
        }
}
\DeclareDocumentCommand\haaa{ g }{%
        \IfNoValueTF {#1} {\bm{\hat{a}}} {
            {\bm{\hat{a}}^{({#1})}}
        }
}
\DeclareDocumentCommand\bbb{ g g }{%
        \IfNoValueTF {#1} {\mathbf{P}} {
            \IfNoValueTF {#2} {{\mathbf{P}_{#1}}}{{\mathbf{P}_{#1}^{({#2})}}}
        }
}
\DeclareDocumentCommand\hbbb{ g g }{%
        \IfNoValueTF {#1} {\mathbf{\hat{P}}} {
            \IfNoValueTF {#2} {{\mathbf{\hat{P}}_{#1}}}{{\mathbf{\hat{P}}_{#1}^{({#2})}}}
        }
}
\DeclareDocumentCommand\ccc{ g g }{%
        \IfNoValueTF {#1} {\mathbf{q}} {
            \IfNoValueTF {#2} {{\mathbf{q}_{#1}}}{{\mathbf{q}_{#1}^{(#2)}}{}}
        }
}
\DeclareDocumentCommand\constc{ g }{%
        \IfNoValueTF {#1} {c} {
            {c^{({#1})}}
        }
}
\DeclareDocumentCommand\setz{ g g }{%
        \IfNoValueTF {#1} {\mathcal{Z}} {
            \IfNoValueTF {#2} {\mathcal{Z}^{(#1)}}{\mathcal{Z}^{(#1)}_{#2}}
        }
}
\DeclareDocumentCommand\setzp{ g g }{%
        \IfNoValueTF {#1} {\mathcal{Z^+}} {
            \IfNoValueTF {#2} {\mathcal{Z}^{+(#1)}}{\mathcal{Z}^{+(#1)}_{#2}}
        }
}
\DeclareDocumentCommand\setzn{ g g }{%
        \IfNoValueTF {#1} {\mathcal{Z^-}} {
            \IfNoValueTF {#2} {\mathcal{Z}^{-(#1)}}{\mathcal{Z}^{-(#1)}_{#2}}
        }
}
\DeclareDocumentCommand\tsetz{ g g }{%
        \IfNoValueTF {#1} {\tilde{\mathcal{Z}}} {
            \IfNoValueTF {#2} {\tilde{\mathcal{Z}}^{(#1)}}{\tilde{\mathcal{Z}}^{(#1)}_{#2}}
        }
}
\DeclareDocumentCommand\seti{ g g }{%
        \IfNoValueTF {#1} {\mathcal{I}} {
            \IfNoValueTF {#2} {\mathcal{I}^{(#1)}}{\mathcal{I}^{(#1)}_{#2}}
        }
}
\DeclareDocumentCommand\setip{ g g }{%
        \IfNoValueTF {#1} {\mathcal{I}^{+}} {
            \IfNoValueTF {#2} {\mathcal{I}^{+(#1)}}{\mathcal{I}^{+(#1)}_{#2}}
        }
}
\DeclareDocumentCommand\setin{ g g }{%
        \IfNoValueTF {#1} {\mathcal{I}^{-}} {
            \IfNoValueTF {#2} {\mathcal{I}^{-(#1)}}{\mathcal{I}^{-(#1)}_{#2}}
        }
}
\DeclareDocumentCommand\tseti{ g g }{%
        \IfNoValueTF {#1} {\tilde{\mathcal{I}}} {
            \IfNoValueTF {#2} {\tilde{\mathcal{I}}^{(#1)}}{\tilde{\mathcal{I}}^{(#1)}_{#2}}
        }
}
\DeclareDocumentCommand\tz{ g g }{%
        \IfNoValueTF {#1} {\tilde{z}} {
            \IfNoValueTF {#2} {\tilde{z}^{(#1)}}{\tilde{z}^{(#1)}_{#2}}
        }
}
\DeclareDocumentCommand\f{ g g }{%
        \IfNoValueTF {#1} {f} {
            \IfNoValueTF {#2} {f^{(#1)}}{f^{(#1)}_{#2}}
        }
}
\DeclareDocumentCommand\lf{ g g }{%
        \IfNoValueTF {#1} {\underline{f}} {
            \IfNoValueTF {#2} {\underline{f}^{(#1)}}{\underline{f}^{(#1)}_{#2}}
        }
}
\def\eqref#1{(\ref{#1})}
\def\1{\bm{1}}
\def\rx{{\textnormal{x}}}
\def\va{{\bm{a}}}
\def\vb{{\bm{b}}}
\def\vc{{\bm{c}}}
\def\vd{{\bm{d}}}
\def\vg{{\bm{g}}}
\def\vh{{\bm{h}}}
\def\vl{{\bm{l}}}
\def\vq{{\bm{q}}}
\def\vt{{\bm{t}}}
\def\vu{{\bm{u}}}
\def\vv{{\bm{v}}}
\def\vw{{\bm{w}}}
\def\vx{{\bm{x}}}
\def\vy{{\bm{y}}}
\def\vz{{\bm{z}}}
\def\mA{{\bm{A}}}
\def\mC{{\bm{C}}}
\def\mD{{\bm{D}}}
\def\mG{{\bm{G}}}
\def\mI{{\bm{I}}}
\def\mW{{\bm{W}}}
\DeclareMathAlphabet{\mathsfit}{\encodingdefault}{\sfdefault}{m}{sl}
\SetMathAlphabet{\mathsfit}{bold}{\encodingdefault}{\sfdefault}{bx}{n}
\def\gC{{\mathcal{C}}}
\def\gX{{\mathcal{X}}}
\definecolor{lightbrown}{RGB}{181, 101, 29} % Example light brown
\newcommand{\revisedtext}[1]{{#1}}
\theoremstyle{plain}
\newtheorem{theorem}{Theorem}[section]
\newtheorem{lemma}[theorem]{Lemma}
\theoremstyle{definition}
\theoremstyle{remark}
\title{Clip-and-Verify: Linear Constraint-Driven Domain Clipping for Accelerating Neural Network Verification}
\author{%
  Duo Zhou$^*$ \quad\enskip Jorge Chavez$^*$\quad\enskip  Hesun Chen \quad\enskip Grani A. Hanasusanto \quad\enskip Huan Zhang \\
  University of Illinois Urbana-Champaign \quad \quad $^*$Equal Contribution  \\
  \texttt{\{duozhou2,jorgejc2,hesunc2,gah\}@illinois.edu, huan@huan-zhang.com} \\
}
\begin{document}

\maketitle

\begin{abstract}
State-of-the-art neural network (NN) verifiers demonstrate that applying the branch-and-bound (BaB) procedure with fast bounding techniques plays a key role in tackling many challenging verification properties. 
In this work, we introduce the \emph{linear constraint-driven clipping} framework, a class of scalable and efficient methods designed to enhance the efficacy of NN verifiers. Under this framework, we develop two novel algorithms that efficiently utilize linear constraints to 1) reduce portions of the input space that are either verified or irrelevant to a subproblem in the context of branch-and-bound, and 2) directly improve intermediate bounds throughout the network. The process novelly leverages linear constraints that often arise from bound propagation methods and is general enough to also incorporate constraints from other sources. It efficiently handles linear constraints using a specialized GPU procedure that can scale to large neural networks without the use of expensive external solvers. 
Our verification procedure, Clip-and-Verify, consistently tightens bounds across multiple benchmarks and can significantly reduce the number of subproblems
handled during BaB. We show that our clipping algorithms can be integrated with BaB-based verifiers such as $\alpha,\! \beta$-CROWN, utilizing either the split constraints in activation-space BaB or the output constraints that denote the unverified input space. We demonstrate the effectiveness of our procedure on a broad range of benchmarks where, in some instances, we witness a 96\% reduction in the number of subproblems during branch-and-bound, and also achieve state-of-the-art verified accuracy across multiple benchmarks. Clip-and-Verify is part of the \href{http://abcrown.org}{$\alpha,\!\beta$\texttt{-CROWN}} verifier, \textbf{the VNN-COMP 2025 winner}. Code available at \textcolor{blue}{\url{https://github.com/Verified-Intelligence/Clip_and_Verify}}.
\end{abstract}

\addtocontents{toc}{\protect\setcounter{tocdepth}{-1}}
\section{Introduction}
The neural network (NN) verification problem is imperative in mission-critical applications \cite{wong2020neural, venzke2020verification, sun2022romax, chen2024verification, yang2024lyapunov, wu2023neural} where formally proving properties such as safety and robustness over a specified input domain is essential. Recent approaches make neural network verification more tractable by relaxing the original non-convex problem into convex formulations that are amenable to linear programming (LP) \cite{ehlers2017formal, kolter2017provable}, semidefinite programming (SDP) \cite{brown2022unifiedviewsdpbasedneural, fazlyab2020safety, raghunathan2018semidefinite, lan2022tight, chiu2025sdpcrown, dathathri2020enabling}, and bound propagation-based solvers \cite{zhang2018efficient, gehr2018ai2, singh2018fast, weng2018towards, dvijotham2018dual, wang2021beta, wang2018efficient}. These convex relaxations can be further strengthened by tightening single-neuron relaxations through convex geometric analysis \cite{singh2018fast}, constructing convex-hull approximations to capture multi-input dependencies \cite{muller2022prima}, and introducing cutting planes that encode inter-neuron dependencies \cite{zhang2022general, zhou2024biccos}. To handle properties that cannot be certified by a single relaxation, state-of-the-art verifiers couple bound propagation methods with the branch-and-bound (BaB) paradigm \cite{bunel2018unified, wang2021beta, shi2025genbab, bunel2020branch} as this technique can be efficiently parallelized and scaled on GPUs. Linear bound propagation methods such as CROWN~\cite{zhang2018efficient} recursively compute bounds on the activations of each layer, referred to as \textit{intermediate bounds}, which serve as critical building blocks for determining the tightness of the overall relaxation and for guiding the BaB search. 

While BaB and cutting-plane techniques can further refine the relaxation by incorporating additional constraints at the final layer, they cannot \emph{directly} and \emph{efficiently} improve the intermediate bounds themselves. In BaB, where the number of subproblems grows exponentially, loose intermediate bounds weaken the relaxation, resulting in deeper branching and longer verification times. Although algorithms such as $\beta$-CROWN \cite{wang2021beta} theoretically support optimizing bounds at intermediate layers, doing so in practice is prohibitively expensive as the number of hidden neurons typically outnumber the output neurons used for property verification by several orders of magnitude. Consequently, updating bounds for all intermediate layers introduces significant computational overhead that far outweigh the gains brought from their tighter convex relaxations. As a result, existing implementations fix the global intermediate bounds computed at initialization (e.g., via $\alpha$-CROWN \cite{xu2020fast}) and focus their optimization efforts solely at the final layer. While this design choice preserves scalability, it limits the ability to tighten the relaxation throughout the network, motivating methods that can refine intermediate bounds both \textit{effectively} and \textit{efficiently}.

To this end, we introduce \textbf{Clip-and-Verify}, a verification pipeline designed to enhance NN verifiers by opportunistically refining bounds at \textbf{any layer with minimal computational overhead}. Our core insight is that the bounding planes generated by linear bound propagation at all layers naturally align with our pipeline, enabling us to exploit their geometry to eliminate infeasible regions of the input domain and prune redundant subproblems early in verification. We formalize the task of tightening any layer's bounds as the objective of our linear constraint-driven clipping framework, and we propose two novel algorithms: complete clipping, which directly optimizes neurons' bounds via a specialized coordinate ascent procedure, and relaxed clipping, which refines the input domain to enhance the intermediate relaxations and consequently improves the NN's bounds. An intuitive illustration of this refinement is given in Figure~\ref{fig:Clipping_Figure}, and our main contributions are as follows:

\begin{itemize}[leftmargin=10pt,labelsep=0.5em]
    \item We propose two specialized GPU algorithms within our novel \emph{linear constraint-driven clipping} framework for tightening bounds at \textbf{any layer}, preserving the scalability of state-of-the-art NN verifiers without relying on external solvers. \emph{Relaxed clipping} optimizes the input domain bounds as a proxy, offering good improvements with little costs, while \emph{complete clipping} employs a customized coordinate ascent solver to directly refine the bounds at \textbf{every layer} of the NN.
    \item We show that linear bound propagation methods produce linear constraints that can be obtained ``for free'' during both input and activation BaB procedures. By leveraging these cheaply available constraints and integrating our two efficient clipping algorithms, we introduce \textbf{Clip-and-Verify}, a verification pipeline that tightens the all neurons bounds with minimal computational overhead.
    \item Across a large number of benchmarks from the Verification of Neural Networks Competition (VNN-COMP) \cite{brix2023fourth, brix2024fifth} and existing literature, we demonstrate that our Clip-and-Verify framework is capable of reducing the number of BaB subproblems by as much as 96\%
    and consistently verifying more properties
    on benchmarks from the Verification of Neural Networks Competition. 
\end{itemize}
\vspace{-3mm}
\section{Preliminaries}

\begin{figure}
  \centering
  \begin{minipage}[b]{0.37\textwidth}
        \centering
        \includegraphics[height=3.85cm]{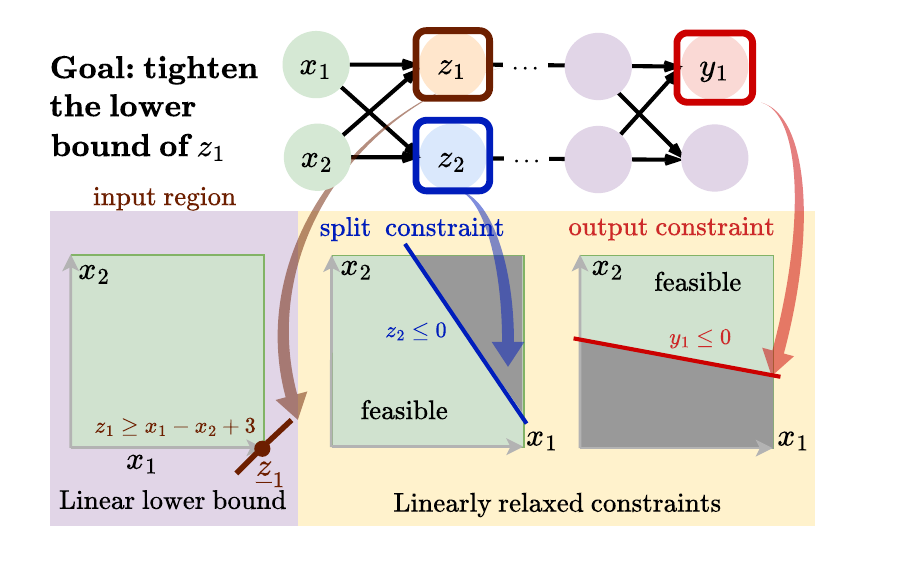}
        \subcaption{}
        \label{fig:Clipping_Figure_1}
    \end{minipage}
    \begin{minipage}[b]{0.6\textwidth}
        \centering
        \includegraphics[height=3.85cm]{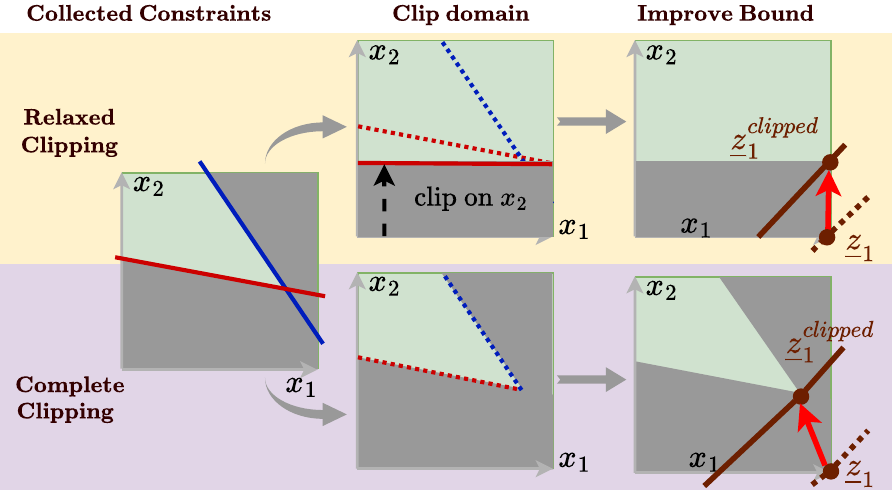}
        \subcaption{}
        \label{fig:Clipping_Figure_2}
\end{minipage}
  \caption{(\hyperref[fig:Clipping_Figure_1]{a}) Linear bound propagation produces linear bounds on all neurons w.r.t. the input. These linear bounds are later used as linearly relaxed constraints. In figure (a), the blue and the red lines are used as the linearly relaxed boundary of constraint $\vz_2 \leq 0$ and $\vy_1 \leq 0$. Our goal is to further tighten the lower bound of $\vz_1$ via these constraints; (\hyperref[fig:Clipping_Figure_2]{b}) Linear constraints (e.g. split constraint $\vz_2\leq0$ and output constraint $\vy_1\leq 0$) can be applied to shrink the input region and provide tighter bounds. In \emph{Relaxed Clipping}, the feasible region is relaxed to its tightest covering box. In \emph{complete clipping}, the infeasible region is completely cropped off, leaving the exact feasible region to improve bounds on. Both two clipping methods improve the bound to $\underline{\vz}_1^{clipped}$, but due to the relaxation nature, Relaxed Clipping yields a looser $\underline{\vz}_1^{clipped}$. A detailed numerical example is given in Appendix~\ref{appendix:2D_Toy_Example}.}
  \label{fig:Clipping_Figure}
  \vspace{-5mm}
\end{figure}
\vspace{-3mm}
\paragraph{The NN Verification Problem.} Given some input $\vx$ belonging to the set $\mathcal{X}$, and a 
feed-forward network $f(\cdot)$ with general activation functions, the goal of NN verification can be formulated as verifying $f(\vx) \geq 0$ for all inputs in $\mathcal{X}$. One manner of verifying this property involves solving $\min_{\vx\in\mathcal{X}}f(\vx)$, which is challenging due to the non-convexity of the NN and is generally NP-complete \citep{katz2017reluplex}. 
On the other hand, convex-relaxation algorithms compute a sound, approximate lower bound, $\underline{f}(\vx)$, to the network’s true minimum such that when $\underline{f}(\vx) > 0$, the property is sufficiently verified, otherwise the problem is unknown without further refinement or falsification.
\vspace{-3mm}
\paragraph{Bound Propagation.} 
Linear bound propagation methods \cite{singh2019abstract, singh2018fast, wang2018efficient, bak2021nfm, zhang2018efficient} approximate neuron bounds layer by layer by relaxing the nonlinearities of activation functions, making these techniques a fast and popular approach for NN verification. For an $L$-layered, feedforward network, the bounds for the $j^\text{th}$ neuron at the $i^\text{th}$ layer may be expressed as:
\begin{equation}\label{eq:neuron_bounds}
\underline{\vz}_j^{(i)} := \min_{\vx\in\mathcal{X}}\underline{\mathbf{A}}_j^{(i)\top}\vx + \underline{\vc}_j^{(i)}\leq \vz_j^{(i)}, \quad  \overline{\vz}_j^{(i)} := \max_{\vx\in\mathcal{X}}\overline{\mathbf{A}}_j^{(i)\top}\vx + \overline{\vc}_j^{(i)} \geq \vz_j^{(i)}
\end{equation}
When $\mathcal{X}$ is an $\ell_\infty$ box (i.e. $\{\vx \mid \|\vx - \hat{\vx}\|_\infty \leq \bm{\epsilon}\}$), we may ``concretize'' the lower and upper bounds of the layer using Lemma~\ref{lemma:dual_norm}: $\underline{\vz}^{(i)} = \underline{\mathbf{A}}^{(i)}\hat{\vx} - |\underline{\mathbf{A}}^{(i)}|\bm{\epsilon} + \underline{\mathbf{c}}^{(i)}$ and $\overline{\vz}^{(i)} = \overline{\mathbf{A}}^{(i)}\hat{\vx} + |\overline{\mathbf{A}}^{(i)}|\bm{\epsilon} + \overline{\mathbf{c}}^{(i)}$. Once concretized, the post-activation neuron, $\hat{\vz}_j^{(i)}$, may be bounded (e.g. via Planet relaxation \cite{ehlers2017formal} for ReLU).
The lower/upper bounding planes, $\underline{\mA}^{(i)}/\overline{\mA}^{(i)}$ and $\underline{\vc}^{(i)}/\overline{\vc}^{(i)}$, are produced via backpropagation, and their definitions are given in  Appendix~\ref{appendix:formulations}. The lower bound at the final layer, $\underline{\vz}^{(L)}$, is used to determine if the problem is verified. 
\vspace{-5pt}
\paragraph{Branch-and-Bound.} The BaB paradigm 
\cite{bunel2018unified, wang2021beta, shi2025genbab, zhang2022general, zhou2024biccos, de2021improved, wang2018formal, ehlers2017formal, morrison2016babsurvery} systematically partitions the verification problem into smaller subproblems, $\gX=\gX_1\cup\gX_2$, enabling tighter bounds on each subdomain. BaB can split upon the input space (e.g. axis-aligned constraints) or the activation space (e.g. split activation neurons). See Appendix~\ref{sec:unstable_neurons} for the formal definition and complexity implications.
The exponential growth of subdomains can lead to high computational cost but can be mitigated by verifying domains early and often, a direct consequence of \emph{Clip-and-Verify}.
\section{Enhancing Neural Network Verification with Clip-and-Verify}

\subsection{Motivation and Overview}

BaB-based verifiers manage complexity by partitioning the input or activation space into tractable subproblems. Within each branch, linear relaxations provide bounds on the network's neurons which can guide branching decisions and verify properties.
Crucially, the constraints introduced at each BaB split implicitly define a tighter feasible input domain and offer opportunities to refine the bounds at any layer. As shown in Fig.~\ref{fig:Clipping_Figure_2}, linear constraints can shrink an $\ell_\infty$-norm ball during robustness verification, yielding tighter intermediate bounds and enabling more effective search-space pruning.

In large NNs, full linear bound propagation is often limited to an initial pass for efficiency. Alternative methods like using LPs to update intermediate bounds \citep{salman2019convex} or full re-propagation after each split also prove too costly. Such overhead restricts frequent bound updates in deep NNs or extensive branching.

To fully exploit these opportunities, we propose \emph{Clip-and-Verify}, a pipeline which tightens the input box and re-concretizes intermediate bounds at each BaB node using linear constraints (e.g. activation split and final layer bound), without running a full pass. It contains two algorithms: 
\emph{Complete Clipping}, which performs a fast coordinate-wise dual search with sorted breakpoints per constraint to obtain near-LP tightening at a fraction of the cost, and \emph{Relaxed Clipping}, which performs per-constraint \emph{exact} tightening for axis-aligned boxes by solving the one-dimensional dual in closed form and then re-concretizing cached linear constraints over the shrunk box.

\subsection{Complete Clipping: Optimizing Intermediate Bounds Directly via Linear Constraints}
\label{sec:complete_bab}

\paragraph{{Exact Bound Refinement with a Single Constraint via Optimized Duality.}} 

Suppose we wish to improve the lower bound of a linear function $\va^\top\vx + c$ over an input domain $\mathcal{X}$, given a new linear constraint $\vg^\top\vx + h \leq 0$. The primal optimization problem is:
\begin{align}
    L^\star = \min_{\vx\in\mathcal{X}} \{ \va^\top\vx + c :  \vg^\top\vx + h \leq 0 \} \label{eq:bound_enhancement_lp_primal}
\end{align}
Instead of directly solving this potentially high-dimensional LP, we formulate its Lagrangian dual. The key advantage and insight is that, for a fixed dual variable $\beta\in\mathbb{R}_+$, the inner minimization over $\vx$ can be solved analytically for box domains, transforming the problem into a simpler optimization over a \textbf{single} dual variable.
This leads to the following theorem:

\begin{theorem}[Exact Bound Refinement under a Single Linear Constraint]
    \label{thm:comprehensive_clipping_single_constraint}
    Let $\va\in\mathbb{R}^n$, $c\in\mathbb{R}$, $\vg\in\mathbb{R}^n$, $h\in\mathbb{R}$, and the input domain be $\mathcal{X} = \{\vx \mid \hat{\vx} - \bm{\epsilon} \le \vx \le \hat{\vx} + \bm{\epsilon} \}$. The optimal value $L^\star$ of the constrained minimization problem \eqref{eq:bound_enhancement_lp_primal} is given by the solution to the dual problem:
    \begin{equation}
        L^\star = \max_{\beta\in\mathbb{R}_+} \underbrace{ (\va + \beta\vg)^\top\hat{\vx} - \sum_{j=1}^n |(\va + \beta\vg)_j|\bm{\epsilon}_j + c + \beta h }_{\text{Dual Objective } D(\beta)}
        \label{eq:comprehensive_clipping_dual_objective}
    \end{equation}
    $D(\beta)$ in \eqref{eq:comprehensive_clipping_dual_objective} is concave and piecewise-linear in $\beta\in\mathbb{R}_+$. Its maximum $L^\star$ and the optimal $\beta^\star$ can be determined exactly and efficiently by identifying its breakpoints (values of $\beta$ where $(\va + \beta\vg)_j = 0$ for some $j \in [n]$) and analyzing the super-gradients within the resulting linear segments.
\end{theorem}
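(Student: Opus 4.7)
The plan is to establish the theorem by deriving the Lagrangian dual, invoking LP strong duality, computing the inner minimization in closed form using the box structure, and then studying the resulting one-dimensional function.

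First I would write the Lagrangian associated with the single inequality $\vg^\top\vx+h\le 0$, using a nonnegative multiplier $\beta$:
\[
\mathcal{L}(\vx,\beta) \;=\; (\va+\beta\vg)^\top\vx + c + \beta h,
\]
so that the dual function is $D(\beta)=\min_{\vx\in\mathcal{X}}\mathcal{L}(\vx,\beta)$. Because \eqref{eq:bound_enhancement_lp_primal} is a linear program over the polytope $\mathcal{X}\cap\{\vg^\top\vx+h\le 0\}$, strong LP duality gives $L^\star=\max_{\beta\ge 0}D(\beta)$ whenever the primal is feasible and bounded (boundedness is immediate from $\mathcal{X}$ being compact; infeasibility can be detected beforehand and yields $L^\star=+\infty$, in which case the primal is pruned). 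I would state this step explicitly as the justification for switching from the constrained primal to the dual maximization in \eqref{eq:comprehensive_clipping_dual_objective}.

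Second, to evaluate $D(\beta)$ in closed form, I would apply Lemma~\ref{lemma:dual_norm} to the linear objective $(\va+\beta\vg)^\top\vx$ over the $\ell_\infty$ box centered at $\hat{\vx}$ with radius $\bm{\epsilon}$. This yields
\[
D(\beta) \;=\; (\va+\beta\vg)^\top\hat{\vx} \;-\; \sum_{j=1}^n |(\va+\beta\vg)_j|\,\bm{\epsilon}_j \;+\; c + \beta h,
\]
which is exactly the dual objective in the theorem statement. Concavity and piecewise linearity then follow from observing that $D(\beta)$ is the pointwise minimum over the $2^n$ box vertices of the affine family $\beta\mapsto\mathcal{L}(\vx^{\text{vertex}},\beta)$; equivalently, each term $-|a_j+\beta g_j|\bm{\epsilon}_j$ is concave and piecewise linear in $\beta$ with a single breakpoint at $\beta_j=-a_j/g_j$ (when $g_j\neq 0$), and the remaining terms are affine in $\beta$.

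Finally, to prove the algorithmic claim, I would collect the candidate breakpoints $\{\beta_j=-a_j/g_j : g_j\neq 0,\ \beta_j\ge 0\}\cup\{0\}$, sort them, and note that $D$ is affine on each resulting interval. A supergradient of $D$ in such an interval equals
\[
\vg^\top\hat{\vx} \;-\; \sum_{j=1}^n \operatorname{sign}\!\bigl(a_j+\beta g_j\bigr)\,g_j\,\bm{\epsilon}_j \;+\; h,
\]
and the slope is monotonically nonincreasing as $\beta$ grows, by concavity. A single sweep over the sorted breakpoints therefore locates the first interval where the supergradient becomes nonpositive, and the maximizer $\beta^\star$ is either the boundary of that interval or $\beta=0$ (if the slope is already $\le 0$ there); $L^\star=D(\beta^\star)$ is obtained in $\mathcal{O}(n\log n)$ time. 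The main obstacle I anticipate is the careful bookkeeping for degenerate cases, namely $g_j=0$ (no breakpoint contribution from coordinate $j$), $\bm{\epsilon}_j=0$ (the term vanishes and can be ignored), ties among breakpoints, and the possibility that $D$ is unbounded above for $\beta\to\infty$, which corresponds exactly to primal infeasibility and must be handled as a separate certificate.
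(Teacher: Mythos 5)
Your proposal is correct and follows essentially the same route as the paper: dualize the single constraint, evaluate the inner box-constrained minimization in closed form via Lemma~\ref{lemma:dual_norm}, and then maximize the resulting concave piecewise-linear function of $\beta$ by sorting breakpoints and scanning for the sign change of the supergradient. The only substantive difference is the justification of the zero duality gap: the paper invokes Sion's minimax theorem on the compact convex set $\mathcal{X}$ to swap $\min_{\vx}$ and $\sup_{\beta}$, whereas you appeal to LP strong duality; since you dualize only the single constraint while keeping $\vx\in\mathcal{X}$ inside the inner problem, this is a \emph{partial} Lagrangian dual rather than the textbook LP dual, so a one-line remark (e.g., that the inner minimum is itself an LP whose dual composes with the outer maximization to recover the full LP dual, or simply citing Sion as the paper does) would make that step airtight. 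Your closed-form supergradient, the monotonicity argument, and the treatment of infeasibility as unboundedness of $D$ all match the paper's proof.
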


For proof, see Appendix~\ref{proofthm:comprehensive_clipping_single_constraint}.  Theorem~\ref{thm:comprehensive_clipping_single_constraint} converts a potentially expensive $n$-dimensional LP into a 1D concave maximization problem ($D(\beta)$) that can be solved without iterative gradient methods. The process of finding breakpoints and the optimal segment (detailed below and in Algorithm~\ref{alg:coord_ascent_multi_constraint} for the multi-constraint case) is highly amenable to efficient computation, making it suitable for refining bounds across many neurons and subdomains \textbf{in parallel}. An analogous theorem holds for tightening the upper bound by solving $\max_{\vx\in\mathcal{X}} \{ \va^\top\vx + c \mid \vg^\top\vx + h \leq 0 \}$. 

Before optimizing Eq.~\eqref{eq:comprehensive_clipping_dual_objective}, infeasibility can be detected \emph{a priori}: the problem is infeasible iff $\bm g^\top\hat{\bm x} + h - \sum_{i=1}^n |g_i|\epsilon_i > 0$. Such a scenario arises when the property being verified imposes constraints unsatisfiable within the current input domain $\gX$ (e.g., contradictory ReLU assignments), and may be considered \textit{verified} without further refinement.

Note that our primal optimization problem is mathematically equivalent to a continuous knapsack problem which can be obtained via a change of variables as detailed in Appendix~\ref{sec:knapsack}. The breakpoints $\beta_i = -a_i / g_i$ in our dual formulation are identical to the efficiency ratios $r_j / s_j$ used in the standard greedy knapsack algorithm. Thus, our dual-based solver and the greedy knapsack algorithm are equivalent, both finding the provably optimal solution with $\mathcal{O}(n \log n)$ complexity.

\paragraph{Coordinate Ascent for Multiple Constraints}
When multiple linear constraints $\mG\vx + \vh \leq \mathbf{0}$ (where $\mG \in \mathbb{R}^{m \times n}, \vh \in \mathbb{R}^m$) are available, the dual problem involves optimizing multiple Lagrange multipliers $\bm{\beta} \in \mathbb{R}_+^m$:

\vspace*{-6pt}

\begin{equation}
L^\star = \max_{\bm{\beta}\in\mathbb{R}^m_+} (\va + \bm{\beta}^\top\mG)^\top\hat{\vx} - \sum_{j=1}^n |(\va + \bm{\beta}^\top\mG)_j|\bm{\epsilon}_j + c + \bm{\beta}^\top\vh 
\label{eq:comprehensive_clipping_multi_dual_objective}
\end{equation}

\vspace*{-6pt}

The single constraint problem \eqref{eq:comprehensive_clipping_dual_objective} is easy to solve using Theorem~\ref{thm:comprehensive_clipping_single_constraint}, thus we can use coordinate ascent to solve \eqref{eq:comprehensive_clipping_multi_dual_objective}.
Algorithm~\ref{alg:coord_ascent_multi_constraint} details a single pass of this coordinate ascent procedure. This iterative approach optimizes one dual variable $\beta_k$ at a time, keeping others fixed. Each step thus reduces to solving the 1D problem described in Theorem~\ref{thm:comprehensive_clipping_single_constraint} (with $\va$ replaced by $\va + \sum_{p \ne k} \beta_p \mG_{p,:}$ and $\vg$ replaced by $\mG_{k,:}$). Since the dual objective remains concave and piecewise-linear along each coordinate $\beta_k$, we can efficiently exploit the breakpoint structure for each update. An order dependency is discussed in Appendix~\ref{sec:proof_order_dependency}. 
Algorithm~\ref{alg:coord_ascent_multi_constraint} is significantly more efficient and scalable than using general-purpose LP solvers, even when those solvers are used as fast heuristics to get a bound rather than converge to optimal.
We conducted a detailed comparison in Appendix~\ref{sec:appendix_lp_comparison}, integrating a state-of-the-art LP solver (Gurobi) into our BaB framework. The results show that even when running dual simplex with 10-iteration limit, the LP solver was over \textbf{880$\times$ slower} than our GPU-parallelized coordinate ascent (0.0028s vs. 2.47s per round) while achieving comparable bound accuracy (0.00085 vs 0.0007 mean error).

\begin{algorithm}[tbh]
\caption{Coordinate Ascent with Multiple Constraints
}
\label{alg:coord_ascent_multi_constraint}
\begin{algorithmic}[1]
\REQUIRE
  Objective $\va^\top \vx + \vc$:
  $\va\in\mathbb{R}^n$,
  $\vc\in\mathbb{R}$;
  Constraints $\mG \vx + \vh \leq 0$:
  $\mG\in\mathbb{R}^{m\times n}$,
  $\vh\in\mathbb{R}^m$.
\STATE $\bm{\beta}\gets [0,\dots,0]^\top$ {\color{lightgray}\COMMENT{ Initialize vector of $m$ Lagrange multipliers as zero}}
\STATE $\hat{\vx} \gets \frac{\overline{\vx}+\underline{\vx}}{2}$, $\bm{\epsilon} \gets  \frac{\overline{\vx}-\underline{\vx}}{2}$ {\color{lightgray}\COMMENT{ Initialize centroid and the radius of the input hyper-rectangular}}
\FOR{constraint $k$ in $[m]$}
    \STATE $\vq \gets -(\va + \bm{\beta}^\top\mG)/\mG_{k,:}$
    {\color{lightgray}\COMMENT{Calculate the breakpoints}}
    \STATE $\mI \gets \text{argsort}(\vq)$
    \STATE $\vg \gets |\mG_{k,:}|\odot \bm{\epsilon}$ {\color{lightgray}\COMMENT{Scale by box half-widths}}
    \STATE $\vg_{\text{sorted}} \gets \vg_{\mI}$ {\color{lightgray}\COMMENT{Reorder by the same argsort index $\mI$}}
    \STATE $\vg^{(-)}_i\gets-\sum_{j=1}^i(|\vg_\text{sorted}|_j), j\in[n]$
    {\color{lightgray}\COMMENT{(Negative) cumulative sum of sorted breakpoints}}
    \STATE $\vg^{(+)}\gets\vg^{(-)} - \vg^{(-)}_n$
    {\color{lightgray}\COMMENT{Shift $\vg^{(-)}$ to its positive range}}
    \STATE $\nabla\vg\gets \vg^{(+)}+\vg^{(-)}+ \mG_{k,:}^\top\hat{\vx} + \vh_k$ 
    {\color{lightgray}\COMMENT{Supergradient; monotone nondecreasing in $i$}}
    \STATE $i^\star \gets \min\{\, i\in[n] : \nabla\vg_i \le 0 \,\}${\color{lightgray}\COMMENT{First sign change}}
    \STATE $j^\star \gets \mI_{i^\star}$ {\color{lightgray}\COMMENT{Map the $i^\star$-th breakpoint in the sorted order back to the original coordinate index}}
    \STATE $\bm{\beta}_k \gets
    \max\{\vq_{j^\star},\,0\}$ {\color{lightgray}\COMMENT{Ensure feasibility}}
    \STATE $l^\star \gets -\left\|\va + \bm{\beta}^\top\mG\right\|_1\cdot\bm{\epsilon} + \left(\va + \bm{\beta}^\top\mG\right)\hat{\vx} + \bm{\beta}^\top\vh + \vc$
    {\color{lightgray}\COMMENT{Calculate dual objective}}
\ENDFOR
\ENSURE $l^\star$
\end{algorithmic}
\end{algorithm}

\subsection{Relaxed Clipping: Optimizing Intermediate Bounds Indirectly via Input Refinement}
\label{sec:relaxed_bab}
Although the coordinate ascent is significantly efficient compared to LP solvers, it is still computationally expensive when there are various subproblems with many unstable neurons and multiple constraints. To address this issue, we propose a more efficient \emph{Relaxed Clipping} algorithm,
which \textbf{shrinks the input domain} by leveraging linear constraints (e.g., from final-layer outputs or activation branchings) to shrink axis-aligned portions of the input.
By adopting a box input domain as a proxy, we avoid repeatedly solving linear programs for each intermediate-layer neuron, and can solve the resulting optimization problem efficiently via the dual norm without relying on external solvers.
The remaining task is to determine the clipped box $\mathcal{X}' \subseteq \mathcal{X}$ that best respects the constraints.

\vspace*{-2pt}

\paragraph{Formulating Relaxed Clipping for a Hyper-Rectangle.}
We assume a hyper-rectangular input region and a set of $m$ linear constraints $\mA \vx + \vc \le 0$, with $\mA \in \mathbb{R}^{m \times n}$ and $\vc \in \mathbb{R}^m$. To refine the lower and upper bounds of the input along each input dimension $i\in[n]$, we solve:
\begin{equation}\label{eq:general_reduce_input}
    \underline{\vx}_{i} := \min_{\vx \in \mathcal{X}}\{\vx_i \mid \mA \vx + \vc \le 0\},  \qquad
    \overline{\vx}_{i} := \max_{\vx \in \mathcal{X}}\{\vx_i \mid \mA \vx + \vc \le 0\}.
\end{equation}
These refined bounds, $\underline{\vx}_i$ and $\overline{\vx}_i$, \textbf{clip} the original domain $\mathcal{X}$ to reflect only the portion that satisfies all the linear constraints. We derive a simple \textbf{closed-form solution} when there is only one linear inequality, $\va^\top \vx + \vc \le 0$, foregoing the need to solve \eqref{eq:general_reduce_input} via LP solvers or gradient-based methods.

\begin{theorem}[Relaxed clipping under a single constraint]
    \label{thm:reducing_input_exact}

Let $\vx \in \gX$ and $\va^\top \vx + \vc \le 0$ be the sole constraint. For brevity, denote the closed-form solution as $\vx_i^\text{(clip)}=(-\sum_{j\neq i}\{\va_j\hat{\vx}_j-|\va_j|\bm{\epsilon}_j\} - c)/\va_i$. Then, for each coordinate $i$, the new upper (or lower) bound is updated as follows:
$$
\begin{cases}
\overline{\vx}^{(\text{new})}_{i} = \min\left\{\vx_i^\text{(clip)}, \overline{\vx}_{i}\right\} & \text{if } \va_i > 0 \\
\underline{\vx}^{(\text{new})}_{i} = \max\left\{\vx_i^\text{(clip)}, \underline{\vx}_{i}\right\} & \text{if } \va_i < 0 \\
\text{no change} & \text{otherwise}
\end{cases}
$$
\end{theorem}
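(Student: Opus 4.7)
The plan is to prove the theorem by isolating the coordinate $\vx_i$ in the single linear constraint and optimizing the remaining coordinates independently over the box. The two optimization problems in \eqref{eq:general_reduce_input} become separable once we rearrange $\va^\top\vx+c\leq 0$ as $\va_i\vx_i \leq -c - \sum_{j\neq i}\va_j\vx_j$: the right-hand side depends only on $\vx_{-i}$, so the tightest implied scalar bound on $\vx_i$ is obtained by optimizing $\sum_{j\neq i}\va_j\vx_j$ over the hyper-rectangle.

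First, I would observe that on the box $\gX$ the coordinates are independent, so $\min_{\vx\in\gX}\sum_{j\neq i}\va_j\vx_j$ decomposes into a sum of one-dimensional problems. Each term $\va_j\vx_j$ is minimized at the corner $\vx_j = \hat{\vx}_j - \mathrm{sign}(\va_j)\bm{\epsilon}_j$, achieving value $\va_j\hat{\vx}_j - |\va_j|\bm{\epsilon}_j$ (this is essentially the one-sided version of Lemma~\ref{lemma:dual_norm} used throughout the paper). Hence
\begin{equation*}
\min_{\vx\in\gX}\sum_{j\neq i}\va_j\vx_j \;=\; \sum_{j\neq i}\bigl(\va_j\hat{\vx}_j - |\va_j|\bm{\epsilon}_j\bigr),
\end{equation*}
and this minimum is attained simultaneously by a single feasible corner point (with $\vx_i$ still free within $[\underline{\vx}_i,\overline{\vx}_i]$).

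Next, I would substitute this minimum back into the rearranged constraint and split on the sign of $\va_i$. When $\va_i>0$, dividing preserves the inequality and yields the implied upper bound $\vx_i\leq \vx_i^{(\text{clip})}$; intersecting with the original upper bound produces $\overline{\vx}_i^{(\text{new})}=\min\{\vx_i^{(\text{clip})},\overline{\vx}_i\}$, and this value is tight because the corner above together with $\vx_i=\overline{\vx}_i^{(\text{new})}$ is feasible. When $\va_i<0$, the division flips the inequality to give a lower bound; a short algebraic check using $|\va_i|=-\va_i$ shows that the resulting bound simplifies to the same closed form $\vx_i^{(\text{clip})}$, and intersecting with $\underline{\vx}_i$ recovers the stated update. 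Finally, if $\va_i=0$, the constraint imposes no restriction on $\vx_i$ at all, which justifies the ``no change'' case.

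The main obstacle is really just careful bookkeeping: matching both sign cases to the same unified formula $\vx_i^{(\text{clip})}$, and verifying that the per-coordinate minimizer of $\sum_{j\neq i}\va_j\vx_j$ coexists with any admissible choice of $\vx_i$ in a single feasible point (which follows from the product structure of the box). One subtle point worth flagging is feasibility: if the constraint is infeasible over $\gX$ the formula can produce $\vx_i^{(\text{clip})}$ strictly outside $[\underline{\vx}_i,\overline{\vx}_i]$, but this is precisely the infeasibility condition $\vg^\top\hat{\vx}+h-\sum_i|g_i|\epsilon_i>0$ discussed in Section~\ref{sec:complete_bab}, and in that regime the subproblem is handled separately rather than by updating box bounds.
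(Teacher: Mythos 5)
Your proof is correct, and it coincides almost exactly with the short ``direct intuitive proof'' that the paper itself gives at the start of Appendix~\ref{proofthm:reducing_input_general}: isolate $\vx_i$, use the product structure of the box to minimize $\sum_{j\neq i}\va_j\vx_j$ coordinate-wise at the corner $\va_j\hat{\vx}_j-|\va_j|\bm{\epsilon}_j$, substitute back, and split on $\sign(\va_i)$. The paper's \emph{formal} proof then takes a different route: it writes $\overline{\vx}_i^{(\text{new})}$ as a max--min over $\gX\times\mathbb{R}_+$, swaps the order via Sion's minimax theorem, concretizes the inner problem with Lemma~\ref{lemma:dual_norm}, and minimizes the resulting scalar piecewise-linear function of the multiplier $\lambda$, with the three cases $\va_i>0$, $\va_i<0$, $\va_i=0$ emerging from where $\lambda^\star$ lands. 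The dual argument buys consistency with the machinery used for Theorem~\ref{thm:comprehensive_clipping_single_constraint} and makes the feasibility precondition appear naturally as the condition under which the dual is bounded; your primal argument is more elementary and gives attainment for free via an explicit feasible corner combined with $\vx_i$ at the clipped value. Your handling of the $\va_i<0$ case (the sign flip yielding the same unified formula $\vx_i^{(\text{clip})}$), the $\va_i=0$ case, and the infeasibility caveat are all consistent with the paper.
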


\vspace{-1pt}
Appendix~\ref{proofthm:reducing_input_general} gives a proof showing that, for a single linear constraint $\va^\top\vx+\vc\le 0$, we can \textbf{clip} the box $\mathcal{X}$ to a new box $\mathcal{X}'$ in one pass using the closed-form updates in Theorem~\ref{thm:reducing_input_exact}, without any external solvers. The resulting box is the \emph{tightest axis-aligned over-approximation} of the feasible set $\mathcal{X}\cap\{\vx:\va^\top\vx+\vc\le 0\}\ \subseteq\ \mathcal{X}'\ \subseteq\ \mathcal{X}$, and it is \emph{component-wise tight}: no coordinate bound of $\mathcal{X}'$ can be further refined while remaining a box that still contains the feasible set. The computation costs only $O(n)$ arithmetic operations. For example, re-propagating bounds for a network 
with 3 intermediate layers (4096, 2048, and 100 neurons respectively) and containing 800-1600 unstable neurons can take 10s per subdomain, versus 0.3s when re-concretizing the intermediate bounds.

This Relaxed Clipping step complements Complete Clipping. Whereas Complete Clipping solves problem~\eqref{eq:comprehensive_clipping_dual_objective} \emph{for each neuron}, Relaxed Clipping tightens the shared input box $\mathcal{X}'$ \emph{once}, and then all intermediate bounds are cheaply re-concretized over this tighter box, yielding network-wide bound improvements from a single cheap update. By keeping the shared domain as a box, we avoid polyhedral operations and repeated per-neuron optimizations, substantially improving scalability.

\vspace*{-8pt}

\paragraph{Clipping for Multiple Constraints.}
When several linear constraints are present, we apply Theorem~\ref{thm:reducing_input_exact} in \emph{parallel} to each constraint in $\mA \vx + \vc \le 0$. Algorithm~\ref{alg:parallel_domain_clipping} outlines this procedure:
Given the original box bounds, $\underline{\vx}$ and $\overline{\vx}$, we compute its center and radius, $\hat{\vx}$ and $\bm{\epsilon}$. Then, for each constraint $k$ and dimension $i$, we apply Theorem~\ref{thm:reducing_input_exact} independently and in parallel to refine $\underline{\vx}_i$ and $\overline{\vx}_i$. After processing all constraints, the resulting clipped bounds are aggregated, and the tightest bounds are selected to form the final clipped domain. This formulation preserves scalability and supports parallelization. We present a sequential variation in Appendix~\ref{appendix:sequential_domain_clipping_for_multiple_constraints} that foregoes parallelization for further refinement in which the box center and radius are recalculated after each constraint is applied. Nonetheless, we emphasize our current formulation as a core strength of domain clipping, maintaining both efficiency and effectiveness. Similar to direct clipping, Algorithm \ref{alg:parallel_domain_clipping} may identify \emph{infeasibility} by returning clipped bounds where $\underline{\vx}$ \emph{is larger} than $\overline{\vx}$ along some dimension(s). 

\begin{algorithm}[tbh]
\caption{Linear Constraint-Driven Relaxed Clipping \emph{(Parallel)}}
\label{alg:parallel_domain_clipping}
\begin{algorithmic}[1]
\REQUIRE
  $\underline{\vx}:$~Input lower bounds;
  $\overline{\vx}:$~Input upper bounds;
  $\mA \vx \le \vc :$~Constraints.

\STATE $\hat{\vx} \gets \frac{\overline{\vx}+\underline{\vx}}{2}$, $\bm{\epsilon} \gets  \frac{\overline{\vx}-\underline{\vx}}{2}$, $\underline{\vx}^{\text{(clipped)}} \gets \underline{\vx}$, $\overline{\vx}^{\text{(clipped)}} \gets \overline{\vx}$ {\color{lightgray}\COMMENT{ Initialize the original bounds} }
\FOR{each constraint $k \in \{1, \dots, \text{rows}(\mA)\}$}
    \FOR{each input dimension $i \in \{1, \dots, \text{cols}(\mA)\}$} 
     \STATE $\vx^{\text{(new)}}_i \gets \frac{-\sum_{j \neq i} \mA_{k,j} \hat{\vx}_j + \sum_{j \neq i} |\mA_{k,j}| \bm{\epsilon}_{j} - \vc_k  }{\mA_{k, i}}$ {\color{lightgray}\COMMENT{Update $x$ using Theorem~\ref{thm:reducing_input_exact}} }
    \IF{$\mA_{k, i} \geq 0$} 
        \STATE $\overline{\vx}^{\text{(clipped)}}_{i} \gets \min(\overline{\vx}^{\text{(clipped)}}_{i}, \vx^{\text{(new)}}_i)$ {\color{lightgray}\COMMENT{Iteratively tighten the upper bound} }
        \ELSE 
        \STATE $\underline{\vx}^{\text{(clipped)}}_{i} \gets \max(\underline{\vx}^{\text{(clipped)}}_{i}, \vx^{\text{(new)}}_i)$ {\color{lightgray}\COMMENT{Iteratively tighten the lower bound} }
    \ENDIF
   \ENDFOR
\ENDFOR
\ENSURE $\underline{\vx}^{\text{(clipped)}}$, $\overline{\vx}^{\text{(clipped)}}$
\end{algorithmic}
\end{algorithm}

We have discussed how to use the linear constraints to tighten bounds, the next step is to find these linear constraints. We can use output constraints or activation split constraints during BaB. The next two sections we will discuss how to find the constraints and incorporate our algorithm into BaB.

\subsection{Integrating Clipping into Input BaB Verification}

In input BaB, the verifier partitions the network’s \emph{input region} (often an $\ell_\infty$-box) along axis-aligned splits, generating multiple subdomains $\{\gX_i\}_{i=1}^b$. Each subdomain is then fed into a bound-propagation verifier (e.g., CROWN) to obtain linear hyper-planes that are used to lower-bound the \emph{final-layer output} with respect to the input. Existing verifiers typically use these final-layer hyperplanes \emph{only} to compute the bounds and decide whether the \textit{entire} subdomain can be immediately verified or must be further subdivided. It may be the case that the hyperplanes used for computing the final layer bounds are tight enough to verify a subset of the input, but not tight enough to verify the input in its entirety (see Fig.~\ref{fig:Simple_1D_clipping} in Appendix). The \emph{key insight} is that we can remove infeasible parts in a subproblem via linear constraints and then consider a partially verified problem in the next iteration of BaB. %
These already calculated hyperplanes can then be effectively re-used as constraints, making them appropriate for our clipping paradigms.
\vspace*{-8pt}

\paragraph{Using Final-Layer Bounds for Clipping.} After a subdomain has been bounded, the final-layer's bounding plane(s) are retained after \emph{concretization} if the subdomain cannot be fully verified. For the verification objective, $f(\vx)\ge 0$, it suffices to verify the input region, $\{\vx\mid\underline{\va}^{(L)\top}\vx + \underline{c}^{(L)} \geq 0\}$. In this case, the bounding plane acts as a constraint that separates this verified subset from the complimentary subset of the input domain that requires further analysis, i.e., $\{\vx\mid\underline{\va}^{(L)\top}\vx + \underline{c}^{(L)} \leq 0\}$. 
For verification problems with multiple output conditions in the form of $f_1(\vx) \geq 0  \land f_2(\vx) \geq 0 \land \cdots \land f_T(\vx) \geq 0$, we can have multiple linear bounds \eqref{eq:neuron_bounds}, one for each clause, and all of them can be used.
Rather than treating each property separately, we jointly collect all relevant final-layer constraints and apply them using Algorithms~\ref{alg:coord_ascent_multi_constraint} and ~\ref{alg:parallel_domain_clipping}, accelerating batch verification on a GPU.

\vspace*{-8pt}

\paragraph{Modifications to the Standard Input BaB Procedure.} Algorithm~\ref{alg:input_bab} in Appendix~\ref{sec:alg_input_bab} outlines our modification to the standard input BaB loop, highlighted in \textcolor{brown}{brown}. Our first key modification after bounding a batch of domains, is that we perform relaxed clipping \textbf{after} we split the domains. This \textbf{reordering} allows two child domains to inherit their parent's constraint after axis-aligned split. The inheritance enables a \emph{more tailored domain clipping} procedure for each child, which is often more effective than clipping the parent domain directly. The resulting subdomains are reinserted into the domain list {along with their associated constraints}. 
In the subsequent bounding step, 
we perform {complete clipping} for the unstable neurons in each layer
, where constraints are leveraged to \emph{directly} to tighten the lower/upper bound objectives. 
When the number of unstable neurons is large, we {heuristically select} a subset of critical neurons for Complete Clipping; the selection heuristics are detailed in Appendix~\ref{sec:heuristic}.
Neurons not selected for complete clipping still benefit indirectly, as their concretized bounds are refined due to the clipped domain inherited from the prior BaB iteration.

%%%%%%%%%%% RELU Split %%%%%%%%%%%%%%%
\subsection{Integrating Clipping into Activation BaB Verification}

We now demonstrate how our clipping framework naturally extends to branch-and-bound on the activation space. Our method is \textbf{activation-agnostic}, as it operates on general linear constraints derived from \textbf{any activation} split. For a given neuron $z_{j}^{(i)}$, BaB can introduce splits of the form $z_{j}^{(i)} \ge s$ or $z_{j}^{(i)} \le s$, where $s$ is a split point ($s=0$ for a ReLU split)\cite{shi2025genbab}. This inequality can be relaxed by bound propagation algorithms and expressed as a linear constraint on the input, $\mathbf{g}^\top\mathbf{x} + h \le 0$, which is directly usable by our algorithms. This flexibility allows our framework to apply to networks with various activation functions, as demonstrated by our experiments on Vision Transformer models in Section~\ref{sec:exp}.
By assigning certain unstable neurons to either regime, we obtain {linear relaxation split constraints} that can further optimize the \emph{intermediate bounds}, boosting verification efficacy. A key insight is that \textbf{many constraints} from multiple activation assignments can accumulate, potentially tightening intermediate bounds significantly.

\paragraph{Using Linear Constraints in Activation Space for Clipping.}

We prioritize neurons whose convex envelopes incur the largest relaxation error; for ReLU this typically coincides with ``unstable'' units (Appendix~\ref{sec:unstable_neurons}). 
At any BaB node with input box $\mathcal{X}_t$, linear bound propagation yields, for each neuron $(i,j)$,
\begin{equation}
\label{eq:affine_envelopes_node}
\underline{\mA}^{(i)\top}_j \vx + \underline{\vc}^{(i)}_j \le z^{(i)}_j(\vx) \le \overline{\mA}^{(i)\top}_j \vx + \overline{\vc}^{(i)}_j \quad \forall \vx\in\mathcal{X}_t .
\end{equation}
Then we can \textbf{validate} of activation-space linear constraints. If a branch assigns $z^{(i)}_j \ge s$, then any feasible $\vx$ must satisfy
\begin{equation}
\label{eq:valid_upper_implication}
\overline{\mA}^{(i)\top}_j \vx + \overline{\vc}^{(i)}_j \ge s ,
\end{equation}
since $z^{(i)}_j(\vx)\le \overline{\mA}^{(i)\top}_j \vx + \overline{\vc}^{(i)}_j$. 
Symmetrically, for $z^{(i)}_j \le s$ we obtain the necessary condition 
$\underline{\mA}^{(i)\top}_j \vx + \underline{\vc}^{(i)}_j \le s$.
Thus the activation-branching provide \emph{sound linear constraints in the input space} that encode the assigned activation regime.

We can cache and \textbf{reuse} the initialized activation branching linear constraints.
When bound initialization is executed at a node, we cache $(\underline{\mA},\underline{\vc})$ and $(\overline{\mA},\overline{\vc})$ for all ``unstable'' neurons, and further use the  constraints based on the branching domains during BaB to do
(i) \emph{Relaxed Clipping}: treat \eqref{eq:valid_upper_implication} (and its lower-bound analogue) as box-consistency constraints and update $\mathcal{X}_t \mapsto \mathcal{X}_t'$ in closed form following Algorithm.~\ref{alg:parallel_domain_clipping}). 
(ii) \emph{Complete Clipping}: on a selected set $\mathcal{C}^{(p)}$ of neurons, we directly tighten their per-neuron affine bounds using the same constraints using Algorithm.~\ref{alg:coord_ascent_multi_constraint}. 
Neurons not in $\gC^{(p)}$ keep their original bounds, but still tighten indirectly when re-concretized over $\mathcal{X}_t'$.

Here, $\gC^{(p)}$ is the set of ``critical neurons'' that are heuristically expected to benefit most from bound refinement. While one could apply our method to every neuron in the network, this may be challenging for networks with high-dimensional hidden layers. 
To address this, we propose a top-k objective selection heuristic that adaptively selects neurons based on the BaBSR intercept score~\cite{bunel2020branch}. This heuristic strategically prioritizes neurons whose refinement is most likely to tighten the overall verification bounds. A detailed justification for this choice, including an ablation study comparing BaBSR to other common heuristics, is provided in Appendix~\ref{sec:heuristic}.
Neurons not selected by this heuristic retain their standard bounds as computed by CROWN but may still benefit \emph{indirectly} from relaxed clipping. Since relaxed clipping is computationally lightweight and highly scalable, it should always be applied to refine the input domain as effectively as possible.

\vspace{-1em}
\paragraph{Modifications to the Standard Activation BaB Procedure.}
Our activation space BaB Algorithm~\ref{alg:Activation_bab}, shown in Appendix~\ref{sec:alg_Activation_bab}, iteratively uses activation split constraints to optimize the intermediate bounds via {Complete Clipping} (Theorem~\ref{thm:comprehensive_clipping_single_constraint}) 
. This strategy fully exploits accumulated constraints through complete clipping on intermediate bounds directly, bypassing costly LP solves.
\section{Experiments}
\label{sec:exp}

\begin{table*}[t]

\centering
\caption{\footnotesize Comparison of different toolkits on a few representative VNN-COMP benchmarks with input BaB. ``-'' indicates that the benchmark was not supported. Time is calculated as the total time taken to verify verified instances. The number of BaB subproblems by $\alpha,\beta$-CROWN is set as the baseline, and the reduction rate is calculated from this number.
Complete clipping significantly reduces the number of subproblems during BaB, while relaxed clipping is sometimes faster overall. Reordering generally helps reduce time and subproblems.
}
\vspace{-3pt}
%\label{table:vnncomp-21}
\scriptsize
\begin{adjustbox}{max width=.99\textwidth}
\begin{threeparttable}[hbt]
\begin{tabular}{lccc|ccc|ccc}

& \multicolumn{3}{c}{\texttt{lsnc}} 
& \multicolumn{3}{c}{\texttt{acasxu}} 
& \multicolumn{3}{c}{\texttt{nn4sys}}\\
\toprule
    
\multicolumn{1}{c|}{Method} &
\multicolumn{1}{c}{time(s)} &
\multicolumn{1}{c}{subproblems} &
\multicolumn{1}{c|}{\# verified} &
\multicolumn{1}{c}{time(s)} &
\multicolumn{1}{c}{subproblems} &
\multicolumn{1}{c|}{\# verified} &
\multicolumn{1}{c}{time(s)} &
\multicolumn{1}{c}{subproblems} &
\multicolumn{1}{c}{\# verified}
\\

\cmidrule(lr){1-1} 
\cmidrule(lr){2-4} 
\cmidrule(lr){5-7} 
\cmidrule(lr){8-10}

\multicolumn{1}{c|}{nnenum$^*$~\cite{bak2021nfm,bak2020cav}}
& - & - & - 
& 213.41 & -  & \textbf{139}
& 167.55 & -  & 22 \\

\multicolumn{1}{c|}{Marabou$\dag \ddag$~\cite{katz2019marabou,wu2024marabou}}
& - & - & - 
& 1342.03 & -  & 134
& 151.31 & -  & 24 \\

\multicolumn{1}{c|}{PyRAT$\ddag$~\cite{girard2022caisar}}
& 90.40 & -  & 15
& 1484.39 & -  & 137
& 704.55 & -  & 53 \\

\multicolumn{1}{c|}{Never2$\ddag$~\cite{girard2022caisar}}
& - & - & - 
& 1368.78 &  - & 121
& - & - &  - \\

\multicolumn{1}{c|}{NNV$\ddag$\cite{tran2020nnv}}
& - & - &  -
& 2631.49 &  - & 70
& - & - &  - \\

\multicolumn{1}{c|}{Cora$\ddag$\cite{Althoff2015ARCH}}
& - & - &  -
& 1566.80 & -  & 134
& 22.88 & -  & 2 \\

\multicolumn{1}{c|}{NeuralSAT$\ddag$\cite{duong2023dpll}}
& - & - &  -
& 1316.85  & - & 138 \\

\multicolumn{1}{c|}{$\alpha,\!\beta$-CROWN$\ddag$}
& 115.27 & 142,293,985  & \textbf{40}
& 280.51 & 7,154,387  & {138}
& 1580.66 & 4,440,252  & \textbf{194} \\

\midrule

\multicolumn{1}{c}{Clip-and-Verify (Ours)} \\
\midrule

\multicolumn{1}{c|}{Relaxed clipping}
& 99.12 & 92,402,227$^{\downarrow35.1\%}$ & \textbf{40}
& 151.37 & 3,124,100$^{\downarrow56.3\%}$ & \textbf{139}
& 1193.89 & 2,691,750$^{\downarrow39.4\%}$ & \textbf{194} \\

\multicolumn{1}{c|}{Relaxed + Reorder}
& 98.42 & 66,412,652$^{\downarrow53.3\%}$ & \textbf{40}
& \textbf{150.25} & 2,557,715$^{\downarrow64.2\%}$ & \textbf{139}
& \textbf{1166.08} & 2,300,894$^{\downarrow48.2\%}$ & \textbf{194} \\

\multicolumn{1}{c|}{Complete clipping}
& \textbf{84.30} & \textbf{5,334,421}$^{\downarrow96.3\%}$  & \textbf{40}
& 168.57 & \textbf{1,533,068}$^{\downarrow78.6\%}$  & \textbf{139}
& 2846.06 & \textbf{2,141,288}$^{\downarrow51.8\%}$ & \textbf{194} \\

\bottomrule
\end{tabular}
\end{threeparttable}
\end{adjustbox}
\label{tab:input_bab-comparison}
\end{table*}

\begin{table*}[t]

\centering
\caption{Performance of Clip-and-Verify on challenging control system verification tasks. %``timeout'' indicates the verifier exceeded the time limit. 
Complete clipping is essential for verifying the most difficult properties. The timeout is 3 days (259,200s).}
\label{tab:control_systems}
\vspace{-3pt}
%\label{table:vnncomp-21}
\scriptsize
\begin{adjustbox}{max width=.99\textwidth}
\begin{threeparttable}[hbt]
\begin{tabular}{lcc|cc|cc}

& \multicolumn{2}{c}{\texttt{cartpole}} 
& \multicolumn{2}{c}{\texttt{Quadrotor-2D}} 
& \multicolumn{2}{c}{\texttt{Quad-2D-Large}}\\
\toprule
    
\multicolumn{1}{c|}{Method} &
\multicolumn{1}{c}{time(s)} &
\multicolumn{1}{c}{subproblems} &
\multicolumn{1}{c}{time(s)} &
\multicolumn{1}{c}{subproblems} &
\multicolumn{1}{c}{time(s)} &
\multicolumn{1}{c}{subproblems}
\\

\cmidrule(lr){1-1} 
\cmidrule(lr){2-3} 
\cmidrule(lr){4-5} 
\cmidrule(lr){6-7} 

\multicolumn{1}{c|}{$\alpha,\!\beta$-CROWN (No clipping)}
& 1602 & 54,260,909
& \multicolumn{2}{c|}{timeout} 
& \multicolumn{2}{c}{timeout }  \\

\multicolumn{1}{c|}{Clip-and-Verify (Relaxed clipping)}
& 484 & 16,453,971$^{\downarrow69.7\%}$ \quad
& 209,504 & 2,630,043,050 &
\multicolumn{2}{c}{timeout}  \\

\multicolumn{1}{c|}{Clip-and-Verify (Complete clipping)}
& \textbf{142} & \textbf{2,438,359}$^{\downarrow95.5\%}$ 
& \textbf{78,818} & \textbf{1,112,917,436}$^{\downarrow57.7\%}$
& \textbf{104,614} & \textbf{1,472,433,971} \\

\bottomrule

\end{tabular}
\end{threeparttable}
\end{adjustbox}
\end{table*}
\vspace{-3mm}
\begin{table*}[t]
\centering
\caption{\footnotesize Comparison of different toolkits and Clip-and-Verify on VNN-COMP benchmarks with activation split BaB. Results of $\beta$-CROWN and BICCOS were from the same hardware of our experiments for a direct comparison; other results are from VNN-COMP reports. ``-'' indicates the benchmark was not supported.}
\vspace{-3pt}
\label{table:vnncomp-21}
\scriptsize
\begin{adjustbox}{max width=.99\textwidth}
\begin{threeparttable}[hbt]
\begin{tabular}{ccc|cc|cc|cc}

    & \multicolumn{2}{c}{\texttt{oval22}} 
    %& \multicolumn{2}{c}{\texttt{cifar10-resnet}} 
    & \multicolumn{2}{c}{\texttt{cifar10-resnet}} 
    % & \multicolumn{2}{c}{\texttt{sri-resnet-A}} 
    % & \multicolumn{2}{c}{\texttt{sri-resnet-B}} 
    & \multicolumn{2}{c}{\texttt{cifar100-2024}} 
    & \multicolumn{2}{c}{\texttt{tinyimagenet-2024}} 
    \\
    \toprule

    \multicolumn{1}{c|}{Method} &
    \multicolumn{1}{c}{time(s)} &
    \multicolumn{1}{c|}{\# verified} &
    \multicolumn{1}{c}{time(s)} &
    \multicolumn{1}{c|}{\# verified} &
    \multicolumn{1}{c}{time(s)} &
    \multicolumn{1}{c|}{\# verified} &
    \multicolumn{1}{c}{time(s)} &
    %\multicolumn{1}{c|}{\# verified} 
    % \multicolumn{1}{c}{time(s)} &
    % \multicolumn{1}{c}{\# verified} &
    % \multicolumn{1}{c}{time(s)} &
    \multicolumn{1}{c}{\# verified}
    \\

    \cmidrule(lr){1-1} 
    \cmidrule(lr){2-3} 
    \cmidrule(lr){4-5} 
    \cmidrule(lr){6-7} 
    \cmidrule(lr){8-9} 
    % \cmidrule(lr){10-11}
    % \cmidrule(lr){12-13}

    \multicolumn{1}{c|}{nnenum$^*$~\cite{bak2021nfm,bak2020cav}} 
    & 630.06 & 3 
    & - & - 
    % & - & - 
    % & - & - 
    & - & - 
    & - & - \\

    \multicolumn{1}{c|}{ERAN$^*$\cite{muller2021precise,muller2021scaling}} 
    & 233.84 & 6 
    %& 129.48 & 43 
    & 24,74 & 43
    % & - & - 
    % & - & - 
    & - & - 
    & - & - \\
    
    \multicolumn{1}{c|}{OVAL$^*$~\cite{de2021improved,DePalma2021}} 
    & 393.14 & 11 
    & - & - 
    % & - & -
    % & - & - 
    & - & - 
    %& - & - 
    & - & - \\
    
    \multicolumn{1}{c|}{\revisedtext{Venus2~\cite{botoeva2020efficient,kouvaros2021towards}}} 
    & 386.71 & 17 
    %& - & - 
    & - & - 
    % & - & - 
    % & - & - 
    & - & - 
    & - & - \\
    
    \multicolumn{1}{c|}{VeriNet$\dag$~\cite{henriksen2020efficient,henriksen2021deepsplit}} 
    & 73.65 & 17 
    %& 105.91 & 48 
    & 8.11 & 48
    % & 10.44 & 12
    % & 11.17 & 20 
    & - & - 
    & - & - \\
    
    \multicolumn{1}{c|}{MN-BaB$\dag$~\cite{ferrari2022complete}} 
    & 137.13 & 19 
    %& - & -
    & - &-
    % & 51.40 & 18 
    % & 48.80 & 27 
    & - & - 
    & - & - \\

    \multicolumn{1}{c|}{Marabou$\dag \ddag$~\cite{katz2019marabou,wu2024marabou}} 
    & 5.33 & 19 
    %& 157.70 & 39 
    %& 186.11 & 27
    & 40.42 & 39
    % & - & - 
    % & - & - 
    & - & 0 
    & - & 0 
    \\
    
    \multicolumn{1}{c|}{PyRAT$\ddag$~\cite{girard2022caisar}} 
    & - & - 
    & - & -
    % & - & - 
    % & - & - 
    & 42.38 & 68  
    & 55.64  & 49 
    \\
    
    \multicolumn{1}{c|}{$\beta$-CROWN~\cite{zhang2018efficient,wang2021beta}} 
    & {29.39} & 20 
    %& 57.14 & 60 
    & 9.17 & 60 
    % & 31.60 & 20 
    % & 13.33 & 28 
    & 8.15 & 119
    & 8.65 & 135 \\
    
    %\multicolumn{1}{c|}{GCP-CROWN with MIP cuts~\cite{zhang2022general}} 
    %& 32.12 & {25} 
    %& 53.49 & 61 
    %& 18.42 & {69}
    %& - & - 
    %& - & - 
    %& 19.32 & {119}
    %& 31.60 & {134}
    %& \multicolumn{6}{c}{MIP solver can't scale, so GCP-CROWN reduces to $\beta$-CROWN} 
    %\\
    
    \multicolumn{1}{c|}{BICCOS~\cite{zhang2022general,zhou2024biccos}} 
    & 31.72 & {25} 
    %& {43.37} & \textbf{61} 
    & 16.73 & {63} 
    % & 34.96 & 20 
    % & 19.96 & 28
    &  8.75 & 121 
    &  9.73 & {138}  
    \\
    \hline
    \multicolumn{1}{c|}{Clip-and-Verify with $\beta$-CROWN} 
    & 28.15 & {22} 
    %& {43.37} & \textbf{61} 
    & 6.06 & {63} 
    % & 48.07 & \textbf{27} 
    % & 34.23 & \textbf{41} 
    & 6.59& {126}  
    & 9.48& {140}  
    \\
    
    \multicolumn{1}{c|}{Clip-and-Verify with BICCOS } 
    & 46.48 & \textbf{27} 
    %& {43.37} & \textbf{61} 
    & 11.81& \textbf{64} 
    % & 22.44 & \textbf{27} 
    % & 45.35 & \textbf{41}
    & 8.17& \textbf{131}  
    & 10.48& \textbf{144}  \\
    \hline
    
    \multicolumn{1}{c|}{Upper Bound} 
&&29%    &\multicolumn{2}{c|}{27}  
&&72%    &\multicolumn{2}{c|}{94} 
% && 60    %&\multicolumn{2}{c|}{72}
% && 61
&&168%    &\multicolumn{2}{c|}{168}  
&&157 

\\%    &\multicolumn{2}{c}{157}  \\
    
\bottomrule

\end{tabular}
\begin{tablenotes}
\item 
[*] Results from VNN-COMP 2021 report~\citep{bak2021second}.\quad 
{\dag}  from VNN-COMP 2022 report~\citep{müller2023third} \quad {$\ddag$}  from VNN-COMP 2024 report \citep{brix2024fifth}
\end{tablenotes}
\end{threeparttable}
\end{adjustbox}
\end{table*}
\begin{table*}[t]
    \centering
    \caption{\footnotesize Verified accuracy (Var.\%), avg.\ per-example verification time (s) on VNN-COMP benchmarks and other commonly used benchmarks. The average time is calculated on verified images only. Clip-and-Verify consistently outperforms all baselines when combined with state-of-the-art BaB verifiers such as BICCOS~\cite{zhou2024biccos}}
    \vspace{3pt}
        % \setlength{\tabcolsep}{1.2pt}
        %\centering
        \label{tab:ablationstudy}
      \adjustbox{max width=.99\textwidth}{
          \begin{threeparttable}[hbt]
        \begin{tabular}{c|c|rr|rr|rr|rr|rr|rr|rr|r}
        \toprule
        Dataset                       
        & Model    
        & \multicolumn{2}{c|}{{$\beta$-CROWN }} 
        &\multicolumn{2}{c|}{{GCP-CROWN}} 
        &\multicolumn{2}{c|}{BICCOS} 
        %&\multicolumn{2}{c|}{BICCOS (with MIP cuts )} 
        &\multicolumn{2}{c|}{Clip-and-Verify}
        &\multicolumn{2}{c|}{Clip-and-Verify}
        &\multicolumn{2}{c|}{Clip-and-Verify}
        & Upper  

        \\
        &
        &     
        & 
        
        &\multicolumn{2}{c|}{with MIP cuts}
        &
        &
        %&\multicolumn{2}{c|}{BICCOS (with MIP cuts )} 
        &\multicolumn{2}{c|}{with $\beta$-CROWN}
        &\multicolumn{2}{c|}{with MIP cuts}
        &\multicolumn{2}{c|}{with BICCOS}
        &   bound
        
        \\

        \multicolumn{2}{c|}{$\epsilon=0.3$ and $\epsilon=2/255$} 
        & Ver.\%        & Time (s)     
        & Ver.\%        & Time(s) 
        & Ver.\%        & Time(s)    
        & Ver.\%        & Time(s)    
        & Ver.\%        & Time(s)   
        & Ver.\%        & Time(s)   
        &      
        
        \\ 
        
        \midrule
        
        MNIST                         
        & CNN-A-Adv    
        & {71.0} &   4.28
        & {71.5} &  6.64
        & {76.0}  & 6.91
        & 74.0  & 3.33
        & {73.5} & 5.15
        & \textbf{76} & 6.39
        &  76.5        
        
        \\
        
        \hline
        
        \multirow{6}{*}{CIFAR}

        & CNN-A-Adv     
        & 45.5	&	5.26	
        & \textbf{48.5}	&	4.81	
        &	\textbf{48.5} &	3.91	
        &	45.5 	&	1.35	
        &	\textbf{48.5}	&	3.59	
        &	\textbf{48.5} 	&	3.08	
        & 50.0            
        
        \\
          
        & CNN-A-Adv-4    
        &   46.5 	&	1.25		
        &	\textbf{48.5}	&	2.64		
        &	\textbf{48.5} 	& 	1.55	
        &	{46.5} 	&	0.62	
        &	\textbf{48.5}	&	2.26	
        &	\textbf{48.5}	&	1.31
        & 49.5     
        
        \\
          
        & CNN-A-Mix      
        &   42.0 	&	4.00	
        &	47.5  &	10.49
        &	\textbf{48.0} 	&	9.00
        &	43.0 	&	4.24
        &	47.5	&	10.53
        &	\textbf{48.0} 	&	7.45
        & 53.0       
        
        \\
          
        & CNN-A-Mix-4  
        &  51.0	  &	1.05	
        &	55.0	&	5.94
        &	56.0 	&	9.22
        &	{51.0}	 &	0.84	
        &	55.0	&	4.44	
        &	\textbf{56.5} 	&	5.12		
        & 57.5         
        
        \\  

        & CNN-B-Adv      
        &   47.0 	&	8.25	
        &	49.5	&	12.16
        &	51.0	&	10.79
        &	{49}	&	7.54		
        &	\textbf{51.5}	&	11.68	
        &	\textbf{51.5}	&	7.68	
        & 65.0         
        
        \\
        
        & CNN-B-Adv-4     
        &    55.0	&	2.91	
        &	58.5	&	7.01	
        &	59.5	&	4.7	
        &	56.5	 &	0.82	
        & {60.0}	&	6.67	
        & \textbf{60.5}&	5.55
        &63.5

        \\ 
        
        \hline
          
        \multicolumn{2}{l|}{cifar10-resnet} 
        & 83.33&	9.17	
        &	87.5	&	17.99
        &	87.5   &	16.73	
        &	86.11	 &	6.06
        &	\textbf{88.89}	&	16.73
        &	\textbf{88.89} 	&	11.80	
        & 100.0

        \\
          
        \multicolumn{2}{l|}{oval22} 
        &66.66 	&	29.39		
        &	83.33	&	63.53	
        &	83.33 	&	31.72
        &	73.33	&	28.15		
        &	\textbf{90.00} &	45.52	
        & \textbf{90.00} 	&	46.48	
        & {96.67} 
        
        \\ 
          
        \multicolumn{2}{l|}{cifar100-2024} 
        & 59.5	&	8.15
        & -  & - 
        & {60.5}	&	8.75	
        %& \multicolumn{2}{c|}{can't scale}  
        & 63.0 &	6.59	
        & - & -
        & \textbf{65.5} &8.17
        & 84.0
        
        \\

        \multicolumn{2}{l|}{tinyimagenet-2024} 
        & 67.5 &	8.65	
        %& \multicolumn{2}{c|}{can't scale}
        & - & -
        & {69.0}	 &	9.73	
        %& \multicolumn{2}{c|}{can't scale}  
        & {70.0}	 &	9.48		
        & -  & -
        & \textbf{72.0}  & 10.48
        & 78.5

        \\
        \hline

        \multicolumn{2}{l|}{vision-transformer 2024~\cite{shi2025genbab}} 
        & 59.0 &	22.04	
        %& \multicolumn{2}{c|}{can't scale}
        & - & -
        & -	 &	-	
        %& \multicolumn{2}{c|}{can't scale}  
        & \textbf{61.0}	 &	10.81	
        & -  & -
        & -  & -
        & 100.0
        
        \\
          
        \bottomrule
        \end{tabular}
\end{threeparttable}
        }
\vspace{-5pt}
\end{table*}

\begin{figure*}[t!]
    \centering
    \begin{minipage}[b]{0.31\textwidth}
        \centering
        \includegraphics[width=\textwidth]{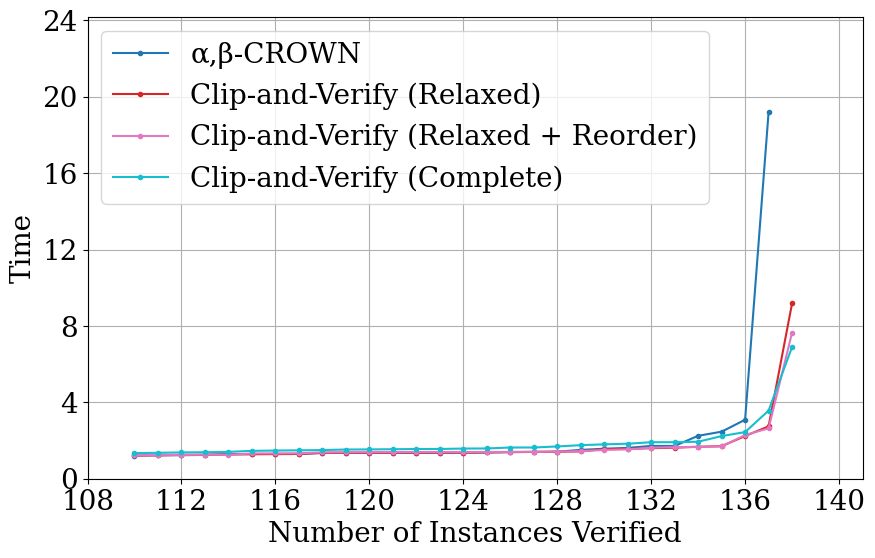}
        \subcaption{acasxu}
        \label{fig:subfig1_1}
    \end{minipage}
    \hfill
    \begin{minipage}[b]{0.31\textwidth}
        \centering
        \includegraphics[width=\textwidth]{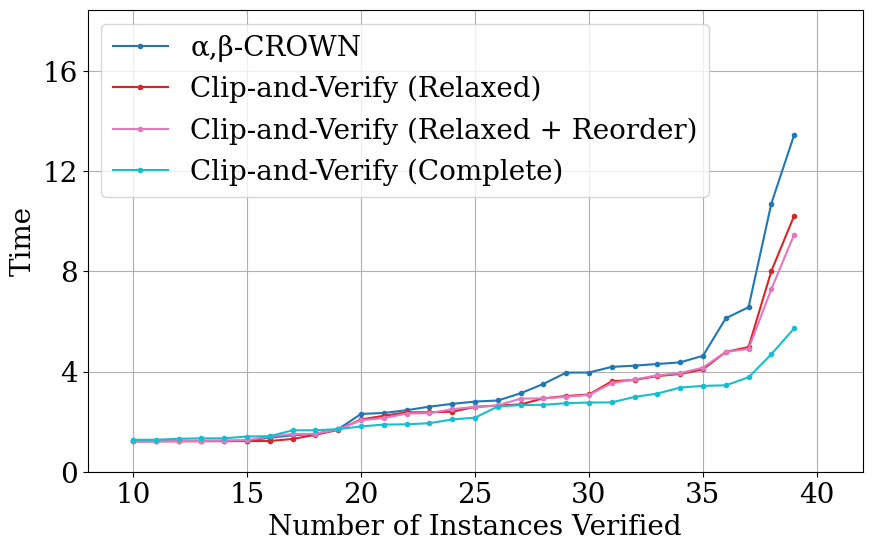}
        \subcaption{lsnc}
        \label{fig:subfig1_2}
    \end{minipage}
    \hfill
    \begin{minipage}[b]{0.31\textwidth}
        \centering
        \includegraphics[width=\textwidth]{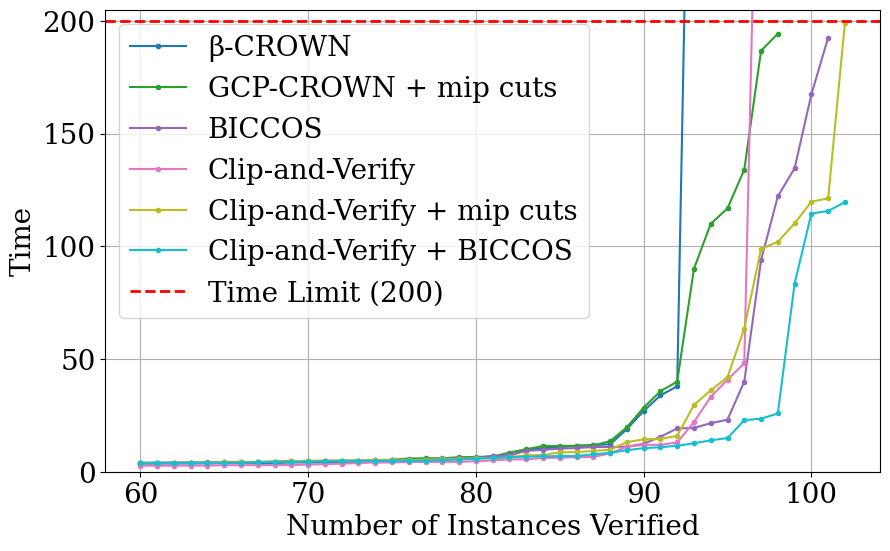}
        \subcaption{cifar-cnn-b-adv}
        \label{fig:subfig1_3}
    \end{minipage}
    \vspace{0.5cm}
    \centering
    \begin{minipage}[b]{0.32\textwidth}
        \centering
        \includegraphics[width=\textwidth]{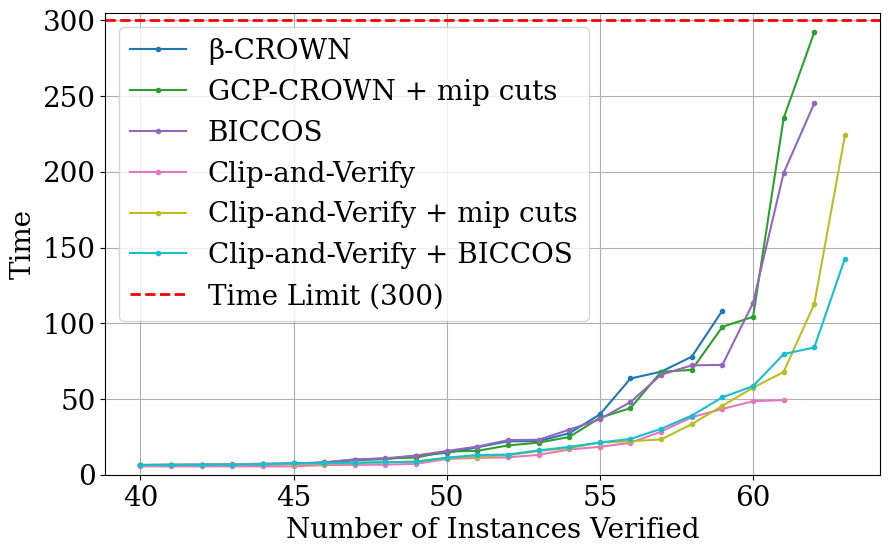}
        \subcaption{cifar10-resnet}
        \label{fig:subfig1_4}
    \end{minipage}
    \hfill
    \begin{minipage}[b]{0.32\textwidth}
        \centering
        \includegraphics[width=\textwidth]{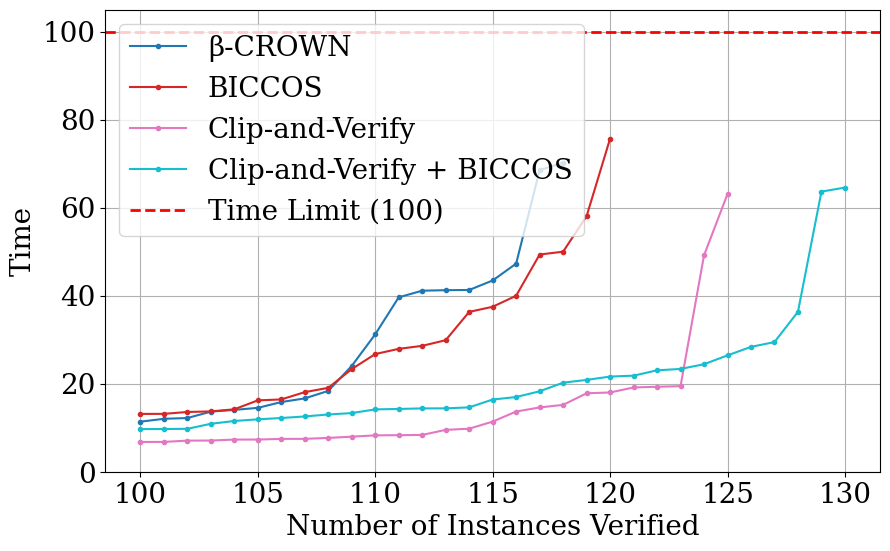}
        \subcaption{cifar100}
        \label{fig:subfig1_5}
    \end{minipage}
    \hfill
    \begin{minipage}[b]{0.32\textwidth}
        \centering
        \includegraphics[width=\textwidth]{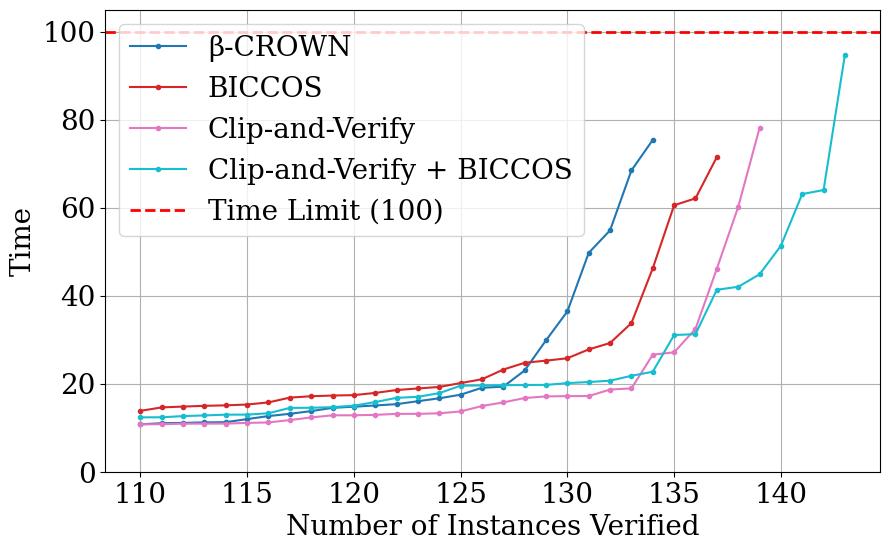}
        \subcaption{tinyimagenet}
        \label{fig:subfig1_6}
    \end{minipage}
    \vspace{-1.5em}
    \caption{Representative benchmarks visualization. (a) and (b) are input BaB benchmarks with timeout 120s and 100s respectively, (c) and (d) represent medium-size ReLU nets, whereas (e) and (f) are substantially larger. These latter 4 benchmarks utilize ReLU splitting, for which we use complete clipping. Despite their differences in scale, our Clip-and-Verify algorithm demonstrates strong performance on both. Please refer to appendix~\ref{sec:exp_ablation} for a detailed interpretation.}
    \label{fig:overview}
    \vspace{-18pt}
\end{figure*}
We evaluate the effectiveness of Clip-and-Verify on several benchmarks from VNN-COMP 2021-2024~\citep{bak2021second,müller2023third,brix2023fourth,brix2024fifth}. We first demonstrate its benefits on three benchmarks and three hard NN control system problems that are commonly solved with the input BaB procedure, then evaluate our approach on challenging activation space
BaB benchmarks in VNN-COMPs, as well as SDP-FO benchmarks introduced in previous studies~\citep{dathathri2020enabling,wang2021beta}. Fig.~\ref{fig:overview} shows the overview of results of selected benchmarks. All results visualization and details about the configuration see Appendix~\ref{sec:exp_setting}.

\textbf{Verification Results on Input Split Benchmarks and Hard NN Control Systems.} For evaluation, we focus on the following benchmarks from VNN-COMP 23 and VNN-COMP 24: \texttt{acasxu}, \texttt{lsnc}, and \texttt{nn4sys}. 
we have considered all input benchmarks and these three are the mostly challenging ones when solved using input BaB.
We compare our approach to other baseline tools as shown in Table \ref{tab:input_bab-comparison}. Overall, Clip-and-Verify reduces the number of branches by over 50\% and accelerating the verification process. For \texttt{lsnc}, we reduced the number of domains by 96\%.
We further tested our method on three \textbf{challenging} verification tasks from a recent study on \textbf{provably stable neural network control systems}~\cite{yang2024lyapunov, li2025neural, li2025two}: \texttt{cartpole}, \texttt{Quadrotor-2D}, and \texttt{Quadrotor-2D-Larger-ROA}. These tasks require certifying Lyapunov-based stability over high-volume state domains, and can not be solved by existing verifiers.
The challenge of the verification problem is that we need to verify inside the intersection of a large box and a level set of its Lyapunov function. Detailed settings see Appendix \ref{sec:exp_setting}.
These instances therefore test whether a verifier can reason about Lyapunov decrease and boundary non-escape over large, physically meaningful state ranges, rather than over small adversarial balls. While general-purpose robustness verifiers are not designed for such volumes, our method targets exactly this regime.
As shown in Table~\ref{tab:control_systems}, baseline methods without clipping fail to solve these problems within the time limit. Both relaxed and complete clipping enable verification, with \textbf{complete clipping} demonstrating superior performance by drastically reducing both the number of visited BaB subproblems and the total runtime, turning previously intractable problems into verifiable ones.

\vspace*{-4pt}
\paragraph{Verification Results on General Activation Split Benchmarks}

Shown in Table~\ref{table:vnncomp-21}, we further compare the proposed methods against a wide range of existing neural network verification toolkits on six challenging VNN-COMP benchmarks:
\texttt{oval22}, \texttt{cifar10-resnet},  \texttt{cifar100-2024},  \texttt{tinyimagenet-2024}, and \texttt{vit-2024}
Each method is evaluated in terms of average runtime (seconds) and the number of verified properties (\# verified). A dash (-) denotes that the corresponding tool was not applicable or did not support that particular benchmark.
Clip-and-Verify with BICCOS attains state-of-the-art verification coverage on the evaluated VNN-COMP benchmarks, pushing closer to the theoretical upper bound in terms of the number of properties verified. 
When combined with BICCOS, 
whose tight bounding routines complement Clip-and-Verify’s efficient splitting,
coverage reaches to 131 and 144,
underscoring the method’s broad applicability and scalability.
Besides of the ReLU nets, we further evaluated our method on a \textbf{Vision Transformer} (ViT) model~\cite{shi2025genbab}, which contains \textbf{non-ReLU} layers such as \textbf{Softmax}. Verifying such architectures is challenging due to the complex, non-linear constraints introduced by these operators. The model tested has approximately 76k parameters and an input dimension of 3072. Tables~\ref{tab:ablationstudy},~\ref{tab:ablationstudy2} show that Clip-and-Verify significantly outperforms the baseline, verifying more properties in about half the time and with nearly 45\% fewer subproblems in ViT. This demonstrates the applicability of our framework on general networks.

\vspace*{-10pt}

\paragraph{Ablation Studies.}
We compare the BaB baseline without clipping, \emph{Relaxed Clipping} only, and the full pipeline (Relaxed + Complete) in Table~\ref{tab:input_bab-comparison}. We further vary the comparison of complete clipping and LP solvers in Section~\ref{sec:appendix_lp_comparison}, and \emph{scoring rule} for choosing critical neurons (e.g., BaBSR intercept, envelope gap, bound width) in Complete Clipping in Section~\ref{sec:heuristic}. For each configuration we report {time} and {visited subproblems} under identical branching/timeout settings in Table~\ref{tab:ablationstudy} and Table~\ref{tab:ablationstudy2}; detailed breakdowns, per-model trends, and ablation protocols are in Appendix~\ref{sec:exp_ablation}. 

\vspace{-2mm}
\section{Related Work}
\vspace{-2mm}

Verification becomes easier when we tighten the \emph{base relaxations} that bound intermediate layers and \emph{contract the input region}, so that a subsequent branch-and-bound (BaB) search explores fewer and easier subproblems. BaB frameworks unify many complete verifiers and show that stronger node bounds reduce tree size \cite{bunel2018unified,bunel2020branch,singh2018fast,wang2018efficient,zhang2018efficient,bak2021nfm}. Subsequent improvements include faster dual/active-set solvers and better branching \cite{DePalma2021,wang2021beta,ferrari2022complete}. Our clipping steps make each node \emph{strictly easier} (smaller box and tighter intermediate relaxations) without spawning additional subproblems, and are therefore complementary to any branching heuristic.
Our method contracts the box domain using linear constraints and injects the same constraints to tighten intermediate affine bounds, thus improving the node-wise base problem before any additional branching is introduced.

For tighter base relaxations, linear-relaxation-based bound propagation (e.g., CROWN \cite{zhang2018efficient} and DeepPoly \cite{singh2019abstract})
provide fast base bounds.
Beyond this, early single-neuron triangle relaxations such as PLANET \cite{ehlers2017formal} and the convex adversarial polytope/Wong–Kolter line \cite{wong2018provable} underpin modern propagation methods; symbolic/abstract-interpretation variants (DeepZ \cite{singh2018fast}/AI2 \cite{gehr2018ai2}/Neurify \cite{wang2018efficient}/OSIP \cite{hashemi2021osip}) further improve hidden-layer bounds in practice.
Multi-neuron relaxations overcome single-neuron limits by coupling activations \cite{singh2019beyond,anderson2020strong,muller2022prima,singh2021overcoming}.
Beyond k-ReLU and PRIMA \cite{singh2019beyond,muller2022prima}, tightened layer-wise convex relaxations \cite{tjandraatmadja2020convex} and barrier-oriented analyses (e.g., for simplex inputs or active-set scaling) show how to systematically overcome single-neuron limits.
Beyond pure LP, SDP-based relaxations with linear cuts capture neuron coupling when affordable \cite{batten2021efficient,lan2022tight,ma2020strengthened,chiu2025sdpcrown}. Orthogonal to envelopes, NN-specific cutting planes have been injected directly into bound propagation e.g., general cutting planes (GCP-CROWN \cite{zhang2022general}) and conflict-driven cuts (BICCOS \cite{zhou2024biccos}) to strengthen node-wise relaxations before or alongside BaB.
Exact formulations (MILP/SMT) and their strengthened variants remain an important baseline and hybrid component \cite{tjeng2017evaluating,anderson2020strong,katz2017reluplex,katz2019marabou}, though scalability constraints often shift emphasis toward stronger relaxations and BaB.
Recent work also tightens bounds via optimization-driven rolling-horizon decomposition, effectively performing layerwise optimization-based bound tightening to improve intermediate bounds \cite{zhao2024bound}.
Our \emph{Complete Clipping} is orthogonal: instead of inventing new global envelopes, we reuse \textbf{existing} linear constraints produced during verification to tighten per-neuron affine bounds inside the same bound-propagation framework. On the other hand, Contract-Simple \cite{bak2020cav} contracts a zonotope using one output inequality. 
We generalize this in a way tailored to bound propagation: \emph{Relaxed Clipping} applies \emph{arbitrary} linear constraints from final-layer bounds or activation branchings to a box proxy with closed-form updates, avoiding repeated LP solves while yielding globally useful domain reductions reused across all neurons.

\vspace{-3mm}
\section{Conclusion}
\vspace{-3mm}
We proposed \textbf{Clip-and-Verify}, a framework that enhances the Branch-and-Bound paradigm in neural network verification through our novel \textit{linear constraint-driven domain clipping} algorithms. An efficient and light-weight paradigm for tightening intermediate-layer bounds has been a remaining gap in neural network verification that has finally been addressed by our framework. Limitation and broader impacts discussed in Appendix~\ref{sec:limitation}.

\section*{Acknowledgements}
 Huan Zhang is supported in part by the AI2050 program at Schmidt Sciences (AI2050 Early Career Fellowship) and NSF (IIS-2331967). Grani A.~Hanasusanto is supported in part by NSF (CCF-2343869 and ECCS-2404413).
 We thank the anonymous reviewers for their constructive feedback. In particular, we acknowledge Reviewer vsjp for their invaluable insights on the theoretical foundations of our work, particularly for identifying the connection between our dual optimization approach and the continuous knapsack problem.

\newpage
\bibliographystyle{plain}
\bibliography{references}

\newpage
\appendix
\onecolumn
\textbf{\Large Appendix}

\tableofcontents

\addtocontents{toc}{\protect\setcounter{tocdepth}{3}}

\section{Formulations}
\label{appendix:formulations}

\subsection{Unstable Neurons and Complexity}
\label{sec:unstable_neurons}
\paragraph{Unstable neurons for General Activations}
Let $z_j^{(i)}$ be the pre-activation of neuron $(i,j)$ in layer $i$ number $j$ with interval bounds
$[\underline{z}_j^{(i)},\overline{z}_j^{(i)}]$, and let the component-wise activation be $\phi^{(i)}(\cdot)$.
Given a sound linear relaxation (convex envelope) for the scalar graph of $\phi^{(i)}$ over $[\underline{z}_j^{(i)},\overline{z}_j^{(i)}]$, denote by
$l_j^{(i)}(z)$ and $u_j^{(i)}(z)$ its lower and upper affine bounds, and define the envelope gap (or relaxation error):

\begin{equation}
    \delta_j^{(i)} \triangleq \max_{z\in[\underline{z}_j^{(i)},\overline{z}_j^{(i)}]}\big(u_j^{(i)}(z)-l_j^{(i)}(z)\big).
\end{equation}

We call neuron $(i,j)$ \emph{stable} if $\delta_j^{(i)}=0$ (the relaxation is exact on the interval), and \emph{unstable} if $\delta_j^{(i)}>0$.
This definition is activation-agnostic:
for piecewise-linear (PWL) activations, stability is equivalent to $[\underline{z}_j^{(i)},\overline{z}_j^{(i)}]$ lying inside a single linear piece (e.g., for ReLU it reduces to $\overline{z}_j^{(i)}\le 0$ or $\underline{z}_j^{(i)}\ge 0$);
for smooth nonlinear activations one typically adopts a PWL relaxation, in which case instability means the interval spans at least two segments (hence a nonzero envelope gap).

\paragraph{Complexity.}
Let $U$ be the number of unstable neurons at a BaB node.
Once every unstable neuron has its activation region (piece) fixed, the network reduces to an affine map over the input box, and the remaining subproblem is convex.
For PWL activations where each unstable neuron has at most $K$ admissible pieces,\footnote{For ReLU, $K=2$ (inactive/active). For HardTanh or ReLU6, $K\le 3$. For smooth activations under an $S$-segment relaxation, take $K=S$.}
the number of global activation patterns is bounded by $K^U$.
Consequently, any exact verifier that distinguishes activation pieces (e.g., MILP/SMT encodings with one discrete choice per unstable neuron, or BaB that branches on activation pieces) faces, in the worst case,
\[
\text{\# subproblems} \in \Omega\!\big(K^U\big)\quad\text{and hence}\quad
\text{time} = \Omega\!\big(T_{\mathrm{bound}}\cdot K^U\big),
\]
where $T_{\mathrm{bound}}$ is the per-node cost to evaluate/tighten bounds.
In particular, for ReLU networks $K=2$, yielding the familiar exponential scaling $\Omega(2^U)$; thus any mechanism that reduces $U$ (e.g., by stabilizing neurons via tighter pre-activation intervals) produces a multiplicative reduction in worst-case search, by a factor of $K^{\Delta U}$ when $U$ decreases by $\Delta U$.

\subsection{MIP Formulation and LP \& Planet Relaxation}
\label{formulation:mip_formulation}

Early approaches pursue exact guarantees for verification through Mixed-Integer Linear Programming (MILP)~\cite{chihhongcheng2017maximumresilienceofartificialneuralnetworks,lmusciomaganti2017anapproachtoreachabilityanalysis,dutta2018output,fischetti2017deepneuralnetworksasmilp,DBLP:conf/iclr/TjengXT19,xia2018trainingforfasteradversarialrobustness}
and Satisfiability Modulo Theories (SMT)~\cite{scheibler2015towards,katz2017reluplex,katz2017groundtruthadversarialexamples,ehlers2017formal}. 

\textbf{The MIP Formulation} The mixed integer programming (MIP) formulation is the root of many NN verification algorithms. Given the RELU activation function's piecewise linearity, the model requires binary encoding variables, or ReLU indicators $\delta$ only for unstable neurons. We formulate the optimization problem aiming to minimize the function $f(\mathbf{x})$, subject to a set of constraints that encapsulate the DNN's architecture and the perturbation limits around a given input $\mathbf{x}$, as follows:
\begin{subequations}\label{eq:mip_formulation}
    \begin{align}
        & f^\star = \min_{\mathbf{x}, \hat{z}, \delta} f(\mathbf{x}) \quad\quad \text{s.t. } f(\mathbf{x}) = z^{(L)}; \hat{z}^{(0)} = \mathbf{x} \in \mathcal{X} \label{eq:mip_formulation_a}\\
        & \mathbf{\hat{z}}^{(i)} = \mathbf{W}^{(i)}\hat{\mathbf{z}}^{(i-1)} + \mathbf{b}^{(i)}; \quad i\in[L] \label{eq:mip_formulation_b}\\
        & \mathcal{I}^{+(i)} := \{ j: l_j^{(i)} \geq 0 \} \label{eq:mip_formulation_c}\\
        & \mathcal{I}^{-(i)} := \{ j: u_j^{(i)} \leq 0 \} \label{eq:mip_formulation_d}\\
        & \mathcal{I}^{(i)} := \{ j: l_j^{(i)} < 0, u_j^{(i)} > 0 \} \label{eq:mip_formulation_e}\\
        & \mathcal{I}^{+(i)} \cup \mathcal{I}^{-(i)} \cup \mathcal{I}^{(i)} = \mathcal{J}^{i} \label{eq:mip_formulation_f}\\
        & \hat{x}_j^{(i)} \geq 0; j\in\mathcal{I}^{(i)}, i\in[L-1] \label{eq:mip_formulation_g}\\
        &\hat{z}_j^{(i)} \geq z_j^{(i)}; j\in\mathcal{I}^{(i)}, i\in[L-1] \label{eq:mip_formulation_h}\\
        &\hat{z}_j^{(j)} \leq u_j^{(i)}\delta_j^{(i)}; j\in\mathcal{I}^{(i)}, i\in[L-1] \label{eq:mip_formulation_i}\\
        & \hat{z}_j^{(i)} \leq z_j^{(i)} - l_j^{(i)}(1 - \delta_j^{(i)}); j\in\mathcal{I}^{(i)}, i\in[L-1] \label{eq:mip_formulation_j}\\
        & \delta_j^{(i)}\in\{0, 1\}; j\in\mathcal{I}^{(i)}, i\in[L-1] \label{eq:mip_formulation_k} \\
        & \hat{z}_j^{(i)} = z_j^{(i)}; j\in\mathcal{I}^{+(i)}, i\in[L-1] \label{eq:mip_formulation_l}\\
        & \hat{z}_j^{(i)}=0; j\in\mathcal{I}^{-(i)}, i \in[L-1] \label{eq:mip_formulation_m}
    \end{align}
\end{subequations}

To initialize intermediate bounds for each neuron, we replace the original objective \(f(\mathbf{x})\) with the neuron’s pre-activation value \(z_j^{(i)}\). This lets us solve the following bounds for every neuron \(j\) in layer \(i\), with \(i \in [L-1]\) and \(j \in \mathcal{J}^{(i)}\):
\[
l_j^{(i)} = \min_{x \in \mathcal{X}} f_j^{(i)}(\mathbf{x}), 
\quad
u_j^{(i)} = \max_{x \in \mathcal{X}} f_j^{(i)}(\mathbf{x}).
\]
Here, the set \(\mathcal{J}^{(i)}\) comprises all neurons in layer~\(i\), which can be categorized into three groups: ‘active’ (\(\mathcal{I}^{+(i)}\)), ‘inactive’ (\(\mathcal{I}^{-(i)}\)), and ‘unstable’ (\(\mathcal{I}^{(i)}\)). 

Next, the MIP formulation is initialized with the constraints 
\[
l_j^{(i)} \leq z_j^{(i)} \leq u_j^{(i)},
\]
across all neurons and layers \(i\). These bounds can be computed recursively, propagating from the first layer up to the \(i\)-th layer. However, since MIP problems involve integer variables, they are generally NP-hard, reflecting the computational challenge of this approach.

\textbf{The LP and Planet relaxation.} By relaxing the binary variables in \eqref{eq:mip_formulation_k} to $\delta_j^{(i)} \in [0, 1], j\in\mathcal{I}^{(i)}, i\in[L-1]$, we can get the LP relaxation formulation. By replacing the constraints in \eqref{eq:mip_formulation_i}, \eqref{eq:mip_formulation_j}, \eqref{eq:mip_formulation_k} with 
\begin{equation}\label{eq:planet_relaxation}
    \hat{z}_j^{(i)} \leq \frac{u_j^{(i)}}{u_j^{(i)} - l_j^{(i)}}(z_j^{(i)} - l_j^{(i)}); \quad j\in\mathcal{I}^{(i)}, i\in[L-1]
\end{equation}
we can eliminate the $\delta$ variables and get the well-known Planet relaxation formulation \cite{ehlers2017formal}. Both of these two relaxations are solvable in polynomial time to yield lower bounds. 

\subsection{Formulating Bounding Hyperplanes in Linear Bound Propagation}
\label{appendix:defining_bounding_hyperplanes}

In a feedforward network, $\underline{\mA}^{(i)},\overline{\mA}^{(i)}, \underline{\vc}^{(i)}$ and $ \overline{\vc}^{(i)}$ must be derived for every linear layer preceding an activation layer, as well as the final layer of the network. In order to derive the hyperplane coefficients ($\underline{\mA}^{(i)}/\overline{\mA}^{(i)}$) and biases ($\underline{\vc}^{(i)}/\overline{\vc}^{(i)}$), at this $i^\textit{th}$ layer, all preceding activation layers must have already had their inputs bounded. The following lemma describes how a ReLU activation layer may be relaxed which will be useful for defining bounding hyperplanes, $\underline{\mA}^{(i)},\overline{\mA}^{(i)}, \underline{\vc}^{(i)}$ and $ \overline{\vc}^{(i)}$ . 

\begin{lemma}\label{lemma:crown}
(Relaxation of a ReLU layer in CROWN). Given the lower and upper bounds of $\vz_j^{(i-1)}$, denoted as $\vl_j^{(i-1)}$ and $\vu_j^{(i-1)}$, respectively, the linear layer proceeding the ReLU activation layer may be lower-bounded element-wise by the following inequality: 
\begin{equation}
    \vz^{(i)} = \mW^{(i)}\sigma(\vz^{(i-1)}) \geq \mW^{(i)}\mD^{(i-1)} \vz^{(i-1)} + \mW^{(i)}\underline{\vb}^{(i-1)}
\end{equation}
where $\mD^{(i-1)}$ is a diagonal matrix with shape $\mathbb{R}^{n_{i-1}\times n_{i-1}}$ whose off-diagonal entries are 0, and on-diagonal entries are defined as:
\begin{equation}\label{eq:relu_coefficients}
    \mD_{j,j}^{(i-1)} := \begin{cases}
        1, & \vl_j^{(i-1)} \geq 0 \\
        0, & \vu_j^{(i-1)} \leq 0 \\
        \bm{\alpha}_j^{(i-1)}, & \vl_j^{(i-1)} < 0 < \vu_j^{(i-1)} \text{ and } \neur{\mW} \geq 0 \\
        \frac{\vu_j^{(i-1)}}{\vu_j^{(i-1)} - \vl_j^{(i-1)}}, & \vl_j^{(i-1)} < 0 < \vu_j^{(i-1)} \text{ and } \neur{\mW} < 0
    \end{cases}
\end{equation}
and $\underline{\vb}_j^{(i-1)}$ is a vector with shape $\mathbb{R}^{n_{i-1}}$ whose elements are defined as:
\begin{equation}\label{eq:relu_offsets}
    \underline{\vb}_j^{(i-1)} := \begin{cases}
        0, & \vl_j^{(i-1)} > 0 \text{ or } \vu_j^{(i-1)} \leq 0 \\
        0, & \vl_j^{(i-1)} < 0 < \vu_j^{(i-1)} \text{ and } \neur{\mW} \geq 0 \\
        -\frac{\vu_j^{(i-1)}\vl_j^{(i-1)}}{\vu_j^{(i-1)} - \vl_j^{(i-1)}}, & \vl_j^{(i-1)} < 0 < \vu_j^{(i-1)} \text{ and } \neur{\mW} < 0
    \end{cases}
\end{equation}

In the above definitions, $\bm{\alpha}_j^{(i-1)}$ is a parameter in range $[0, 1]$ and may be fixed or optimized as in \cite{xu2020fast}. 

\end{lemma}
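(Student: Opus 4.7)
The plan is to establish the inequality element-wise: for each output index $k$, expand $\vz^{(i)}_k = \sum_j \mW^{(i)}_{k,j}\,\sigma(\vz^{(i-1)}_j)$ and lower-bound each summand separately. The sign of the scalar coefficient $\mW^{(i)}_{k,j}$ dictates whether we need a \emph{lower} linear bound or an \emph{upper} linear bound on the scalar $\sigma(\vz^{(i-1)}_j)$: if $\mW^{(i)}_{k,j} \ge 0$, multiplying by a lower bound on $\sigma$ preserves the inequality, and the opposite holds when $\mW^{(i)}_{k,j} < 0$. This sign-based selection is exactly what the definition of $\mD^{(i-1)}$ encodes via the case split on $\neur{\mW}$.

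The next step is to produce the required per-neuron linear bounds on $\sigma$ on the interval $[\vl_j^{(i-1)}, \vu_j^{(i-1)}]$. I would break into three cases. In the always-active case ($\vl_j^{(i-1)} \geq 0$), $\sigma$ is the identity on the interval, so $\sigma(z)=z$ gives the exact bound (coefficient $1$, offset $0$). In the always-inactive case ($\vu_j^{(i-1)} \leq 0$), $\sigma \equiv 0$, giving coefficient $0$ and offset $0$. For the unstable case ($\vl_j^{(i-1)} < 0 < \vu_j^{(i-1)}$), I use the Planet/triangle relaxation~\cite{ehlers2017formal}: any linear function $\alpha z$ with $\alpha \in [0,1]$ lies below $\sigma(z)$ (lower bound), while the chord through the endpoints $(\vl,0)$ and $(\vu,\vu)$, namely $\tfrac{\vu}{\vu-\vl}(z-\vl)$, lies above $\sigma(z)$ by convexity-on-the-right/zero-on-the-left of the ReLU. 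These yield exactly the coefficient/offset pairs in \eqref{eq:relu_coefficients} and \eqref{eq:relu_offsets}.

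Combining the two steps, for each $(k,j)$ the appropriate scalar inequality on $\sigma(\vz^{(i-1)}_j)$ is multiplied by $\mW^{(i)}_{k,j}$ (with the sign choice picking the lower-bound coefficient when $\mW^{(i)}_{k,j}\ge 0$ and the upper-bound coefficient plus its offset when $\mW^{(i)}_{k,j}<0$). Summing over $j$ and recognizing that $\sum_j \mW^{(i)}_{k,j}\mD^{(i-1)}_{j,j}\vz^{(i-1)}_j = (\mW^{(i)}\mD^{(i-1)}\vz^{(i-1)})_k$ and similarly for the offset term gives the claimed vector inequality $\vz^{(i)} \ge \mW^{(i)}\mD^{(i-1)}\vz^{(i-1)} + \mW^{(i)}\underline{\vb}^{(i-1)}$.

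The main subtlety — and the place I expect the cleanest bookkeeping is needed — is the sign-dependent coupling between the per-neuron relaxation and the weight entry. Because the same ReLU unit $j$ may feed into multiple outputs $k$ with different signs, strictly speaking the matrix $\mD^{(i-1)}$ should be interpreted as depending on the output index (as in standard CROWN); one must make clear that the notation $\neur{\mW}$ in \eqref{eq:relu_coefficients}–\eqref{eq:relu_offsets} refers to the weight entry currently multiplying neuron $j$'s relaxation, so that the inequality holds row by row. Once this indexing convention is made explicit, the remainder of the argument reduces to verifying the three elementary scalar bounds on $\sigma$, which is straightforward from convexity and the endpoints.
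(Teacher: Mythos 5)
Your proposal is correct and follows essentially the same route as the paper's proof: establish the scalar sandwich $\bm{\alpha}_j z \le \sigma(z) \le \tfrac{\vu_j}{\vu_j - \vl_j}(z - \vl_j)$ on the unstable interval (plus the trivial stable cases), then select the lower or upper relaxation per entry according to the sign of the multiplying weight, and sum. Your remark that $\mD^{(i-1)}$ implicitly depends on the output row because the sign test is on the individual weight entry is a fair clarification of the paper's slightly abusive notation, but it does not change the argument.
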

\begin{proof}
    For the $j^\text{th}$ ReLU at the $(i-1)^\text{th}$ layer, it's result may be bounded as follows:
    
    \begin{equation}\label{eq:relu_bound}
        \bm{\alpha}_j^{(i-1)}\vz_j^{(i-1)} \leq \sigma(\vz_j^{(i-1)}) \leq \frac{\vu_j^{(i-1)}}{\vu_j^{(i-1)} - \vl_j^{(i-1)}}(\vz_j^{(i-1)} - \vl_j^{(i-1)}).
    \end{equation}
    
    The right-hand side holds as this is the Planet-relaxation defined by equation~\eqref{eq:planet_relaxation}. For the left-hand side, we first consider when $\vz_j^{(i-1)}\leq 0$. For every input in this range, the result of the ReLU is $\sigma(\vz_j^{(i-1)}) = 0$. $\bm{\alpha}_j^{(i-1)}\neur{\vz}$ forms a line for which inputs in this range will always produce a non-positive result when $\bm{\alpha}_j^{(i-1)}\in[0,1]$. For inputs in the range $\vz_j^{(i-1)}\geq 0$, the result of the ReLU is $\sigma(\vz_j^{(i-1)})=\vz_j^{(i-1)}$. This result is never exceeded by $\bm{\alpha}_j^{(i-1)}\vz_j^{(i-1)}$ when $\bm{\alpha}_j^{(i-1)}\in[0,1]$.

    When the result, $\sigma(\vz_j^{(i-1)})$, is multiplied by a scalar such as $\neur{\mW}$, a valid lower-bound of $\neur{\mW}\sigma(\vz_j^{(i-1)})$ requires a lower bound on $\sigma(\vz_j^{(i-1)})$ when $\neur{\mW} \geq 0$, and an upper bound on $\sigma(\vz_j^{(i-1)})$ when $\neur{\mW} < 0$. Such lower and upper bounds are indeed produced by $\mD_{j,j}^{(i-1)}$ and $\underline{\vb}_j^{(i-1)}$, whose definitions are derived from the inequality displayed in equation~\eqref{eq:relu_bound}. This concludes the proof.
\end{proof}

Lemma \ref{lemma:crown} suggests a recursive approach to bounding a neural network as the bounds at the $i^\text{th}$ layer depends on the bounds of the layer preceding it due to the dependence on $\vl_j^{(i-1)}$ and $\vu_j^{(i-1)}$. This is indeed the case, and we may define our hyperplane coefficients as $\underline{\mA}^{(i)}=\mathbf{\Omega}^{(i, 1)}\mW^{(1)}$ where
\begin{equation}
    \mathbf{\Omega}^{(i,k)} := \begin{cases}
        \mW^{(i)}\mD^{(i-1)}\mathbf{\Omega}^{(i-1)}, & i>k \\
        \mI, & i=k
    \end{cases}
\end{equation}

To collect the remaining terms, we set $\underline{\vc}^{(i)}=\sum_{k=2}^i \left( \bm{\Omega}^{(i,k)}\mW^{(k)}\underline{\vb}^{(k-1)} \right) + \sum_{k=1}^i\left( \bm{\Omega}^{(i,k)}\vb^{(k)} \right)$. To obtain an upper bound, Lemma \ref{lemma:crown} and its proof may be adjusted accordingly where appearances of the inequalities $\mW^{(i)} \geq 0$ and $\mW^{(i)} < 0$ are flipped. In doing so, we may repeat this recursive process in order to obtain $\overline{\mA}^{(i)}$ and $\overline{\vc}^{(i)}$.

Though we have described how a ReLU feedforward network may be bounded, appropriately updating the definitions of $\mD^{(i)}$ and $\underline{\vb}^{(i)}$ allows feedforward networks with general activation functions (that act element-wise) to be bounded. Such a general formulation is described in \cite{zhang2018efficient} that is similar to the template described above, and goes into further detail on how this formulation may be extended to \textit{quadratic} bound propagation. 

\subsection{Verification Properties Given as Boolean Logical Formulae}
\label{formulation:complex_properties}

\begin{equation}\label{eq:nn_verification_goal}
    f(\vx) \ge 0, \qquad \forall \vx \in \gX
\end{equation}

Equation \eqref{eq:nn_verification_goal} is referred to as the \textit{canonical} formulation \cite{bunel2018unified, bunel2020branch} as it is general enough to encompass any property involving boolean logical formulas over linear inequalities. To make this idea clear, consider a neural network whose output dimension is $n_L = 3$. We may wish to verify that the output corresponding to the true label, $\vy_0$, is always greater than all other outputs, $\vy_1$ and $\vy_2$, for every input in $\gX$. Explicitly, we want to verify:

\begin{equation} \label{eq:boolean_property}
    (\vy_0 > \vy_1) \land (\vy_0 > \vy_2), \quad \forall \vx \in\gX.
\end{equation}

To represent this formula, we may first introduce a new linear layer whose weight matrix is defined as $\mC$, sometimes referred to as a \textit{specification matrix}:

\begin{equation}
    \mC := \begin{bmatrix}1 & -1 & 0\\1 & 0 & -1\end{bmatrix}.
\end{equation}

This specification matrix allows the output neurons to be compared against one another. Applying this matrix to the output vector of the network, denoted as $\vy := [\vy_0, \vy_1, \vy_2]^\top$, results in the following new output:

\begin{equation}
    \begin{bmatrix}\vy_0 - \vy_1 \\ \vy_0 - \vy_2\end{bmatrix} = \mC\vy.
\end{equation}

Next, we append a \textit{MinPool} layer to the network which returns the minimum of these two results. Now, we may define a new neural network, $f'(\vx)$, which appends the aforementioned \textit{MinPool} and linear layer to $f(\vx)$, resulting in the final output, $\min\{\vy_0 - \vy_1, \vy_0 - \vy_2\}$. It should now be apparent that if the property described by equation~\eqref{eq:boolean_property} can be verified as true, then the following must hold:

\begin{equation}
\min_{x\in\gX} f'(\vx) > 0.
\end{equation}

We may now proceed to produce a lower bound, $\underline{f}'(\vx)$, and only until $\underline{f}'(\vx) > 0$, can we formally state that the network will correctly classify all inputs in $\gX$. In the scenario that we instead want to verify the property $(\vy_0 > \vy_1 \lor \vy_0 > \vy_2)$ for all inputs in $\gX$, we may simply redefine $f'(\vx)$ by replacing the \textit{MinPool} layer with a \textit{MaxPool} layer which would return $\max\{ \vy_0 - \vy_1, \vy_0 - \vy_2 \}$. 

Finally, suppose we want to verify that the output corresponding to the true label is greater than all other outputs with some additional margin $m$ such that $(\vy_0 > \vy_1 + m \land \vy_0 > \vy_2 + m), \forall \vx \in\gX$ where $m > 0$ (i.e. the network should always classify the first label with additional relative confidence defined by $m$). Then the only modification is to also incorporate a bias vector, $\vt := [-m, -m]^\top$, at the final linear layer where $\mC$ was defined. This bias vector is sometimes referred to as the \textit{threshold}. This results in the final output, $f'(\vx) = \min\{\vy_0 -\vy_1 - m, \vy_0 - \vy_2 - m\}$. Hence, equation \eqref{eq:nn_verification_goal} is general enough to encompass more complex queries.

\section{Proofs}
\label{appendix:proofs}

%\gh{Duo please check Lemma B.1 and Theorem 3.1}

\subsection{Statement and proof of Lemma~\ref{lemma:dual_norm}}
\label{prooflemma:dual_norm}

We first introduce a preparatory result that establishes a closed-form solution for a linear optimization problem over a hyper-rectangular feasible set. 

\begin{lemma}[Dual Norm Concretization for Hyper-Rectangular Domains]
    \label{lemma:dual_norm}

Let $\bm\epsilon \in \mathbb{R}^n$ with $\epsilon_i > 0$ for all $i \in [n]$, and define the hyper-rectangular domain:
$$
\mathcal{X} = \{\vx \in \mathbb{R}^n : \hat{\vx} - \bm\epsilon \leq \vx \leq \hat{\vx} + \bm\epsilon\}.
$$
For any vector $\vv \in \mathbb{R}^n$, the following hold:
$$
\begin{aligned}
\max_{\vx \in \mathcal{X}} \, \vv^\top \vx &= \vv^\top \hat{\vx} + |\vv|^\top \epsilon, \\
\min_{\vx \in \mathcal{X}} \, \vv^\top \rx &= \vv^\top \hat{\vx} - |\vv|^\top \epsilon,
\end{aligned}
$$
where $|\vv| \in \mathbb{R}^n_+$ denotes the component-wise absolute value of $\vv$.
\end{lemma}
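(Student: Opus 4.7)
The plan is to reduce the optimization over the hyper-rectangle to $n$ independent one-dimensional problems by exploiting the separability of both the linear objective and the box constraints. First I would perform the change of variables $\vx = \hat{\vx} + \vy$, which maps $\mathcal{X}$ bijectively to the symmetric box $\mathcal{B} = \{\vy \in \mathbb{R}^n : -\bm\epsilon \le \vy \le \bm\epsilon\}$, so that $\vv^\top \vx = \vv^\top \hat{\vx} + \vv^\top \vy$. Since $\vv^\top \hat{\vx}$ is constant, it suffices to evaluate $\max_{\vy \in \mathcal{B}} \vv^\top \vy$ and $\min_{\vy \in \mathcal{B}} \vv^\top \vy$.

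Next, I would observe that both the objective $\vv^\top \vy = \sum_i v_i y_i$ and the constraint set $\mathcal{B} = \prod_i [-\epsilon_i, \epsilon_i]$ are separable across coordinates, so the optimization decomposes into $n$ scalar subproblems. For each coordinate $i$, the maximizer of $v_i y_i$ over $y_i \in [-\epsilon_i, \epsilon_i]$ is $y_i^\star = \operatorname{sign}(v_i)\,\epsilon_i$, yielding the optimal value $|v_i|\,\epsilon_i$; the minimizer is $y_i^\star = -\operatorname{sign}(v_i)\,\epsilon_i$, with optimal value $-|v_i|\,\epsilon_i$. (Either choice is optimal when $v_i = 0$.) Summing across coordinates gives $\max_{\vy \in \mathcal{B}} \vv^\top \vy = |\vv|^\top \bm\epsilon$ and $\min_{\vy \in \mathcal{B}} \vv^\top \vy = -|\vv|^\top \bm\epsilon$, from which the two claimed identities follow by adding back $\vv^\top \hat{\vx}$.

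There is no real obstacle here: the argument is essentially the standard fact that the support function of an $\ell_\infty$ ball equals the (weighted) $\ell_1$ norm of the linear functional. The only thing to be careful about is the sign-case bookkeeping, which the $|v_i|$ notation handles uniformly, and the degenerate case $v_i = 0$, where the corresponding coordinate contributes zero to the optimum regardless of $y_i$. If desired, the argument can be phrased even more succinctly by noting $\vv^\top \vy \le \sum_i |v_i|\,|y_i| \le \sum_i |v_i|\,\epsilon_i = |\vv|^\top \bm\epsilon$ with equality attained at $\vy^\star = \operatorname{sign}(\vv) \odot \bm\epsilon$, and symmetrically for the minimum.
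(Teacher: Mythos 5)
Your proof is correct and follows essentially the same route as the paper: the paper reparameterizes the box as $\hat{\vx} + \bm\epsilon \circ \vx$ with $\|\vx\|_\infty \le 1$ and invokes the dual norm of the $\ell_\infty$-norm, while you translate to a symmetric box and carry out the coordinate-wise maximization explicitly — these are the same argument, with yours merely unpacking the dual-norm step. No gaps; the sign and $v_i=0$ cases are handled properly.
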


\begin{proof}
We begin by rewriting the feasible set as
\begin{align*}
\mathcal X = \{\vx \in \mathbb{R}^n : \hat{\vx}-\bm\epsilon\ \le \vx \le \hat{\vx}+\bm\epsilon\}
&=\{\hat{\vx}+\bm\epsilon\circ\vx\ : \vx\in\mathbb{R}^n,\;\|\bm x\|_\infty\leq 1\},
\end{align*}
where $\circ$ denotes the Hadamard (component-wise) product. Then, the maximization problem becomes 
$$
\begin{aligned}
\max_{\vx \in \mathcal{X}} \, \vv^\top \vx &= \max_{\|\vx\|_\infty\leq 1} \vv^\top\hat{\vx} + (\vv \circ\bm\epsilon)^\top\vx\\
&= \vv^\top\hat{\vx}  + \sum_{i=1}^n |v_i|\epsilon_i=\vv^\top \hat{\vx} + |\vv|^\top \epsilon,
\end{aligned}
$$
where the final equality follows from the definition of the dual norm of the $\infty$-norm. The derivation for the minimization problem follows analogously by replacing the maximization with a minimization, which flips the sign of $|\vv|^\top \epsilon$. Thus, the claim follows. 
\end{proof}

\subsection{Proof of Theorem~\ref{thm:comprehensive_clipping_single_constraint}}
\label{proofthm:comprehensive_clipping_single_constraint}
\begin{proof}
The problem \eqref{eq:bound_enhancement_lp_primal} is a linear program over a compact convex set $\mathcal{X}$ with an additional linear constraint. Let feasible set $\mathcal{F} = \{\vx \in \mathcal{X} : \vg^\top\vx + h \leq 0\}$ be the feasible set for \eqref{eq:bound_enhancement_lp_primal}.

Case 1: $\mathcal{F} = \emptyset$, then by definition, $L^\star = +\infty$. In this case, it implies that for all $\vx \in \mathcal{X}$, $\vg^\top\vx + h > 0$.

Case 2: $\mathcal{F}$ is non-empty.
Since $\mathcal{X}$ is compact and $\mathcal{F}$ is a closed non-empty subset of $\mathcal{X}$ (as it's the intersection of $\mathcal{X}$ with a closed half-space), $\mathcal{F}$ is also compact. The objective function $\va^\top\vx + c$ is continuous. Therefore, $L^\star$ is finite and attained.
We introduce the Lagrangian for problem \eqref{eq:bound_enhancement_lp_primal} by partially dualizing the constraint $\vg^\top\vx + h \leq 0$:
$$\mathcal{L}(\vx, \beta) = (\va^\top\vx + c) + \beta (\vg^\top\vx + h) \quad \text{for } \vx \in \mathcal{X}, \beta \ge 0.$$
The primal problem can be written as:
$$L^\star = \min_{\vx \in \mathcal{X}} \sup_{\beta \ge 0} \mathcal{L}(\vx, \beta)$$

To swap the $\min$ and $\sup$, we can apply Sion's Minimax Theorem. Let $K = \mathcal{X}$ (a compact convex set in $\mathbb{R}^n$) and $M = \{\beta \in \mathbb{R} : \beta \ge 0\}$ (a closed convex set in $\mathbb{R}$).
The function $\mathcal{L}(\vx, \beta)$ has the following properties:
1)  For any fixed $\beta \in M$, $\mathcal{L}(\vx, \beta) = (\va + \beta\vg)^\top\vx + (c + \beta h)$ is linear in $\vx$, and thus convex and continuous on $K$.
2)  For any fixed $\vx \in K$, $\mathcal{L}(\vx, \beta) = (\vg^\top\vx + h)\beta + (\va^\top\vx + c)$ is linear in $\beta$, and thus concave and continuous on $M$.
Since these conditions are met, Sion's Minimax Theorem states:
$$\min_{\vx \in K} \sup_{\beta \in M} \mathcal{L}(\vx, \beta) = \sup_{\beta \in M} \min_{\vx \in K} \mathcal{L}(\vx, \beta)$$
Therefore,
$$L^\star = \sup_{\beta \ge 0} \min_{\vx \in \mathcal{X}} \mathcal{L}(\vx, \beta)$$
Since $\mathcal{F}$ is non-empty, $L^\star$ is finite, implying that the supremum is attained (or is the limit if approached at infinity, but $D(\beta)$ is continuous), so we can write $\max$ instead of $\sup$. Let $d(\beta) = \min_{\vx \in \mathcal{X}} \mathcal{L}(\vx, \beta)$.
$$
\begin{aligned}
d(\beta) &= \min_{\vx \in \mathcal{X}} \left( \va^\top\vx + c + \beta \vg^\top\vx + \beta h \right) \\
&= \min_{\vx \in \mathcal{X}} \left( (\va + \beta \vg)^\top \vx \right) + c + \beta h
\end{aligned}
$$The inner minimization is optimizing a linear function $(\va + \beta \vg)^\top \vx$ over the hyper-rectangle $\mathcal{X}$. Using Lemma~\ref{lemma:dual_norm} (with $\vv = \va + \beta \vg$):$$
\min_{\vx \in \mathcal{X}} \left( (\va + \beta \vg)^\top \vx \right) = (\va + \beta \vg)^\top \hat{\vx} - |\va + \beta \vg|^\top \bm\epsilon
$$
where $|\va + \beta \vg|^\top \bm\epsilon = \sum_{j=1}^n |a_j + \beta g_j| \epsilon_j$.
Substituting this into the expression for $d(\beta)$, we get:
$$d(\beta) = (\va + \beta \vg)^\top \hat{\vx} - |\va + \beta \vg|^\top \bm\epsilon + c + \beta h$$
This is exactly the dual objective function $D(\beta)$ defined in the theorem statement \eqref{eq:comprehensive_clipping_dual_objective}.
Thus, we have established that $L^\star = \max_{\beta \ge 0} D(\beta)$.
Then we analyze the properties of the dual objective $D(\beta)$:
\begin{enumerate}
    \item Concavity: The dual function $d(\beta)$ is always concave, as it is the pointwise minimum of a family of functions that are affine in $\beta$ (indexed by $\vx \in \mathcal{X}$).
    \item Piecewise-Linearity: The term $-|\va + \beta \vg|^\top \bm\epsilon = -\sum_{j=1}^n |\va_j + \beta \vg_j| \bm\epsilon_j$ involves the absolute value function. Each term $-|\va_j + \beta \vg_j| \bm\epsilon_j$ is concave and piecewise-linear, with a breakpoint (a point where the slope changes) at $\beta = -\va_j/\vg_j$ (if $\vg_j \neq 0$). The other terms in $D(\beta)$ are linear in $\beta$. Since $D(\beta)$ is a sum of concave piecewise-linear functions and linear functions, it is itself concave and piecewise-linear. The breakpoints of $D(\beta)$ are the collection of all values $\beta = -\va_j/\vg_j \ge 0$ where $\vg_j \neq 0$.
\end{enumerate}

Thus,
    we need to maximize the concave, piecewise-linear function $D(\beta)$ over the interval $[0, \infty)$. Since $D(\beta)$ is concave, its maximum over a convex set occurs either at a point where the super-gradient contains zero, or potentially at the boundary point $\beta=0$. Because $D(\beta)$ is piecewise-linear, its super-gradient $\partial D(\beta)$ is constant within the linear segments between breakpoints. At a breakpoint $\beta_k$, the super-gradient is an interval $[\partial D(\beta_k^-), \partial D(\beta_k^+)]$ (the range between the left and right derivatives). The maximum occurs at a point $\beta^\star$ such that $0 \in \partial D(\beta^\star)$. This $\beta^\star$ must be either $\beta=0$ (if the derivative is non-positive for $\beta > 0$) or one of the breakpoints $\beta_k > 0$ where the derivative changes sign from positive to non-positive (i.e., $0 \in [\partial D(\beta_k^-), \partial D(\beta_k^+)]$ ), or the function increases indefinitely (which corresponds to an infeasible or unbounded primal, but we assumed feasibility and the primal is bounded over the compact $\mathcal{X}$, so $L^\star$ is finite, thus the dual maximum is finite).
    
Therefore, the maximum $L^\star$ can be found non-iteratively by:
    \begin{enumerate}
        \item Identifying all non-negative breakpoints $\beta_k = -\va_j/\vg_j \ge 0$.
    \item Sorting these unique breakpoints $0 = \beta_0 < \beta_1 < \dots < \beta_p$.
    \item Evaluating the derivative (slope) of $D(\beta)$ within each segment $(\beta_k, \beta_{k+1})$ and potentially at $\beta=0$.
    \item Finding the point $\beta^\star$ (either $0$ or some $\beta_k$) where the slope transitions from non-negative to non-positive. The value $D(\beta^\star)$ is the maximum $L^\star$.
    \end{enumerate}
    
    This process involves a finite number of analytical calculations (evaluating slopes and function values at breakpoints) rather than iterative optimization, justifying the claim of efficiency.
\end{proof}

\subsection{Equivalence to the Continuous Knapsack Problem}
\label{sec:knapsack}

The primal problem in \eqref{eq:bound_enhancement_lp_primal} is mathematically equivalent to the continuous (or fractional) knapsack problem. We can demonstrate this equivalence through a change of variables. Let $x_0 = \hat{x} - \epsilon$ be the lower bound and $x_1 = \hat{x} + \epsilon$ be the upper bound. We transform $x \in [x_0, x_1]$ to $y \in [0, 1]^n$ using:
$$ y_i = \frac{x_i - x_{0,i}}{x_{1,i} - x_{0,i}} = \frac{x_i - (\hat{x}_i - \epsilon_i)}{2\epsilon_i} \quad \implies \quad x_i = x_{0,i} + 2\epsilon_i y_i $$
Substituting this into the primal problem $\min_{x \in \mathcal{X}} \{a^\top x + c \mid g^\top x + h \le 0\}$ gives:
$$ \min_{y \in [0, 1]^n} \quad a^\top (x_0 + 2(\epsilon \odot y)) + c \quad \text{s.t.} \quad g^\top (x_0 + 2(\epsilon \odot y)) + h \le 0 $$
$$ \min_{y \in [0, 1]^n} \quad (2\epsilon \odot a)^\top y + (a^\top x_0 + c) \quad \text{s.t.} \quad (2\epsilon \odot g)^\top y \le -(g^\top x_0 + h) $$
Let $r = -2\epsilon \odot a$, $s = 2\epsilon \odot g$, and $t = -(g^\top (\hat{x} - \epsilon) + h)$. The problem becomes the standard knapsack form:
$$ \max_{y \in [0, 1]^n} \quad r^\top y \quad \text{s.t.} \quad s^\top y \le t $$
This problem, even with negative coefficients, can be solved with a greedy algorithm~\citep{knapsack}. The efficiency ratios $r_j / s_j$ used for sorting in the greedy algorithm are:
$$ \frac{r_j}{s_j} = \frac{-2\epsilon_j a_j}{2\epsilon_j g_j} = -a_j / g_j $$
These ratios are identical to the breakpoints in our dual objective function $D(\beta)$. This confirms a line-by-line correspondence: our dual optimization algorithm, which sorts breakpoints to find where the super-gradient contains zero, is algorithmically equivalent to the greedy knapsack algorithm, which sorts by efficiency ratios. Both have the same $\mathcal{O}(n \log n)$ time complexity.

\subsection{Proof of Theorem~\ref{thm:reducing_input_exact}}
\label{proofthm:reducing_input_general}

\paragraph{Direct Intuitive Proof}
A  more direct and intuitive proof for Theorem~\ref{thm:reducing_input_exact} exists. We wish to solve $\overline{x}_{i}^{(new)} = \max_{x \in \mathcal{X}} \{x_i \mid a^\top x + c \le 0\}$. Since only $x_i$ is in the objective, we can set all other variables $x_j$ (for $j \neq i$) to values within their box domain $[\hat{x}_j - \epsilon_j, \hat{x}_j + \epsilon_j]$ that make the constraint $a^\top x + c \le 0$ as loose as possible, thereby maximizing the ``budget'' for $x_i$.

To loosen the constraint, we must minimize the term $\sum_{j \neq i} a_j x_j$. This is achieved by setting each $x_j$ to its extreme:
\begin{itemize}
    \item If $a_j > 0$, we set $x_j$ to its lower bound, $x_j = \hat{x}_j - \epsilon_j$.
    \item If $a_j < 0$, we set $x_j$ to its upper bound, $x_j = \hat{x}_j + \epsilon_j$.
\end{itemize}
This worst-case minimum for the sum can be written compactly as $\sum_{j \neq i} (a_j \hat{x}_j - |a_j| \epsilon_j)$.
We substitute this minimum sum back into the constraint:
$$ a_i x_i + \sum_{j \neq i} (a_j \hat{x}_j - |a_j| \epsilon_j) + c \le 0 $$
Assuming $a_i > 0$, we can solve for $x_i$ to find its new upper bound:
$$ x_i \le \frac{ -\sum_{j \neq i} a_j \hat{x}_j + \sum_{j \neq i} |a_j| \epsilon_j - c }{a_i} $$
This value is $x_i^{(\text{clip})}$, as defined in Theorem~\ref{thm:reducing_input_exact}. The final bound $\overline{x}_{i}^{(new)}$ is the minimum of this value and the original upper bound $\overline{x}_i$. The case for $a_i < 0$ (updating the lower bound $\underline{x}_i$) follows analogously. A detailed proof is shown below:

\begin{proof}

We consider the upper bound; the lower bound can be derived analogously. First, we rewrite the input region as 
\begin{align*}
\mathcal{X} = \{\vx \in \mathbb{R}^n : \underline{\vx} \le \vx \le \overline{\vx}\}&=\{\vx \in \mathbb{R}^n : \hat{\vx}-\bm\epsilon\ \le \vx \le \hat{\vx}+\bm\epsilon\},
\end{align*}
where $\hat\vx=\frac{\overline\vx+\underline\vx}{2}$ and $\bm\epsilon=\frac{\overline\vx-\underline\vx}{2}$. Suppose the linear inequality constraint is given by $\bm a^\top\bm x + b \leq 0$, where $\bm a\in\mathbb R^n$ and $b\in\mathbb R$. 
We note that the intersection $\mathcal X\cap\{\vx\in\mathbb R^n:\va^\top \vx+ b\leq 0\}$ is nonempty if and only if 
$$0\geq \min_{\vx\in\mathcal X} \va^\top\vx+b = \va^\top\hat{\vx} + b - \sum_{i=1}^n |a_i|\epsilon_i$$
by Lemma \ref{lemma:dual_norm}. 
Henceforth, we will assume this inequality is satisfied. 

We now compute 
\begin{align*}
\overline{x}_{i}^{(new)}&=\max_{\vx\in\mathcal X}\{\bm e_i^\top\vx:\bm a^\top\vx + b \leq 0\}\\
&=\max_{\vx\in\mathcal X}\min_{\lambda\in\mathbb R_+}\bm e_i^\top{\vx} - \lambda(\va^\top{\vx} + b)\\
&=\min_{\lambda\in\mathbb R_+}\max_{\vx\in\mathcal X}\bm e_i^\top{\vx} - \lambda(\va^\top{\vx} + b)\\
&=\min_{\lambda\in\mathbb R_+}\bm e_i^\top\hat{\vx} - \lambda(\va^\top\hat{\vx} + b) + |\bm e_i-\lambda \va|^\top\bm\epsilon,
\end{align*}
where third line follows from Sion's minimax theorem since $\mathcal X$ is a compact set, and the final line follows from Lemma \ref{lemma:dual_norm}. 

Rearranging the term yields 
\begin{align}
\label{eq:dual_min_lambda}
\overline{x}_{i}^{(new)}
=\min_{\lambda\in\mathbb R_+}\bm e_i^\top\hat{\vx} + \lambda\left(\sum_{j\neq i}\epsilon_j|a_j|-\va^\top\hat{\vx} - b\right) + \epsilon_i|1-\lambda a_i|.
\end{align}
We now analyze three cases for $a_i$ and derive a closed-form expression for the scalar minimization. Recall that we have assumed $0\geq \va^\top\hat{\vx} + c - \sum_{i=1}^n a_i\epsilon_i$ or equivalently $\sum_{j\neq i}\epsilon_j|a_j|-\va^\top\hat{\vx} - b\geq - \epsilon_i|a_i|$. We have:
\begin{enumerate}
\item \underline{$a_i>0$:} The function $ \epsilon_i|1-\lambda a_i|$ attains its minimum at $\lambda=\tfrac{1}{a_i}>0$, with slope $\epsilon_ia_i$ on the right and $-\epsilon_ia_i$ on the left. Thus:
\begin{itemize}
\item If $|\sum_{j\neq i}\epsilon_j|a_j|-\va^\top\hat{\vx} - b|\leq \epsilon_ia_i$ then the minimum of \eqref{eq:dual_min_lambda} is attained at $\lambda^\star=\tfrac{1}{a_i}$ and $$
\overline{x}_{i}^{(new)}=\hat x_{i} + \frac{\sum_{j\neq i}\epsilon_j|a_j|-\va^\top\hat{\vx} - b}{a_i}=\frac{\sum_{j\neq i}\epsilon_j|a_j|-\sum_{j\neq i}a_j\hat{x}_j - b}{a_i}.$$
\item  If $\sum_{j\neq i}\epsilon_j|a_j|-\va^\top\hat{\vx} - b> \epsilon_ia_i$ then $\lambda^\star=0$ and $$\overline{x}_{i}^{(new)}=\hat x_{i} + \epsilon_i=\overline{x}_i.$$
\end{itemize}
\item \underline{$a_i<0$:} In this case, the minimizing $\lambda=\tfrac{1}{a_i}< 0$ is infeasible to \eqref{eq:dual_min_lambda}. Hence, $\lambda^\star=0$ and $\overline{x}_{i}^{(new)}=\overline{x}_i$.
\item \underline{$a_i=0$:} Here, the objective function in \eqref{eq:dual_min_lambda} becomes affine in $\lambda$. Since
 $\sum_{j\neq i}\epsilon_j|a_j|-\va^\top\hat{\vx} - b\geq 0$, we again have $\lambda^\star=0$ and $\overline{x}_{i}^{(new)}=\overline{x}_i$.
\end{enumerate}

In summary, if
$\bm\epsilon^\top|\va|-\va^\top\hat{\vx} - b<0$ then $\emptyset=\mathcal X\cap\{\vx\in\mathbb R^n:\va^\top \vx+ c\leq 0\}$; otherwise, 
\begin{equation*}
\overline{x}_{i}^{(new)}=\begin{cases} \displaystyle
\min\left\{\frac{\sum_{j\neq i}\epsilon_j|a_j|-\sum_{j\neq i}a_j\hat{x}_j - b}{a_i},\overline x_i \right\}& \text{if } a_i>0 \\
\overline x_i & \text{if } a_i\leq 0.
\end{cases}
\end{equation*}
This completes the proof. 
\end{proof}

\subsection{Proof of Order Dependency}
\label{sec:proof_order_dependency}
Here we briefly introduce Theorem~\ref{thm:order_dependency} to demonstrate why sequential processing the constraints for clipping achieving better results.

\begin{theorem}[Order Dependency of Constraint Intersection]  
\label{thm:order_dependency}  
Let $\mathcal{X} = \{x \in \mathbb{R}^n : x_L \leq x \leq x_U\}$ be a box domain, and let $F_j = \{x \in \mathcal{X} : a_j^\top x + c_j \leq 0\}$ denote the feasible set for the $j$-th constraint. Define the full feasible set as $F = \bigcap_{j=1}^m F_j$.  

Let $\pi: \{1, \dots, m\} \to \{1, \dots, m\}$ be a $permutation$ (i.e., a reordering) of the constraints. For sequential intersection, define:  
$$
F^{(k)} = \bigcap_{t=1}^k F_{\pi(t)}, \quad k = 1, \dots, m,
$$  
with $F^{(0)} = \mathcal{X}$.  

(Order-Independent Bounds): 
   The variable-wise bounds satisfy:  
   $$
   \max_{x \in F} x_i \leq \min_{j} \max_{x \in F_j} x_i \quad \text{and} \quad \min_{x \in F} x_i \geq \max_{j} \min_{x \in F_j} x_i.
   $$  
   These bounds are independent of $\pi$ if computed via $F = \bigcap_{j=1}^m F_j$.  %\gh{these results seem obvious, no?}

(Order-Dependent Refinement): 
   If constraints are intersected sequentially (i.e., $F^{(k)}$ depends on $\pi$), then there exist permutations $\pi_1 \neq \pi_2$ such that:  
   $$
   F^{(m)}_{\pi_1} \neq F^{(m)}_{\pi_2}.
   $$  
\end{theorem}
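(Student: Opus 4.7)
My plan is to handle the two parts separately: Part 1 is a short monotonicity argument, while Part 2 requires constructing an explicit counterexample tailored to the sequential relaxed clipping operator defined by Algorithm~\ref{alg:parallel_domain_clipping} and Theorem~\ref{thm:reducing_input_exact}.

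For Part 1, the inclusion $F = \bigcap_{j=1}^m F_j \subseteq F_j$ holds for every $j$, which immediately gives $\max_{x \in F} x_i \leq \max_{x \in F_j} x_i$ and $\min_{x \in F} x_i \geq \min_{x \in F_j} x_i$. Taking $\min_j$ (respectively $\max_j$) on the right-hand sides preserves each inequality and yields the stated bounds. Because $F$ is a set-theoretic intersection, which is commutative and associative, the set $F$ itself does not depend on the ordering $\pi$, so any quantity derived from $F$ (including these bounds) is likewise $\pi$-independent.

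For Part 2, the key observation is that the sequential procedure does not maintain the exact polyhedral intersection $F^{(k)}$ but instead the axis-aligned box $B^{(k)} = T_{\pi(k)}(B^{(k-1)})$, where Theorem~\ref{thm:reducing_input_exact} guarantees that $T_j(B)$ is the tightest box containing $B \cap F_j$. Each $T_j$ is a strict relaxation: polyhedral information is discarded at every step, so the composition $T_{\pi(m)} \circ \cdots \circ T_{\pi(1)}$ is not commutative in general. My strategy is to exhibit a minimal explicit instance where two permutations produce different final boxes. Concretely, take $\mathcal{X} = [0,10]^3$ with $C_1 : x_1 + x_3 \leq 5$ and $C_2 : x_2 - x_3 \leq 0$. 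Applying $C_1$ first yields $[0,5] \times [0,10] \times [0,5]$, after which $C_2$ contracts $\overline{x}_2$ to $5$ using the just-obtained $\overline{x}_3 = 5$, giving $[0,5]^3$. In the reverse order, $C_2$ alone leaves $[0,10]^3$ unchanged (since the worst case for $\max x_2$ selects $x_3 = 10$, requiring no bound update), so the subsequent application of $C_1$ only shrinks $\overline{x}_1$ and $\overline{x}_3$ to $5$, producing $[0,5] \times [0,10] \times [0,5]$. The two final boxes differ in the second coordinate, establishing the existence of distinct permutations with distinct outputs.

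The main obstacle I expect is not conceptual but arithmetic: every application of Theorem~\ref{thm:reducing_input_exact} requires pushing each ``other'' coordinate to the extreme determined by the sign of its coefficient, and a single sign error can spuriously make the two orderings agree. To be safe, I would verify each update directly from the primal characterization $\max\{x_i : x \in B, a^\top x + c \leq 0\}$ rather than relying exclusively on the closed-form expression, and I would also note in passing that the same phenomenon can be reproduced with more than two constraints, so the counterexample is not a pathological low-dimensional artifact but reflects the generic loss of polyhedral structure inherent to box-valued relaxations.
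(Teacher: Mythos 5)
Your proposal is correct and follows the same two-part strategy as the paper's own proof: Part~1 via the trivial inclusion $F \subseteq F_j$ (identical argument), and Part~2 via an explicit counterexample. The difference lies entirely in how the counterexample is executed. The paper uses the 2D pair $F_1: x_1+x_2\le 2$ and $F_2: x_1-x_2\le 0$ and argues that one order yields $x_1\le\min(2-x_2,x_2)$ while the other yields $x_1\le 1$; this tracks symbolic, input-dependent bounds rather than the constant box bounds that the sequential clipping operator of Theorem~\ref{thm:reducing_input_exact} actually produces, and it never fixes the initial box, so it is loose as written. Your 3D example with $\mathcal{X}=[0,10]^3$, $x_1+x_3\le 5$, and $x_2-x_3\le 0$ is pushed directly through the box-update operator in both orders, and both chains of updates check out: the first order gives $[0,5]^3$ while the reverse gives $[0,5]\times[0,10]\times[0,5]$, differing in the second coordinate. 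You also make explicit a point the paper leaves implicit: as literally written, $F^{(k)}=\bigcap_{t=1}^k F_{\pi(t)}$ is a set intersection and hence order-independent, so Part~2 is only meaningful once the sequential procedure is reinterpreted as composing the tightest-box relaxations $T_{\pi(k)}$, which discard polyhedral information at each step. Flagging that reinterpretation, and grounding the counterexample in the actual operator rather than in symbolic bound propagation, makes your version more rigorous than the paper's write-up while buying the same conclusion.
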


\begin{proof}
Order-Independent Bounds. 
   Same as Corollary~\ref{thm:order_dependency}. The inequalities follow from $F \subseteq F_j$ for all $j$. Simultaneous intersection satisfies:  
   $$
   \max_{x \in F} x_i \leq \min_j \max_{x \in F_j} x_i \quad \text{and} \quad \min_{x \in F} x_i \geq \max_j \min_{x \in F_j} x_i,  
   $$  
   as the intersection $F$ cannot exceed the tightest bound from any $F_j$.  

Order-Dependent Refinement.
   Let $F^{(k)} = \bigcap_{t=1}^k F_{\pi(t)}$. For dependent constraints (e.g., $F_1$ bounds $x_1$, $F_2$ depends on $x_1$), the sequence $F^{(k)}$ depends on $\pi$. A counterexample with $F_1: x_1 + x_2 \leq 2$ and $F_2: x_1 - x_2 \leq 0$ shows:
   \begin{enumerate}
       \item Intersecting $F_1$ first gives $x_1 \leq 2 - x_2$, then $F_2$ further tightens $x_1 \leq x_2$.  
       \item Intersecting $F_2$ first gives $x_1 \leq x_2$, then $F_1$ tightens $x_1 \leq 1$.  
   \end{enumerate}

   The final bounds differ: $x_1 \leq \min(2 - x_2, x_2)$ vs. $x_1 \leq 1$.
\end{proof}
\section{Algorithms}
\label{appendix:algs}

\subsection{Overview}
Fig.~\ref{fig:bound_propagation} shows the pipeline of our algorithm.

\begin{figure}[htbp!]
    \centering
    \includegraphics[width=\textwidth]{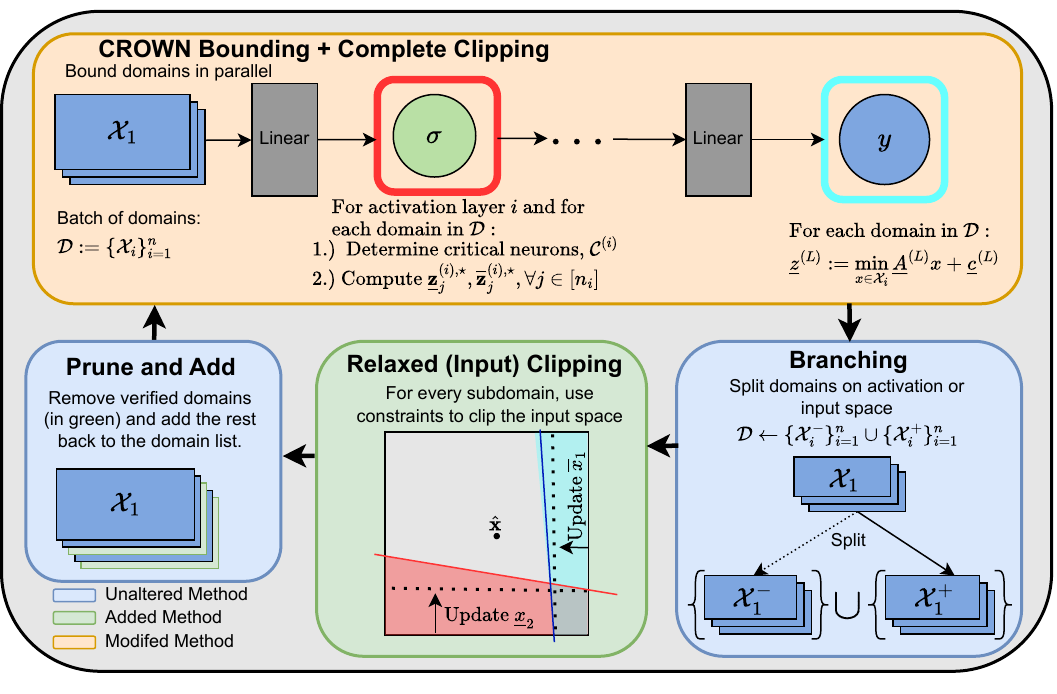}
    \caption{Full Clip-and-Verify pipeline for (Input and Activation Activation) BaB integration.}
    \label{fig:bound_propagation}
\end{figure}
\begin{figure}
    \centering
    \includegraphics[width=\linewidth]{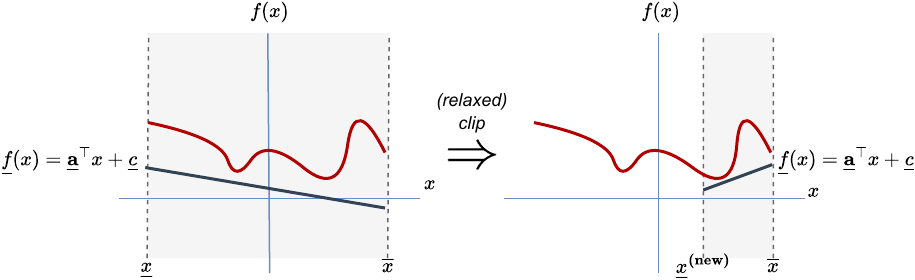}
    \caption{Simple 1D visualization of relaxed clipping reducing the input interval, potentially enabling a second pass of bound propagation to produce a tighter bound.}
    \label{fig:Simple_1D_clipping}
\end{figure}

\subsection{Sequential Clipping for Multiple Constraints}
\label{appendix:sequential_domain_clipping_for_multiple_constraints}
When several linear constraints are present, we apply Theorem~\ref{thm:reducing_input_exact} \emph{sequentially} to each row of $\mA \vx + \vc \le 0$. Algorithm~\ref{alg:sequential_domain_clipping} outlines this procedure:
First, for each constraint $k$, compute $\hat{\vx}$ and $\bm{\epsilon}$ from the \emph{current} bounds, $\underline{\vx}$ and $\overline{\vx}$.
Then, use Theorem~\ref{thm:reducing_input_exact} to refine $\underline{\vx}_i$ and $\overline{\vx}_i$ for all $i$. 
Finally, proceed to the next constraint, using the newly clipped bounds as the domain.
Note that if $\underline{\vx}_{i} > \overline{\vx}_{i}$ occurs in any dimension, the input region of this subproblem is infeasible, and we can directly verify this subproblem without performing further verification.

Because the bounds are updated after each constraint, the final domain is an over-approximation of the true feasible region under all constraints simultaneously. However, it is still {far more efficient} than solving a multi-constraint system in one shot, making it well suited for large-scale verification where we repeatedly clip domains across many subproblems.  

\begin{algorithm}[tbh]
\caption{Linear Constraint-Driven Relaxed Clipping \textit{(sequential)}}
\label{alg:sequential_domain_clipping}
\begin{algorithmic}[1]
\REQUIRE
  $\underline{\vx}:$~Lower bounds;
  $\overline{\vx}:$~Upper bounds;
  $\mA:$~Constraint matrix;
  $\vc:$~Constraint vector.

\STATE $m \gets \text{rows}(\mA)$, $n \gets \text{cols}(\mA)$

\FOR{for each constraint $k \in \{1, \dots, m\}$}
    \FOR{each input dimension $i \in \{1, \dots, n\}$} 
        \STATE $\hat{\vx} \gets \frac{\overline{\vx}+\underline{\vx}}{2}$, $\bm{\epsilon} \gets  \frac{\overline{\vx}-\underline{\vx}}{2}$
      \IF{$\mA_{k, i} \neq 0$}
             \STATE $\vx^{\text{(new)}}_i \gets \frac{-\sum_{j \neq i} \mA_{k,j} \hat{\vx}_j + \sum_{j \neq i} |\mA_{k,j}| \bm{\epsilon}_{j} - b  }{\mA_{k, i}}$
         \IF{$\mA_{k, i} > 0$}
             \STATE $\overline{\vx}^{\text{(new)}}_{i} \gets \min(\overline{\vx}_{i}, \vx^{\text{(new)}}_i)$
         \ELSE
             \STATE $\underline{\vx}^{\text{(new)}}_{i} \gets \max(\underline{\vx}_{i}, \vx^{\text{(new)}}_i)$
         \ENDIF
      \ENDIF
   \ENDFOR
   \STATE $\underline{\vx}_{i} \gets \max(\underline{\vx}_{i}, \underline{\vx}_{i}^{\text{(new)}})$
   \STATE $\overline{\vx}_{i} \gets \min(\overline{\vx}_{i},\overline{\vx}_{i}^{\text{(new)}})$
\ENDFOR
\ENSURE Clipped $\underline{\vx}$, $\overline{\vx}$
\end{algorithmic}
\end{algorithm}

By clipping the domain repeatedly, Algorithm~\ref{alg:sequential_domain_clipping} retains enough precision to prune large regions yet remains computationally lightweight enough for repeated invocation on many subdomains during verification. It may even be the case that the constraints passed to our domain clipping algorithm reveal the region is entirely \textit{infeasible}. Such a scenario may occur when two Activation assignments admit an infeasible domain, or when the desired property admits multiple constraints that produce infeasibility. In such scenarios, Algorithm \ref{alg:sequential_domain_clipping} will return clipped bounds where $\underline{\vx}$ will be smaller than $\overline{\vx}$ along some dimension(s). In the context of branch-and-bound, this subdomain may be effectively pruned, avoiding the process of running bound propagation once more on the domain, avoiding unnecessary computations

\subsection{Algorithms for Clip-and-Verify in Input Branch-and-bound Scheme}
\label{sec:alg_input_bab}

\begin{algorithm}[tbh]
\caption{Clip-and-Verify for Input Branch-and-bound}
\label{alg:input_bab}
\begin{algorithmic}[1]
\REQUIRE
  $f:$~model to verify; 
  $n:$~batch size; 
  $\text{timeout}:$~time-out threshold

\STATE $\mathcal{D}_\mathrm{Unknown}, \underline{f} \gets \mathrm{Init}(f, \emptyset)$
\COMMENT{Initialize the set of unknown subdomains $\mathcal{D}_\mathrm{Unknown}$ and global bound $\underline{f}$}

\WHILE{ $\lvert \mathcal{D}_\mathrm{Unknown} \rvert > 0$ \textbf{and} not timed out}
    \STATE $\{\mathcal{X}_i, \textcolor{brown}{\underline{\mA}_i^{\text{prev}}, \underline{\vc}_i^{\text{prev}}}\}_{i=1}^{n} \gets \mathrm{Batch\_Pick\_Out}(\mathcal{D}_\mathrm{Unknown}, n)$
    \COMMENT{Pick up to $n$ subdomains $\mathcal{X}_i$ \textcolor{brown}{and their associated hyperplanes $(\underline{\mA}_i^{\text{prev}}, \underline{\vc}_i^{\text{prev}})$ from the previous iteration}.}

    \STATE \textcolor{brown}{$\{\gC_i\}_{i=1}^n \gets \mathrm{Top\text{-}K\_Heuristic}(\{\mathcal{X}_i, \underline{\mA}_i^{\text{prev}}, \underline{\vc}_i^{\text{prev}}\}_{i=1}^n)$}
    \COMMENT{\textcolor{brown}{Determine critical neurons $\gC_i$ using a top-k heuristic over each domain and its previous hyperplanes.}}

    \STATE $\bigl(\underline{f}_{\mathcal{X}_{1}}, \underline{\mA}_{\mathcal{X}_{1}}, \underline{\vc}_{\mathcal{X}_{1}}, \dots, \underline{f}_{\mathcal{X}_{n}}, \underline{\mA}_{\mathcal{X}_{n}}, \underline{\vc}_{\mathcal{X}_{n}}\bigr)
      \gets 
      \mathrm{Solve\_Bound}\Bigl(f,
        \bigl\{\mathcal{X}_i, \textcolor{brown}{\mathcal{C}_i}\bigr\}_{i=1}^n\Bigr)$
    \COMMENT{Compute bounds and plane coefficients on each subdomain; \textcolor{brown}{Refine critical neurons using planes as constraints and \emph{Complete Clipping}}.}

    \STATE $\{\mathcal{X}_i^{-}, \mathcal{X}_i^{+}, \textcolor{brown}{\underline{\mA}_{i}, \underline{\vc}_{i}}\}_{i=1}^n \gets \mathrm{Batch\_Split}\bigl(\{\mathcal{X}_i, \textcolor{brown}{\underline{\mA}_{i}, \underline{\vc}_{i}}\}_{i=1}^{n}\bigr)$
    \COMMENT{Split each subdomain; \textcolor{brown}{share hyperplanes with both children}.}

    \STATE \textcolor{brown}{$\{\vx_i^{(-)}, \vx_i^{(+)}\}_{i=1}^{n} \gets \mathrm{Relaxed\_Clipper}\bigl(\{\mathcal{X}_i^{-}, \mathcal{X}_i^{+}, \underline{\mA}_{i}, \underline{\vc}_{i}\}_{i=1}^n\bigr)$}
    \COMMENT{\textcolor{brown}{Refine input box on each child subdomain using the plane coefficients.}}

    \textcolor{lightbrown}{\STATE $\{\mathcal{X}_i^{-}, \mathcal{X}_i^{+}\}_{i=1}^n \gets \mathrm{Domain\_Update}\bigl(\{\vx_i^{(-)}, \vx_i^{(+)}\}_{i=1}^{n}\bigr)$
    \COMMENT{Update each child subdomain’s input bounds.}}

    \STATE $\mathcal{D}_\mathrm{Unknown} 
      \gets 
      \mathcal{D}_\mathrm{Unknown} \cup
      \mathrm{Domain\_Filter}\Bigl(\bigl[\underline{f}_{\mathcal{X}_{1}^{-}}, \mathcal{X}_{1}^{-},\textcolor{brown}{\underline{\mA}_1^{(-)}, \underline{\vc}_1^{(-)}}\bigr], \dots \Bigr)$
    \COMMENT{Filter out verified/infeasible subdomains. Retain unknowns \textcolor{brown}{and their bounding hyper-planes}.}
\ENDWHILE

\ENSURE UNSAT \text{if} $|\mathcal{D}_\text{Unknown}| = 0$ \text{else Unknown}

\end{algorithmic}
\end{algorithm}

Algorithm~\ref{alg:input_bab} describes our modifications to the standard BaB procedure. When all subproblems can be verified, then the verification problem is referred to as UNSAT (i.e., the complementary property, $\exists \vx \in \mathcal{X}, \; f(\vx) < 0$, is \textit{unsatisfiable}), and the network is \textit{safe} from counter-examples. Otherwise, it is insufficient to determine if the property is UNSAT without further refinement or falsification.

\subsection{Algorithms for Clip-and-Verify in Activation Branch-and-bound Scheme}
\label{sec:alg_Activation_bab}

\begin{algorithm}[tbh]
\caption{Clip-and-Verify for Activation Branch-and-bound}
\label{alg:Activation_bab}
\begin{algorithmic}[1]
\REQUIRE
  $f:$~model to verify; 
  $n:$~batch size; 
  $\text{timeout}:$~time-out threshold

\STATE $\mathcal{D}_\mathrm{Unknown}, \underline{f} \gets \mathrm{Init}(f, \emptyset)$
\COMMENT{\textcolor{lightbrown}{Initialize the set of unknown subdomains $\mathcal{D}_\mathrm{Unknown}$ and global bound $\underline{f}$}}

\STATE $\{\underline{\mA}^{(j)}, \underline{\vc}^{(j)}, \overline{\mA}^{(j)}, \overline{\vc}^{(j)}, \text{unstable\_neuron(idx)}^{(j)} \}_{j=1}^J \gets \mathrm{Get\_Constraints}(f)$
\COMMENT{\textcolor{lightbrown}{Retrieve the set of coefficients and biases for each unstable neurons' upper and lower bound and get the indices of unstable neurons during Bound Propagation.}}

\STATE $\mathrm{Domain\_Clipper} \gets \mathrm{Init\_Clipper} (\{\underline{\mA}^{(j)}, \underline{\vc}^{(j)}, \overline{\mA}^{(j)}, \overline{\vc}^{(j)}, \text{idx}^{(j)} \}_{j=1}^J)$
\COMMENT{\textcolor{lightbrown}{Initialize the Domain Clipper with the set of constraints information.}}

\WHILE{ $\lvert \mathcal{D}_\mathrm{Unknown} \rvert > 0$ \textbf{and} not timed out}
    \STATE $\{\mathcal{Z}_i\}_{i=1}^{n} \gets \mathrm{Batch\_Pick\_Out}(\mathcal{D}_\mathrm{Unknown}, n)$
    \COMMENT{\textcolor{lightbrown}{Pick at most $n$ subdomains from the unknown set.}}

    \STATE $\{\mathcal{Z}_i^{-}, \mathcal{Z}_i^{+}\}_{i=1}^n \gets \mathrm{Batch\_Split}\bigl(\{\mathcal{Z}_i\}_{i=1}^{n}\bigr)$
    \COMMENT{\textcolor{lightbrown}{Split each subdomain (e.g., Activation split or input split) into two child subdomains.}}

    \STATE {$\{\gC_i^{(-)}, \gC_i^{(+)}\}_{i=1}^n \gets \mathrm{Top\text{-}K\_Heuristic}(\{\mathcal{X}_i\}_{i=1}^n)$}
    \COMMENT{\textcolor{brown}{Determine critical neurons using a top-k heuristic (e.g. BaBSR) over each domain.}}

    \STATE $\bigl\{\rx_i^{(-)}, \mathrm{interm\_bds}_i^{(-)},
                \rx_i^{(+)}, \mathrm{interm\_bds}_i^{(+)}\bigr\}_{i=1}^{n}$
    $\gets$
    $\mathrm{Domain\_Clipper}\bigl(\{\mathcal{Z}_i^{-}, \mathcal{Z}_i^{+}, \gC_i^{(-)}, \gC_i^{(+)}\}_{i=1}^n\bigr)$
    \COMMENT{\textcolor{lightbrown}{Apply Relaxed Clipping to child subdomain’s input and Complete Clipping on the critical neurons. Constraints are the Activation splits.}}

    \STATE $\{\mathcal{Z}_i^{-}, \mathcal{Z}_i^{+}\}_{i=1}^n \gets \mathrm{Domain\_Update}\bigl(\{\rx_i^{(-)},$ $ \mathrm{interm\_bds}_i^{(-)}, \rx_i^{(+)}, \mathrm{interm\_bds}_i^{(+)}\}_{i=1}^{n}\bigr)$
    \COMMENT{\textcolor{lightbrown}{Update each child subdomain’s input and intermediate bounds.}}

    \STATE $\bigl(\underline{f}_{\mathcal{Z}_{1}^{-}},\dots,\underline{f}_{\mathcal{Z}_{n}^{+}}\bigr)
      \gets 
      \mathrm{Solve\_Bound}\Bigl(f,
        \bigl\{\rx_i, \mathrm{interm\_bounds}_i, \mathcal{Z}_i\bigr\}\Bigr)$
    \COMMENT{\textcolor{lightbrown}{Compute bounds on each newly clipped subdomain using a bound propagation solver.}}

    \STATE $\mathcal{D}_\mathrm{Unknown} 
      \gets 
      \mathcal{D}_\mathrm{Unknown} \cup
      \mathrm{Domain\_Filter}\Bigl(\bigl[\underline{f}_{\mathcal{Z}_{1}^{-}}, \mathcal{Z}_{1}^{-}\bigr], \dots \Bigr)$
    \COMMENT{\textcolor{lightbrown}{Filter out verified/infeasible subdomains. Keep remaining unknown subdomains in $\mathcal{D}_\mathrm{Unknown}$.}}
\ENDWHILE

\ENSURE UNSAT \text{if} $|\mathcal{D}_\text{Unknown}| = 0$ \text{else Unknown}

\end{algorithmic}
\end{algorithm}

Algorithm~\ref{alg:Activation_bab} begins by initializing the verification procedure. First, we call $\mathrm{Init}(f,\emptyset)$ to obtain an empty set of partial Activation assignments (or subdomains) along with an initial global lower bound $\underline{f}$ for the property to be checked. This global lower bound can, for example, be the result of a quick bounding pass. The set $\mathcal{D}_\mathrm{Unknown}$ is then populated with a single “root” subdomain representing the entire input domain.  

Next, we retrieve the constraint information for all neurons (Line 2). Specifically, $\mathrm{Get\_Constraints}(f)$ returns the linear coefficients and biases used to bound each neuron’s activation, distinguishing between the lower-bounding ($\underline{\mA}^{(j)}, \underline{\vc}^{(j)}$) and upper-bounding ($\overline{\mA}^{(j)}, \overline{\vc}^{(j)}$) linear functions. It also identifies indices of “unstable” Activation neurons whose ranges straddle zero. These constraints will later be used to restrict the feasible input region using both our Relaxed Clipping and Complete Clipping algorithms.  

We then initialize the Domain Clipper (Line 3) with the gathered linear constraints. This Clipper component will be invoked whenever we branch on a Activation neuron, so that the corresponding partial assignment (e.g., $x^{(j)}_k \ge 0$) is “pushed back” onto the input domain. In doing so, we clip the subdomain’s input box by applying Theorem~\ref{thm:reducing_input_exact} (or its extensions) to incorporate these newly introduced constraints, thus discarding parts of the input space that violate them. In addition, when given a set of critical neurons, these Activation split constraints will be used to directly refine the intermediate neurons using Theorem~\ref{thm:comprehensive_clipping_single_constraint}.

The main loop (Lines 5–11) iterates until either no subdomains remain unknown or a time-out is reached. In each iteration, we pick up to $n$ unknown subdomains from $\mathcal{D}_\mathrm{Unknown}$ (Line 5) for batched parallel processing. Each subdomain $\mathcal{Z}_i$ is then split (Line 6) along one or more unstable Activation neurons, creating child subdomains in which each split neuron is fixed to either the active ($\ge 0$) or inactive ($\le 0$) regime.  

At Line 7, we invoke a top-k heuristic (e.g. BaBSR) in order to determine the set of ``critical neurons'' that would contribute the most to providing a stronger convex relaxation if their bounds were to be refined. 

At Lines 8–9, we invoke the Domain Clipper on these newly formed child subdomains. The Clipper translates each Activation assignment into a linear constraint on the input, then refines (or “clips”) the child subdomain’s input bounds, and the bounds of the ``critical neurons'' in each subdomain with respect to the heuristic choices from the step prior. This ensures that any portion of the parent domain that contradicts the new constraint is removed. Once clipped, the child subdomains’ intermediate bounds are also updated (Line 10) so that subsequent bounding calculations reflect the tighter input ranges.  

We then compute bounds on the newly clipped subdomains (11) using a chosen method—often a fast bound-propagation tool such as CROWN or a lightweight LP solver. This step yields lower bounds $\underline{f}_{\mathcal{Z}_{i}^{\pm}}$ on the network outputs for each subdomain. If these bounds confirm that the property holds (e.g., a robustness margin remains non-negative), the subdomain is verified and can be pruned. If the subdomain is infeasible (e.g., constraints are contradictory), it is also removed. All remaining subdomains (still “unknown”) return to $\mathcal{D}_\mathrm{Unknown}$ for further splitting.  

Finally, the loop terminates once there are no unknown subdomains left or the time-out is reached. If $\mathcal{D}_\mathrm{Unknown}$ becomes empty, we conclude UNSAT, signifying that no violating input (counterexample) exists within any subdomain. Otherwise, we return “Unknown,” indicating that verification was not completed in time.  

Overall, this procedure reflects a standard Activation BaB flow, except that an additional “domain clipping” step (Lines 8–9) is inserted after each split, leveraging partial Activation assignments to refine the input domain and select intermediate neurons before the next bounding pass. By applying our linear constraint-driven clipping algorithms whenever new constraints appear, we gain significantly tighter intermediate-layer bounds and thus reduce the branching burden throughout the verification process.

\subsection{2D Toy Example}
\label{appendix:2D_Toy_Example}

To illustrate our clipping verification approach, we consider a simple two-layer ReLU feed-forward network defined as
\begin{equation}
f(\vx) = \vw^{(2)\top}\sigma(\mW^{(1)}\vx + \vb^{(1)}),
\end{equation}
where $\vw^{(2)}, \vb^{(1)}, \vx \in \mathbb{R}^2$, $\mW^{(1)} \in \mathbb{R}^{2\times 2}$, and $\sigma(\cdot)$ denotes the element-wise ReLU activation.
We aim to verify the property
\begin{equation*}
f(\vx) \geq 0, \quad \forall \vx \in \mathcal{X}.
\end{equation*}
The input domain $\mathcal{X}$ is defined as an $\ell_\infty$-box centered at $\hat{\vx} = [0.5, -0.5]^\top$ with half-widths $\bm{\epsilon} = [1.5, 1.5]^\top$. Equivalently, the domain can be expressed using its endpoints $\underline{\vx} = [-1, -2]^\top$ and $\overline{\vx} = [2, 1]^\top$. The network parameters are specified as
\begin{align}
\begin{cases}
\mW^{(1)} =
\begin{bmatrix}
1 & -7 \\
5 & -1
\end{bmatrix}, \quad
\vb^{(1)} =
\begin{bmatrix}
6 \\
-7
\end{bmatrix}, \quad
\vw^{(2)} =
\begin{bmatrix}
1 \\
-1
\end{bmatrix}.
\end{cases}
\end{align}

\textbf{CROWN Bound.} As a warm up, let's bound the network using the CROWN algorithm without utilizing relaxed nor complete clipping. We begin by computing the pre-activation bounds of the intermediate layer which are obtained by concretizing the first layer’s affine transformation:
\begin{subequations}\label{eq:2dtoy_prebounds}
\begin{equation}\label{eq:2dtoy_prebounds_a}
\underline{\vz} = \min_{\vx \in \mathcal{X}} \mW^{(1)}\vx + \vb^{(1)},
\qquad
\overline{\vz} = \max_{\vx \in \mathcal{X}} \mW^{(1)}\vx + \vb^{(1)}.
\end{equation}
\begin{equation}\label{eq:2dtoy_prebounds_b}
\underline{\vz} = \mW^{(1)}\hat{\vx} - |\mW^{(1)}|\bm{\epsilon} + \vb^{(1)},
\qquad
\overline{\vz} = \mW^{(1)}\hat{\vx} + |\mW^{(1)}|\bm{\epsilon} + \vb^{(1)}.
\end{equation}
\end{subequations}
Substituting the parameters into equation~\eqref{eq:2dtoy_prebounds_b} yields $\underline{\vz} = [-2, -13]^\top$ and $\overline{\vz} = [22, 5]^\top$. These neurons are then passed to a ReLU activation function, and one should notice that for each input, we have that $\underline{\vz}_1 < 0 < \overline{\vz}_1$ and $\underline{\vz}_2 < 0 < \overline{\vz}_2$. In this scenario, we have that the ReLU neurons are \textit{unstable}, meaning that we cannot bound the non-linearity exactly, however, we can still get sound linear bounds on the output of the ReLU activation. Using Lemma \ref{lemma:crown}, we construct the diagonal matrices and corresponding vectors, 
\begin{subequations}
    \begin{equation}
        \underline{\mD} = \begin{bmatrix} \alpha_1 & 0 \\ 0 & \alpha_2 \end{bmatrix}, \qquad \underline{\vb} = \begin{bmatrix}0 \\ 0\end{bmatrix}
    \end{equation}
    \begin{equation}
        \overline{\mD} = \begin{bmatrix} \frac{\overline{\vz}_1}{\overline{\vz}_1 - \underline{\vz}_1} & 0 \\ 0 & \frac{\overline{\vz}_2}{\overline{\vz}_2 - \underline{\vz}_2} \end{bmatrix}, \qquad \overline{\vb} = \begin{bmatrix}\frac{-\overline{\vz}_1 \underline{\vz}_1}{\overline{\vz}_1 - \underline{\vz}_1} \\ \frac{-\overline{\vz}_2 \underline{\vz}_2}{\overline{\vz}_2 - \underline{\vz}_2}\end{bmatrix}.
    \end{equation}
\end{subequations}
$\underline{\mD}$ and $\underline{\vb}$ are used to create \textit{lower bounding} planes on the output of the ReLU activation where $\alpha_1$ and $\alpha_2$ are real numbers limited to the range $[0, 1]$. These values may be optimized, however we will always fix $\alpha_1 = \alpha_2 = 1$ when lower bounding unstable neurons in this toy example. $\overline{\mD}$ and $\overline{\vd}$ are \textit{upper bounding} planes on the output of the ReLU activation, and its construction is derived from the Planet relaxation \cite{ehlers2017formal}. It may be verified that for inputs $\vz$ in the range $[\underline{\vz}, \overline{\vz}]$, 
\begin{equation}
    \overline{\mD}\vz + \overline{\vb} \geq \sigma(\vz) \geq \underline{\mD}\vz + \underline{\vb}.
\end{equation}
We next lower bound the network output. Because the post-activation vector $\sigma(\mathbf{z})$ is passed through the final linear layer, a coordinate-wise sign on the final-layer weights determines whether to use upper or lower affine bounds for each neuron. Concretely,
\begin{equation}
    \begin{cases}
        w\sigma(\vz)_i \geq w\left(\underline{\mD}_{i,i}\vz_i + \underline{\vb}_i\right), & \text{if } w \geq 0\\
        w\sigma(\vz)_i \geq w\left(\overline{\mD}_{i,i}\vz_i + \overline{\vb}_i\right), & \text{if } w < 0
    \end{cases}
\end{equation}
so that the final-layer lower bound with respect to $\vz$ is
\begin{equation}
    \vw^{(2)\top}\sigma(\vz) \geq (\vw^{(2),+})^\top(\underline{\mD}\vz + \underline{\vb}) + (\vw^{(2),-})^\top(\overline{\mD}\vz + \overline{\vb})
\end{equation}
where $\vw^{(2),+}$ zeros out entries which are negative and $\vw^{(2),-}$ zeros out entries which are positive. In our example, these vectors are $\vw^{(2),+} = [1, 0]^\top$ and $\vw^{(2),-} = [0, -1]^\top$. The final step is to produce a lower bounding hyperplane of the final layer with respect to the network's input. So far, we've only related this lower bound to the intermediate input, $\vz$. Our final step is quite simple as we know that $\vz = \mW^{(1)}\vx + \vb^{(1)}$. Using this relation, we can ``back-propagate'' our affine relaxations to the input as follows,
\begin{subequations}
\begin{align}
    (\vw^{(2),+})^\top(\underline{\mD}\vz + \underline{\vb}) 
    + (\vw^{(2),-})^\top(\overline{\mD}\vz + \overline{\vb})
\end{align}
\begin{align}
    =(\vw^{(2),+})^\top\left(\underline{\mD}\left(\mW^{(1)}\vx + \vb^{(1)}\right) + \underline{\vb}\right)
    + (\vw^{(2),-})^\top\left(\overline{\mD}\left(\mW^{(1)}\vx + \vb^{(1)}\right) + \overline{\vb}\right)
\end{align}
\begin{align}
    &= \left( \left((\vw^{(2),+})^\top \underline{\mD} 
        + (\vw^{(2),-})^\top \overline{\mD}\right)\mW^{(1)}\right)\vx + \left( (\vw^{(2),+})^\top\underline{\mD} 
        + (\vw^{(2),-})^\top\overline{\mD} \right)\vb^{(1)} \nonumber \\
    &\quad + (\vw^{(2),+})\underline{\vb} 
        + (\vw^{(2),-})\overline{\vb}
\end{align}
\end{subequations}

Let us introduce the following variables,
\begin{equation}
    \begin{cases}
    \underline{\va} :=  \left((\vw^{(2),+})^\top \underline{\mD} 
        + (\vw^{(2),-})^\top \overline{\mD}\right)\mW^{(1)} \\
    \underline{\vc} := \left( (\vw^{(2),+})^\top\underline{\mD} 
        + (\vw^{(2),-})^\top\overline{\mD} \right)\vb^{(1)} + (\vw^{(2),+})\underline{\vb} 
        + (\vw^{(2),-})\overline{\vb}
\end{cases}
\end{equation}
By construction, we have that the output of the network is lower bounded by the following linear relaxation, 
\begin{equation}
    f(\vx) \geq \underline{\va}^\top\vx + \underline{\vc}, \qquad \vx \in \mathcal{X}.
\end{equation}
Using H\"older's inequality, we can solve for the minima of this lower bounding plane,
\begin{equation}
    \min_{\vx\in\mathcal{X}}f(\vx) \geq \min_{\vx\in\mathcal{X}}\underline{\va}^\top\vx + \underline{\vc} = \underline{\va}^\top\hat{\vx} - |\underline{\va}|^\top\bm{\epsilon} + \underline{\vc} = -\frac{19}{6}.
\end{equation}
As this lower bound is too loose, we cannot verify our desired property, $f(\vx) \geq 0, \forall \vx \in \mathcal{X}$.

\textbf{Relaxed Clipping.} We first tackle this example using our efficient yet approximate clipping algorithm termed \textit{relaxed clipping}. Given a set of constraints, the objective of relaxed clipping is to compute the smallest input box representation that satisfies those constraints. This step is performed once per round of branch-and-bound, and the resulting refined input box is shared by all neurons during the concretization step. Consequently, this refinement has the potential to tighten bounds across multiple neurons, including those in the final layer.

Consider a bound-propagation–based verifier executing branch-and-bound by splitting over the activation space. For the remainder of this subsection, we focus on the case where the first ReLU neuron is split into its non-positive (inactive) region, i.e., $\vz_1 \leq 0$. This split decision can be reinterpreted as a linear constraint on the input,
\begin{equation}
    (\mW^{(1)}_{1,:})^\top\vx + \vb^{(1)}_1 \leq 0.
\end{equation}
Since the input $\vx$ is two-dimensional, our goal is to tighten the lower and upper bounds of each input dimension under this constraint. Formally, we solve the following optimization problems:
\begin{subequations}
\begin{align}
\underline{\vx}_1^{(rc)} &:= \min_{\vx \in \mathcal{X}} \vx_1 \quad \text{s.t.} \quad (\mW^{(1)}_{1,:})^\top \vx + \vb^{(1)})_1 \leq 0, \\
\overline{\vx}_1^{(rc)} &:= \max_{\vx \in \mathcal{X}} \vx_1 \quad \text{s.t.} \quad (\mW^{(1)}_{1,:})^\top \vx + \vb^{(1)}_1 \leq 0, \\
\underline{\vx}_2^{(rc)} &:= \min_{\vx \in \mathcal{X}} \vx_2 \quad \text{s.t.} \quad (\mW^{(1)}_{1,:})^\top \vx + \vb^{(1)}_1 \leq 0, \\
\overline{\vx}_2^{(rc)} &:= \max_{\vx \in \mathcal{X}} \vx_2 \quad \text{s.t.} \quad (\mW^{(1)}_{1,:})^\top \vx + \vb^{(1)}_1 \leq 0.
\end{align}
\end{subequations}
From Theorem~\ref{thm:reducing_input_exact}, one potential solution along each dimension is given by
\begin{equation}
    \vx_i^{(clip)} := \frac{-\sum_{i\neq j}\left\{\mW^{(1)}_{1,j}\hat{\vx}_j - |\mW^{(1)}_{1,j}|\bm{\epsilon}_j\right\}-\vb_1^{(1)}}{\mW^{(1)}_{1,i}}
\end{equation}
Each dimension is then updated as follows:
\begin{equation}
    \begin{cases}
\overline{\vx}^{(rc)}_{i} = \min\left\{\vx_i^\text{(clip)}, \overline{\vx}_{i}\right\} & \text{if } \mW^{(1)}_{1,i} > 0 \\
\underline{\vx}^{(rc)}_{i} = \max\left\{\vx_i^\text{(clip)}, \underline{\vx}_{i}\right\} & \text{if } \mW^{(1)}_{1,i} < 0 \\
\text{no change} & \text{otherwise}
\end{cases}
\end{equation}
For the first dimension ($i=1$), since $\mW^{(1)}_{1,1} > 0$, the upper limit may be refined if $\vx_1^{(\text{clip})} < \overline{\vx}_1$. For the second dimension ($i=2$), where $\mW^{(1)}_{1,2} < 0$, the lower limit may be refined if $\vx_2^{(\text{clip})} > \underline{\vx}_2$. Substituting the given parameters yields:
\begin{subequations}
\begin{align}
    \vx_1^{(clip)} = \frac{-\mW^{(1)}_{1,2}\hat{\vx}_2 + |\mW^{(1)}_{1,2}|\bm{\epsilon}_2 - \vb_1^{(1)}}{\mW^{(1)}_{1,1}} = \frac{-(-7)(-1/2)+|-7|(3/2) - 6}{1} = 1 \\
    \vx_2^{(clip)} = \frac{-\mW^{(1)}_{1,1}\hat{\vx}_1 + |\mW^{(1)}_{1,1}|\bm{\epsilon}_1 - \vb_1^{(1)}}{\mW^{(1)}_{1,2}} = \frac{-(1)(1/2)+|1|(3/2) - 6}{-7} = \frac{5}{7}
\end{align}
\end{subequations}
It is indeed the case that $\vx_1^{(clip)} < \overline{\vx}_1$ and $\vx_2^{(clip)} > \underline{\vx}_2$, so the limits of the refined input domain are now,
\begin{equation}
\underline{\vx}^{(rc)} = [-1, 5/7]^\top,\qquad \overline{\vx}^{(rc)} = [1, 1]^\top.
\end{equation}
The new box representation may also be characterized by its center and half-widths:
\begin{equation}
    \hat{\vx}^{(rc)} = (\overline{\vx}^{(rc)} + \underline{\vx}^{(rc)})/2, \qquad \bm{\epsilon}^{(rc)} = (\overline{\vx}^{(rc)} - \underline{\vx}^{(rc)}) / 2,
\end{equation}
yielding the refined input domain 
\begin{equation}
\mathcal{X}^{(rc)} = \left\{ \vx \mid \|\hat{\vx}^{(rc)} - \vx\|_\infty \leq \bm{\epsilon}^{(rc)}\right\} = \left\{ \vx \mid  \underline{\vx}^{(rc)} \leq \vx \leq \overline{\vx}^{(rc)} \right\}.
\end{equation}

We now proceed to bound the network using the CROWN algorithm under $\mathcal{X}^{(rc)}$ in place of the original domain, $\mathcal{X}$. Substituting this domain into Eq.~\eqref{eq:2dtoy_prebounds} produces
$\underline{\vz}^{(rc)} = [-2, -13]^\top$ and $\overline{\vz}^{(rc)} = [2, -57/21]^\top$.
Both neurons exhibit tighter bounds, and notably, $\overline{\vz}^{(rc)}_2 < 0$, which ensures that the input to the second ReLU neuron is strictly non-positive across $\mathcal{X}^{(rc)}$. Since the ReLU function is piecewise linear, this implies an \textit{exact} post-activation bound:
\begin{equation}
\underline{\sigma}(\vz) = \sigma(\vz) = \overline{\sigma}(\vz) = 0, \qquad \forall \vx \in \mathcal{X}^{(rc)}.
\end{equation}
One interesting observation to point out is that with this clipped domain, $\overline{\vz}_1^{(rc)} = 2$ improves dramatically from the original upper bound, $\overline{\vz}_1 = 22$. However, because our constraint originated from $\vz_1 \leq 0$, the true maximum is in fact zero. In this case, one could enforce this upper bound to be zero, but keep in mind that relaxed clipping algorithm is designed for generality, thus constraints may arise from split decisions at any neuron in the network where forcing $\overline{\vz}_1^{(rc)} = 0$ would not be valid. This example illustrates that relaxed clipping, while lightweight and compatible with CROWN, remains an approximation and may yield suboptimal yet informative bounds. Rather than enforcing the true maximum, we retain the relaxed clipping result, $\overline{\vz}_1^{(rc)} = 2$.

Applying Lemma~\ref{lemma:crown}, we derive the linear post-activation bounds with respect to the pre-activation inputs:
\begin{subequations}
    \begin{equation}
        \underline{\mD}^{(rc)} = \begin{bmatrix} \alpha_1 & 0 \\ 0 & 0 \end{bmatrix}, \qquad \underline{\vb}^{(rc)} = \begin{bmatrix}0 \\ 0\end{bmatrix}
    \end{equation}
    \begin{equation}
        \overline{\mD}^{(rc)} = \begin{bmatrix} \frac{\overline{\vz}^{(rc)}_1}{\overline{\vz}^{(rc)}_1 - \underline{\vz}^{(rc)}_1} & 0 \\ 0 & 0 \end{bmatrix}, \qquad \overline{\vb}^{(rc)} = \begin{bmatrix}\frac{-\overline{\vz}^{(rc)}_1 \underline{\vz}^{(rc)}_1}{\overline{\vz}^{(rc)}_1 - \underline{\vz}^{(rc)}_1} \\ 0\end{bmatrix}.
    \end{equation}
\end{subequations}
For simplicity, let $\alpha_1 = 1$. Notice that for the second neuron, we have that,
\begin{equation}
    (\underline{\mD}^{(rc)}_{2,:})^\top\vz + \underline{\vb}^{(rc)}_2 = \sigma(\vz)_2 = (\overline{\mD}^{(rc)}_{2,:})^\top\vz + \overline{\vb}^{(rc)}_2 = 0, \qquad \forall \vz_2 \in [\underline{\vz}^{(rc)}_2, \overline{\vz}^{(rc)}_2],
\end{equation}
confirming that this ReLU neuron is inactive and its output can be exactly bounded as zero.

Finally, performing the same backpropagation procedure as before yields
\begin{equation}
    \min_{\vx\in\mathcal{X}}f(\vx) \geq \min_{\vx\in\mathcal{X}^{(rc)}}\underline{\va}^{(rc)\top}\vx + \underline{\vc}^{(rc)} = \underline{\va}^{(rc)\top}\hat{\vx}^{(rc)} - |\underline{\va}^{(rc)}|^\top\bm{\epsilon}^{(rc)} + \underline{\vc}^{(rc)} = -2.
\end{equation}
This represents an improvement over the previous bound, although the property remains unverified.

\textbf{Complete Clipping.} Relaxed clipping can be viewed as an \textit{indirect} approach towards refining the bounds of the neural network as a smaller input representation can potentially yield improvement in the concretization step when forming the neurons' bounds. \textit{Complete clipping} on the other hand is a \textit{direct} approach towards refining the bounds on the network as the optimization objective specifically targets the neuron's bounds rather than the shared input representation. 

In practice, relaxed clipping is extremely lightweight, and its operations are well-suited for GPUs. Thus, it is often sensible to combine relaxed and complete clipping in the verification pipeline. In this example, however, we aim to clearly distinguish the two methods. Therefore, we use the original input domain, $\mathcal{X}$, rather than the refined one, $\mathcal{X}^{(rc)}$, and retain the ReLU split constraint $\vz_1 \le 0$.

Complete clipping operates directly on the bounds of each neuron, refining them via constrained optimization. The first step is to bound the preactivation bounds which may be formulated as:
\begin{subequations}
\begin{minipage}{0.48\textwidth}
\begin{align}\label{eq:2Dtoy_cc_min}
\underline{\vz}_i^{(cc)} &= 
    \min_{\vx\in\mathcal{X}}\mW^{(1)}_{i,:}\vx + \vb^{(1)}_i \notag \\
&\text{s.t. } \left(\mW^{(1)}_{1,:}\right)^\top\vx + \vb^{(1)}_1 \le 0
\end{align}
\end{minipage}
\hfill
\begin{minipage}{0.48\textwidth}
\begin{align}\label{eq:2Dtoy_cc_max}
\overline{\vz}_i^{(cc)} &= 
    \max_{\vx\in\mathcal{X}}\mW^{(1)}_{i,:}\vx + \vb^{(1)}_i \notag \\
&\text{s.t. } \left(\mW^{(1)}_{1,:}\right)^\top\vx + \vb^{(1)}_1 \le 0
\end{align}
\end{minipage}
\end{subequations}

For this two-dimensional problem, this amounts to targeting the lower and upper bound of both neurons at the intermediate layer, resulting in a total of four constrained optimization subproblems. However, we will only focus on refining the upper bounds as we will soon see that this will be sufficient for verifying our desired property in this toy example. 

As discussed earlier, the split constraint on the first neuron implies that its true upper bound is trivially $\overline{\vz}^\star_1 = 0$. With a single constraint, Theorem~\ref{thm:comprehensive_clipping_single_constraint} guarantees optimality, i.e., $\overline{\vz}_1^{(cc)} = \overline{\vz}^\star_1 = 0$, which can be verified using Algorithm~\ref{alg:coord_ascent_multi_constraint}\footnote{Algorithm~\ref{alg:coord_ascent_multi_constraint} performs coordinate ascent on the dual objective when the primal is a \textit{minimization} problem. Since~\eqref{eq:2Dtoy_cc_max} is a \textit{maximization} problem, we can negate the primal objective to minimize it, then negate the resulting solution. Furthermore, the algorithm assumes constraints of the form $\mG\vx + \vh \leq \bm{0}$. For constraints of the opposite form, $\mG\vx + \vh \ge \bm{0}$, one may simply negate them. Hence, the algorithm is used without loss of generality.}.
For a single constraint, complete clipping attains the exact optimal solution through enumeration, without relying on projected gradient methods or LP solvers, no longer serving as an approximation such as the case with relaxed clipping. Even when multiple constraints are present (where optimality is not guaranteed), this targeted refinement remains a powerful mechanism for improving intermediate-layer bounds.

Next, we consider the upper bound $\overline{\vz}^{(cc)}_2$. Complete clipping yields the optimal solution $\overline{\vz}^\star_2$, which we derive analytically via Algorithm~\ref{alg:coord_ascent_multi_constraint}. According to Theorem~\ref{thm:comprehensive_clipping_single_constraint}, the dual form of this problem and its solution is given by
\begin{equation}
L^\star = \min_{\beta \in \mathbb{R}+}
\left(
\left(\mW^{(1)}_{2,:} - \beta \mW^{(1)}_{1,:}\right)^\top \hat{\vx}
+ \sum_{j=1}^n \left| \mW^{(1)}_{2,j} - \beta \mW^{(1)}_{1,j} \right| \bm{\epsilon}_j
+ \vb_2^{(1)} - \beta \vb_1^{(1)}
\right).
\end{equation}
The dual objective is minimized with respect to the Lagrange multiplier, $\beta \in \mathbb{R}+$. One possible solution occurs at $\beta = 0$, but since relaxed clipping already improved this bound, we expect that $\beta^\star \neq 0$.

Before solving for $\beta^\star$, it is helpful to examine the structure of the dual objective $D(\beta)$. Because the primal objective maximizes $\vz_2$, minimizing the dual objective is equivalent. The function $D(\beta)$ is \textit{convex} and \textit{piece-wise linear}, so we analyze its sub-gradient, $\frac{\partial}{\partial\beta}D(\beta)$:
\begin{equation}
\begin{cases}
     \left(-\mW^{(1)}_{1,:}\right)^\top \hat{\vx}
+ \sum_{j=1}^2 \left\{\text{sign}\left(\mW^{(1)}_{2,j} - \beta \mW^{(1)}_{1,j} \right)(-\mW^{(1)}_{1,j})\bm{\epsilon}_j\right\} - \vb_1^{(1)} &,\; \beta \notin \vq \\
\left[\frac{\partial}{\partial\beta}D(\beta^-), \frac{\partial}{\partial\beta}D(\beta^+)\right] &,\;\beta \in \vq
\end{cases}
\end{equation}
where $\vq$ is the vector of breakpoints $[(\mW^{(1)}_{2,j})/(\mW^{(1)}_{1,j})]_{j=1}^2 = [5, 1/7]$.
Thus, the gradient is uniquely defined on the intervals $\beta \in (-\infty, 1/7)$, $(1/7, 5)$, and $(5, \infty)$. Because $D(\beta)$ is convex, its sub-gradient is negative on the leftmost interval and positive on the rightmost one. The optimal $\beta^\star$ occurs at the break-point where the sub-gradient changes sign from negative to positive, i.e., the break-point whose sub-gradient interval contains zero. The sub-gradients in each region are:
\begin{equation}
    \begin{cases}
        -|\mW^{(1)}_{1,1}|\bm{\epsilon_1} - |\mW^{(1)}_{1,2}|\bm{\epsilon_2} - \left(\mW^{(1)}_{1,:}\right)^\top\hat{\vx} - \vb^{(1)}_1 = -24 &,\; \beta < \frac{1}{7} \\
        -|\mW^{(1)}_{1,1}|\bm{\epsilon_1} + |\mW^{(1)}_{1,2}|\bm{\epsilon_2} - \left(\mW^{(1)}_{1,:}\right)^\top\hat{\vx} - \vb^{(1)}_1 = -1 &,\; \frac{1}{7} < \beta < 5 \\
        +|\mW^{(1)}_{1,1}|\bm{\epsilon_1} + |\mW^{(1)}_{1,2}|\bm{\epsilon_2} - \left(\mW^{(1)}_{1,:}\right)^\top\hat{\vx} - \vb^{(1)}_1 = 2 &,\; \beta > 5.
    \end{cases}
\end{equation}
Note that when transitioning from the case where $\beta\in(-\infty, 1/7)$ to $\beta\in(1/7, 5)$, the term $|\mW^{(1)}_{1,2}|\bm{\epsilon_2}$ switches sign while $-|\mW^{(1)}_{1,1}|\bm{\epsilon_1}$ remains negative. This ordering follows because in our break-point vector, we have that $\vq_2 < \vq_1$, and this ensures the sub-gradient is correctly calculated in each sub-interval, providing intuition as to why $\text{argsort}(\vq)$ is necessary in Algorithm~\ref{alg:coord_ascent_multi_constraint}. For this example, the sign change occurs at $\beta = 5$, giving the minimum of the dual objective at $\beta^\star = 5$ and the solution $\overline{\vz}^{(cc)} = \overline{\vz}^\star_2 = L^\star = -3$.

After performing complete clipping at the intermediate layer, we discover that the optimal upper bounds under our split constraint are given as $\overline{\vz}^\star = [0, -3]^\top$. There is clearly an inter-neuron dependency between the two ReLU neurons such that forcing the first neuron to be in-active subsequently causes the the second neuron to also become in-active. Consequently, the post-activation neurons can be \textit{exactly} bounded using the CROWN algorithm, and in particular, $\sigma(\vz) = \bm{0}$ for all $\vz \in [\underline{\vz}^\star, \overline{\vz}^\star]$. Since the last layer contains no bias vector, it is not necessary to perform complete clipping again, and we have finally verified our desired property for this subproblem,
\begin{equation}
    \min_{\vx\in\mathcal{X}}f(\vx) = \vw^{(2)\top}\sigma(\vz) \geq 0, \qquad \text{subject to } \vz_1 \leq 0.
\end{equation}
\section{Experiments}
\label{appendix:exp}

\subsection{Experiments Settings}
\label{sec:exp_setting}
To allow for comparability of results, all tools for input BaB were evaluated on equal-cost hardware with a 32-vcore CPU, one NVIDIA RTX 4090 GPU with 24 GB memory, and 256 GB CPU memory. For ReLU based BaB experiment, we use a cluster with one AMD EPYC 9534 64-core CPU and the GPU is one NVIDIA RTX 5090 GPU with 32 GB memory and 512 GB CPU memory. Our implementation is based on the open-source $\alpha\!,\!\beta$-CROWN verifier\footnote{\url{https://github.com/huanzhang12/alpha-beta-CROWN}} with Clip-and-Verify related code added. For input bab, three different set-ups of Clip-and-Verify are tested: Relaxed, Relaxed + Reorder, and Complete. Here Relaxed, Reorder and Complete refers to the methodology discussed in ~\ref{sec:complete_bab} and ~\ref{sec:relaxed_bab}. All experiments use 32 CPU cores and 1 GPU. 
The MIP cuts are acquired by the \texttt{cplex}~\cite{cplex} solver (version 22.1.0.0).
We use the Adam optimizer~\citep{kingma2014adam} to solve both ${\boldsymbol{\alpha, \beta}, \muvar, \tauvar}$. For the SDP-FO benchmarks, we optimize those parameters for 20 iterations with a learning rate of 0.1 for $\boldsymbol{\alpha}$ and 0.02 for $\boldsymbol{\beta}, \muvar, \tauvar$. We decay the learning rates with a factor of 0.98 per iteration. The timeout is 200s per instance.
For the VNN-COMP benchmarks, we use the same configuration as $\alpha$,$\beta$-CROWN used in the respective competition and the same timeouts.
For NN control systems, task details are in \cite[Appendix C]{li2025two}. For the  Lyapunov function level set, we verify on $V(x)\in[0.2,0.20001]$ for \texttt{CartPole} and \texttt{Quadrotor-2D}, and on $V(x)\in[2.0,2.1]$ for \texttt{Quadrotor-2D-Large-ROA}. Let $\Omega_{\text{final}}$ denote the resulting (expanded) box used for training/evaluation once the generator stabilizes. Concretely (all “$\pm[\cdot]$” are per-coordinate half-widths), for \texttt{CartPole},
$\Omega_{\text{final}}=\pm[4.8, 3.6,13.2,13.2]$.
For \texttt{Quadrotor-2D},
$\Omega_{\text{final}}=\pm[12,13.2,12,19.2,20.4,88.8]$.

\subsection{Ablation Studies}
\label{sec:exp_ablation}

We conduct a detailed ablation study on multiple adversarially-trained models spanning MNIST, CIFAR, and larger VNN-COMP benchmarks to evaluate Clip-and-Verify and its variants against three baselines: \(\beta\)-CROWN, GCP-CROWN, and BICCOS. Tables~\ref{tab:ablationstudy} and \ref{tab:ablationstudy2} highlight three key metrics: verified accuracy (Ver.\%), average per-example verification time (Time), and average per-verified-example domain visited (D.V.). Domain visited (D.V.) is a metric specific to branch-and-bound (BaB) methods, indicating how many subproblems (domains) are explored to fully verify an instance. Crucially, a higher D.V. count may reflect verifying more difficult instances or a larger overall coverage, rather than inefficiency in the verification process. We also provide Figures~\ref{fig:vnncomprelu} and \ref{fig:sdprelu} that visualize these metrics across all benchmarks.

\paragraph{Overall Verified Accuracy and Time}

From Table~\ref{tab:ablationstudy}, Clip-and-Verify variants nearly always achieve higher verified accuracy than the baselines. For instance, on CNN-B-Adv (CIFAR), Clip-and-Verify with BICCOS reaches 51.5\% verified accuracy—surpassing the 47.0\% ($\beta$-CROWN), 49.5\% (GCP-CROWN with MIP cuts), and 51.0\% (BICCOS alone) of the baselines. On cifar10-resnet, Clip-and-Verify (with MIP cuts or with BICCOS) achieves up to 88.89\% verified accuracy (outperforming the 83.33\% - 87.5\% range from baselines). In terms of verification time on this benchmark, Clip-and-Verify with $\beta$-CROWN is the fastest overall (6.06s), and Clip-and-Verify with BICCOS (11.80s) remains competitive with standalone BICCOS (16.73s) and GCP-CROWN (17.99s). When scaling to deeper networks such as cifar100-2024 and tinyimagenet-2024, Clip-and-Verify with BICCOS maintains the leading verified accuracy (65.5\% and 72.0\%, respectively) while sustaining moderate average verification times (e.g., 8.17s for cifar100-2024 and 10.48s for tinyimagenet-2024), underscoring its suitability for larger-scale verification tasks.

\paragraph{Domain Visited (D.V.) vs. Difficulty}

In Table~\ref{tab:ablationstudy2}, we further examine the average domain visited (D.V.) across verified examples. While Clip-and-Verify variants may sometimes visit more domains (e.g., on oval22, Clip-and-Verify with MIP cuts visits 18891.25 domains compared to 16614.72 for GCP-CROWN with MIP cuts), this often correlates with achieving higher verified accuracy (90.00\% vs 83.33\% in this case). This suggests that the method is effectively exploring the space to verify more challenging instances or a broader set of inputs, leading to a net increase in verified accuracy. For example, on CNN-A-Adv (CIFAR), Clip-and-Verify with MIP cuts visits 3704.86 domains on average and attains 48.5\% verified accuracy. While its D.V. is slightly higher than standalone BICCOS (3622.71 D.V. for 48.5\% accuracy), it's notably lower than $\beta$-CROWN (12621.38 D.V. for 45.5\% accuracy) and GCP-CROWN with MIP cuts (8186.11 D.V. for 48.5\% accuracy), while achieving comparable or better accuracy. In other scenarios (e.g., CNN-A-Adv-4 on CIFAR), Clip-and-Verify with BICCOS achieves 48.5\% accuracy with a D.V. of only 843.47. This is the same or higher accuracy with a significantly smaller D.V. compared to standalone $\beta$-CROWN (46.5\%, 2066.39 D.V.), GCP-CROWN with MIP cuts (48.5\%, 3907.30 D.V.), and BICCOS (48.5\%, 1319.56 D.V.). This illustrates that when Clip-and-Verify effectively prunes the search space, verification efficiency can improve even while tackling similarly challenging problems and achieving high accuracy.

\paragraph{A Breakdown Comparison between Verifiers} Table ~\ref{tab:acasxu_comparison} exhibits the instance-wise comparison on \texttt{acasxu} benchmark, as also illustrated in Figure ~\ref{fig:subfig1_1}. Most of the instances in \texttt{acasxu} benchmark are easy to verify. On these instances, all the verifiers share similar verification time and D.V. . Instances 73 and 65 stand out as they're significantly harder to verify. For instance 73, clipping is able to cut down over 94\% D.V. and over 80\% verification time. While $\alpha,\beta$-CROWN is unable to verify instance 65 within given timeout, Clip-and-Verify is able to verify it within 10 seconds. Cactus plots Figures~\ref{fig:vnncomprelu} and \ref{fig:sdprelu} are also able to give a instance-wise comparison. Please see the following paragraph for a more detailed explanation.

\begin{table*}[t]
\caption{Instance-wise breakdown comparison on \texttt{acasxu} benchmark between $\alpha,\beta$-CROWN and Clip-and-Verify.} 
\label{tab:acasxu_comparison}
\centering
\resizebox{\textwidth}{!}{
\begin{tabular}{l|cc|cc|cc}
\toprule
\multirow{2}{*}{\textbf{Method}} & \multicolumn{2}{c|}{\textbf{Avg. on simpler instances}} & \multicolumn{2}{c|}{\textbf{65}} & \multicolumn{2}{c}{\textbf{73}} \\
\cmidrule(lr){2-3} \cmidrule(lr){4-5} \cmidrule(lr){6-7}
 & \textbf{Time} & \textbf{D.V.} & \textbf{Time} & \textbf{D.V.} & \textbf{Time} & \textbf{D.V.} \\
\midrule
$\alpha,\beta$-CROWN & 1.0287 & 12615.35 & timeout & - & 19.1899 & 5474527 \\
relaxed & 1.0087 & 6759.11 & 9.1839 & 1876495 & 2.7422 & 291855 \\
relaxed + reorder & 1.0283 & 6467.96 & 7.6514 & 1416479 & 2.6385 & 229755 \\
complete & 1.0991 & 6350.11 & 6.8963 & 531381 & 3.5677 & 112431 \\
\bottomrule
\end{tabular}
}
\end{table*}

\paragraph{Interpretation of the Cactus Plot}Figures~\ref{fig:vnncomprelu} and \ref{fig:sdprelu} illustrate the number of instances verified as runtime varies. Each line represents a method, with the x-axis showing the cumulative verified instances under a given timeout (y-axis). This style, used in VNN-COMP reports captures the trade-off between subproblem difficulty and runtime performance.  The curve’s right end indicates total solved instances—further right means more instances verified within the timeout. A flat initial curve fragment reflects many easy instances solved quickly. A curve below and to the right of another shows consistently faster solving. The x-axis ordering reflects instance difficulty, from easy (left) to hard (right). For example, on the tinyimagenet benchmark (Figure~\ref{fig:subfig16}), our Complete Clipping + BICCOS variant solves easy instances faster and solves more hard instances. Across these benchmarks, the proposed Clip-and-Verify framework demonstrates a favorable balance: it reduces the number of hard subproblems, leading to more instances being verified within moderate time thresholds. This suggests that although our clipping procedures incur additional overhead that may be noticeable in easy instances, the overall net gain is highlighted in its ability to solve more instances and hard instances with shorter verification times.

Overall, these results show that Clip-and-Verify’s framework of enhanced linear bounding and ``clipping'' robustly scales across varying network depths and adversarial training schemes. The additional integration of MIP cuts or BICCOS bounding routines consistently pushes verified accuracy closer to each benchmark’s upper bound while balancing verification time and domain exploration. We plot and analyze these ablation studies across all benchmarks in Appendix~\ref{sec:exp_ablation}, confirming that our method not only raises coverage (Ver.\%) but also leverages clipping and cutting plane methods to verify some of the hardest instances encountered.

\begin{table*}[t]
    \centering
    \caption{\footnotesize Ablation Studies on Verified accuracy (Var.\%), avg.\ per-verified-example domain visited number (D.V.) analysis for all method verified instances on different Clip-and-Verify components.}
    \vspace{3pt}
        % \setlength{\tabcolsep}{1.2pt}
        %\centering
        \label{tab:ablationstudy2}
      \adjustbox{max width=.99\textwidth}{
          \begin{threeparttable}[hbt]
        \begin{tabular}{c|c|rr|rr|rr|rr|rr|rr|rr|r}
        \toprule
        Dataset                       
        & Model    
        & \multicolumn{2}{c|}{{$\beta$-CROWN }} 
        &\multicolumn{2}{c|}{{GCP-CROWN}} 
        &\multicolumn{2}{c|}{BICCOS} 
        %&\multicolumn{2}{c|}{BICCOS (with MIP cuts )} 
        &\multicolumn{2}{c|}{Clip-and-Verify}
        &\multicolumn{2}{c|}{Clip-and-Verify}
        &\multicolumn{2}{c|}{Clip-and-Verify}
        & Upper  

        \\
        &
        &     
        & 
        
        &\multicolumn{2}{c|}{with MIP cuts}
        &
        &
        %&\multicolumn{2}{c|}{BICCOS (with MIP cuts )} 
        &\multicolumn{2}{c|}{with $\beta$-CROWN}
        &\multicolumn{2}{c|}{with MIP cuts}
        &\multicolumn{2}{c|}{with BICCOS}
        &   bound
        \\

        \multicolumn{2}{c|}{$\epsilon=0.3$ and $\epsilon=2/255$} 
        & Ver.\%  &D.V.    
        & Ver.\%   &D.V.   
        & Ver.\%   &D.V.   
        & Ver.\%   &D.V.        
        & Ver.\%     &D.V. 
        & Ver.\%      &D.V. 
        &  bound    
        
        \\ 
        
        \midrule
        
        MNIST                         
        & CNN-A-Adv    
        & {71.0} &   2712.72
        & {71.5} &  4447.54
        & \textbf{76.0}  & 3081.50
        & 74.0  & 2395.96
        & {73.5} & 1495.07
        & \textbf{76.0} & 2636.99
        &  76.5        
        
        \\
        
        \hline
        
        \multirow{6}{*}{CIFAR}

        & CNN-A-Adv     
        & 45.5	&	12621.38	
        & \textbf{48.5}	&	8186.11
        &	\textbf{48.5} &	3622.71	
        &	45.5 	&	1037.36	
        &	\textbf{48.5}	&	3704.86	
        &	\textbf{48.5} 	&	2073.58	
        & 50.0            
        
        \\
          
        & CNN-A-Adv-4    
        &   46.5 	&	2066.39		
        &	\textbf{48.5}	&	3907.30		
        &	\textbf{48.5} 	& 	1319.56	
        &	{46.5} 	&	298.89	
        &	\textbf{48.5}	&	2396.36	
        &	\textbf{48.5}	&	843.47
        & 49.5     
        
        \\
          
        & CNN-A-Mix      
        &   42.0 	&	6108.57	
        &	47.5  &	17609.84
        &	\textbf{48.0} 	&	8015.12
        &	43.0 	&	4462.48
        &	47.5	&	9836.06
        &	\textbf{48.0} 	&	4189.92
        & 53.0       
        
        \\
          
        & CNN-A-Mix-4  
        &  51.0	  &	482.43	
        &	55.0	&	8922.50
        &	56.0 	&	3319.90
        &	{51.0}	 &	150.46	
        &	55.0	&	4304.32	
        &	\textbf{56.5} 	&	3501.06		
        & 57.5         
        
        \\  

        & CNN-B-Adv      
        &   47.0 	&	7255.68	
        &	49.5	&	9846.32
        &	51.0	&	5758.00
        &	{49}	&	4951.07		
        &	\textbf{51.5}	&	6979.27	
        &	\textbf{51.5}	&	2677.94	
        & 65.0         
        
        \\
        
        & CNN-B-Adv-4     
        &    55.0	&	1776.66	
        &	58.5	&	4688.22	
        &	59.5	&	2711.15	
        &	56.5	 &	649.92	
        & {60.0}	&	3565.74	
        & \textbf{60.5}&	1095.30
        &63.5

        \\ 
        
        \hline
          
        \multicolumn{2}{l|}{cifar10-resnet} 
        & 83.33&	2105.76	
        &	87.5	&	 7091.30
        &	87.5   &	5428.60	
        &	86.11	 &	478.0
        &	\textbf{88.89}	&	2545.28
        &	\textbf{88.89} 	&	2643.15	
        & 100.0

        \\
          
        \multicolumn{2}{l|}{oval22} 
        &66.66 	&	30949.95		
        &	83.33	&	16614.72	
        &	83.33 	&	12730.08
        &	73.33	&	20191.27		
        &	\textbf{90.00} &	18891.25	
        & \textbf{90.00} 	&	14032.81	
        & {96.67} 
        
        \\ 
          
        % \multicolumn{2}{l|}{sri-resnet-A} 
        % & 27.78	 &	31.60
        % & - & -
        % &	27.78 	&	34.96	 
        % &	\textbf{37.50} 		&	48.07
        % &	-	&	-		
        % &	\textbf{37.50} 	&	22.44	
        % & 83.33

        % \\ 
          
        % \multicolumn{2}{l|}{sri-resnet-B} 
        % & 38.89	&	13.33
        % & - & -
        % &	38.89	 &	19.96	
        % &	{56.94} 	&	34.23	
        % &	-	&	-		
        % &	\textbf{58.33}	&	45.35		
        % & 84.72
        
        % \\ 
        
        %\hline
          
        %\multicolumn{2}{l|}{cifar100-tinyimagenet-2022} 
        %& 70.4	&	12.93	
        %& \multicolumn{2}{c|}{can't scale} 
        %& 74.48	&	21.0	
        %& \multicolumn{2}{c|}{can't scale}  
        %&\textbf{77.55}	&	27.51		
        %&\multicolumn{2}{c|}{can't scale} 
        %& 96.94
        
        %\\

        \multicolumn{2}{l|}{cifar100-2024} 
        & 59.5	&	1535.60
        & -  & - 
        & {60.5}	&	769.87	
        %& \multicolumn{2}{c|}{can't scale}  
        & 63.0 &	152.96	
        & - & -
        & \textbf{65.5} &122.00
        & 84.0
        
        \\

        \multicolumn{2}{l|}{tinyimagenet-2024} 
        & 67.5 &	830.92	
        %& \multicolumn{2}{c|}{can't scale}
        & - & -
        & {69.0}	 &	497.52	
        %& \multicolumn{2}{c|}{can't scale}  
        & {70.0}	 &	188.02		
        & -  & -
        & \textbf{72.0}  & 118.34
        & 78.5
        
        \\

        \hline
        \multicolumn{2}{l|}{vision-transformer 2024~\cite{shi2025genbab}} 
        & 59.0 &	149.75	
        %& \multicolumn{2}{c|}{can't scale}
        & - & -
        & -	 &	-	
        %& \multicolumn{2}{c|}{can't scale}  
        & \textbf{61.0}	 &	84.84
        & -  & -
        & -  & -
        & 100.0
        
        \\
          
        \bottomrule
        \end{tabular}
%         \begin{tablenotes}[flushleft,para]
% %\item [\textsuperscript{*}] %``-'' indicates that we could not run a model due to unsupported model structure or other errors. 
% %We run our BICCOS in different ablation studies with a shorter 200s timeout for all models and compare it to $\beta$-CROWN and GCP-CROWN, it achieves better verified accuracy than all other baselines.
% \end{tablenotes}
\end{threeparttable}
        }
% \vspace{-15pt}
\end{table*}

\subsection{Detailed Comparison with LP Solvers}
\label{sec:appendix_lp_comparison}

To validate our claim that coordinate ascent is more efficient than LP solvers for our task, we integrated LP solvers directly into our BaB algorithm~\ref{alg:input_bab} to solve the clipping optimization problem. We compared our method (``Clip (ours)'') against the Gurobi solver using dual simplex with varying iteration limits. The experiment was run on the \texttt{acasxu} benchmark instance 65, and we report the average time and bound error over 30 million LP calls during the entire BaB process. We report only Gurobi's \texttt{optimize()} time, ignoring all Python overhead for problem creation, giving an advantage to the LP solver.

As shown in Table~\ref{tab:gurobi_comp}, our method is \textbf{740x faster} than a 1-iteration LP heuristic and \textbf{880x faster} than a 10-iteration LP, while achieving comparable accuracy (0.00085 vs. 0.0007 mean error). The LP solver only achieves near-zero error with 10+ iterations, at which point it is intractably slow. The high fixed cost of LP solvers (even for 1 iteration) comes from presolve and initial basis factorization routines, which are far more expensive than our simple $\mathcal{O}(n \log n)$ sort.

\begin{table}[h]
\caption{Comparison of our coordinate ascent ("Clip") vs. Gurobi dual simplex with iteration limits on \texttt{acasxu} (instance 65). Time and error are averaged over 30M calls.}
\label{tab:gurobi_comp}
\centering
\begin{tabular}{l|ccc}
\toprule
\textbf{Method} & \textbf{Avg. Time per Call} & \textbf{avg. Bounds Error} & \textbf{std. Bounds Error} \\
\midrule
LP-simplex (1 iter) & 2.08s & 0.401 & 1.45 \\
LP-simplex (10 iter) & 2.47s & 0.0007 & 0.0018 \\
LP-simplex (20 iter) & 2.49s & 0.0 & 0.0 \\
\textbf{Clip (ours)} & \textbf{0.0028s} & \textbf{0.00085} & \textbf{0.0019} \\
LP-full (ground truth) & 2.50s & 0.0 & 0.0 \\
\bottomrule
\end{tabular}
\end{table}

We also investigated modern GPU-based LP solvers, specifically Google's PDLP~\cite{applegate2021practical,applegate2025pdlp}, which uses a Primal-Dual Hybrid Gradient (PDHG) algorithm. As shown in Table~\ref{tab:pdlp_comp}, while PDLP achieves high accuracy when it converges, it is not suitable for our use case. PDHG does not maintain a feasible solution during iterations. Under strict iteration limits, it has a very high failure rate, returning a \texttt{NOT\_SOLVED} status and thus \textbf{failing to provide a valid dual bound} for our sound verification procedure. In contrast, dual simplex (Gurobi) and our method always return a valid, feasible bound. PDLP is designed for single, massive LPs, whereas our framework requires solving millions of independent, small LPs in parallel, a setting where our specialized GPU solver excels.

\begin{table}[h]
\caption{Comparison with Google's PDLP. Our method is dramatically faster and, critically, always returns a valid bound, unlike PDLP which frequently fails under iteration limits. B.E. means Bound Error.}
\label{tab:pdlp_comp}
\centering
\begin{tabular}{l|cccc}
\toprule
\textbf{Method} & \textbf{Avg. Time per Call} & \textbf{avg. B.E} & \textbf{std. B.E} & \textbf{Failure Rate (\%)} \\
\midrule
PDLP (1 iter) & 2.23s & 2e-8 & 4e-8 & 81.31\% \\
PDLP (10 iter) & 4.21s & 1.4e-8 & 2.3e-8 & 48.26\% \\
PDLP (20 iter) & 6.02s & 5.5e-8 & 1e-8 & 38.38\% \\
\textbf{Clip (ours)} & \textbf{0.0028s} & \textbf{0.00085} & \textbf{0.0019} & \textbf{0.0\%} \\
Gurobi-full & 2.50s & 0.0 & 0.0 & 0.0\% \\
\bottomrule
\end{tabular}
\end{table}

\subsection{Details of Heuristics}
\label{sec:heuristic}

\textbf{Neuron Selection Heuristic.} The performance of Complete Clipping can be sensitive to the choice of which intermediate neurons to refine. To guide this selection, we employ a heuristic based on the Branch-and-Bound for Split Recommendation (BaBSR) score, originally designed to select which neuron to branch on during BaB~\cite{bunel2020branch}. Specifically, we use the intercept score from BaBSR, which estimates the potential impact of refining a neuron's bounds on the final output relaxation. The score for a neuron $k$ is calculated as:
$$
\text{score}_k = \frac{\max(0, -l_k) \cdot \max(0, u_k)}{u_k - l_k} \cdot \max(0, -\text{mean}_{\underline{A}_k})
$$
where $l_k$ and $u_k$ are the neuron's pre-activation lower and upper bounds, and $\text{mean}_{\underline{A}_k}$ relates to the intercept of the neuron's linear lower bound. A higher score indicates a looser relaxation and a greater potential for improvement. For each layer, we compute this score for all unstable neurons and select the top-$k$ neurons for refinement with Complete Clipping.

To validate this choice, we conducted an ablation study on the \texttt{tinyimagenet-2024} and \texttt{cifar100-2024} benchmarks, comparing BaBSR against three alternative heuristics: random selection, prioritizing neurons with the largest bound gap ($U-L$), and prioritizing neurons with the largest bound product ($-U \times L$). As shown in Tables~\ref{tab:heuristic_time} and~\ref{tab:heuristic_dv}, the BaBSR heuristic consistently achieves the best verification time and explores the fewest subdomains, especially for the balanced top-20 setting. This confirms that BaBSR makes more effective choices, leading to earlier pruning and a more efficient search.

\begin{table}[h]
\centering
\caption{Average verification time (s) per instance for different top-k heuristics.}
\label{tab:heuristic_time}
\begin{tabular}{l|c|cccc}
\toprule
\textbf{Benchmark} & \textbf{Top-k Setting} & \textbf{Random} & \textbf{-U $\times$ L} & \textbf{U-L} & \textbf{BaBSR} \\
\midrule
\multirow{3}{*}{tinyimagenet} & Top-10 per layer & 16.3 & 16.5 & 16.8 & \textbf{16.2} \\
& Top-20 per layer & 18.1 & 17.9 & 17.2 & \textbf{15.1} \\
& All neurons & 22.4 & 22.4 & 22.3 & 22.7 \\
\midrule
\multirow{3}{*}{cifar100} & Top-10 per layer & 26.2 & 26.6 & 27.0 & \textbf{26.1} \\
& Top-20 per layer & 29.1 & 28.8 & 27.7 & \textbf{24.3} \\
& All neurons & 36.5 & 36.4 & 36.4 & 36.5 \\
\bottomrule
\end{tabular}
\end{table}

\begin{table}[h]
\centering
\caption{Average number of visited domains per instance for different top-k heuristics.}
\label{tab:heuristic_dv}
\begin{tabular}{l|c|cccc}
\toprule
\textbf{Benchmark} & \textbf{Top-k Setting} & \textbf{Random} & \textbf{-U $\times$ L} & \textbf{U-L} & \textbf{BaBSR} \\
\midrule
\multirow{3}{*}{tinyimagenet} & Top-10 per layer & 485 & 453 & 420 & \textbf{388} \\
& Top-20 per layer & 381 & 370 & 365 & \textbf{342} \\
& All neurons & 278 & 278 & 278 & 278 \\
\midrule
\multirow{3}{*}{cifar100} & Top-10 per layer & 907 & 847 & 785 & \textbf{726} \\
& Top-20 per layer & 712 & 692 & 682 & \textbf{640} \\
& All neurons & 521 & 521 & 521 & 521 \\
\bottomrule
\end{tabular}
\end{table}

\begin{table}[h!]
\centering
\caption{Ablation Study on Top-k Neuron Selection on \texttt{cifar\_cnn\_a\_mix}}
\label{tab:topk_ablation}
\begin{tabular}{lccc}
\toprule
\textbf{Top-k} & \textbf{\# Verified} & \textbf{Avg. \# Domain Visited} & \textbf{Avg. Time (s)} \\
\midrule
0 (reduce to $\beta$-CROWN)   & 84 & 6108.57 & 4.20 \\
20  & 86 & 4462.47 & 4.24 \\
50  & 86 & 4372.79 & 4.83 \\
all & 85 (1 timeout) & 1881.77 & 8.23 \\
\bottomrule
\end{tabular}
\end{table}

\begin{table}[h]
\caption{Ablation Study on Top-k Neuron Selection on \texttt{lsnc}} 
\label{tab:neuron_ratio_lsnc}
\centering
\begin{tabular}{l|cc}
\toprule
\textbf{Ratio of selected neurons} & \textbf{Total time (s)} & \textbf{Domains visited} \\
\midrule
0/6 & 13.37 & 18,073,174 \\
1/6 & 11.25 & 7,254,214 \\
2/6 & 10.54 & 4,914,243 \\
3/6 & 9.70 & 3,072,863 \\
4/6 & 9.23 & 2,088,198 \\
5/6 & 8.69 & 1,739,412 \\
6/6 & 7.66 & 871,14 \\
\bottomrule
\end{tabular}
\end{table}

Table~\ref{tab:topk_ablation} shows the ablation study on the top-k neuron selection heuristic on \texttt{cifar\_cnn\_a\_mix} benchmark, reveals that strategically prioritizing neurons based on their FSB intercept scores significantly enhances verification efficiency. Employing a moderate $k$ (specifically top-k $20$ and $50$) leads to the best outcomes, successfully verifying more properties (86) while substantially reducing both the number of domains visited (by up to 28\% compared to no heuristic) and the overall time (by up to 19\%). In contrast, not using the heuristic (top-k $0$) results in a less efficient search, while applying it to all unstable neurons (top-k ``all") drastically cuts down visited domains but incurs a prohibitive time cost and a timeout, indicating that the overhead of processing too many prioritized neurons outweighs the benefits of more targeted branching. This demonstrates a crucial trade-off, with intermediate k values striking an optimal balance between guided search and computational overhead.

\paragraph{Constraint Importance Heuristic}

The constraint importance heuristic is to sort the constraints for better performance. The constraints corresponding to hyperplanes closer to the center of the current input domain might be more immediately relevant or impactful for tightening the bounds or for cutting off a significant portion of the current feasible region $\mathcal{X}$.

First, for the current input box domain $\mathcal{X}$, determine its centroid $\hat{\vx}$. We can calculate $\bm g^\top\hat{\bm x}+h-\sum_{i=1}^n|g_i|\epsilon_i>0$ (there is no $x \in \mathcal{X}$ satisfy the constraint in \eqref{eq:bound_enhancement_lp_primal}) to check the infeasibility and $\bm g^\top\hat{\bm x}+h+\sum_{i=1}^n|g_i|\epsilon_i\leq0$ (for all $x \in \mathcal{X}$ satisfy the constraint in \eqref{eq:bound_enhancement_lp_primal}) to remove redundancy.
Second, for each available linear constraint  $\vg^\top\vx + h = 0$, calculate the geometric distance from the centroid $\hat{\vx}$ to this hyperplane:

$$d = \frac{|\vg^\top\vx_0 + h|}{||\vg||_2}$$

Then, sort the constraints in ascending order based on these calculated distances to prioritize constraints that are closer to the centroid.
In methods like Algorithm~\ref{alg:coord_ascent_multi_constraint} (Complete Clipping), processing more impactful constraints earlier might lead to faster convergence or more significant bound improvements in the coordinate ascent.
Such an important metric could also be relevant to Algorithm~\ref{alg:sequential_domain_clipping} (Relaxed Clipping).

We tested the bound tightness improvement from the heuristic on \texttt{acasxu} instance 65. 
Clip-and-Verify improves the intermediate bounds for 470,772,542 times during input BaB, 
and 3.962\% of them can be further tightened with constraint importance heuristic. For all these further improved bounds, we computed the empirical quantiles of the relative improvements, shown in Table~\ref{tab:constraint}.

\begin{adjustbox}{max width=.99\textwidth}
\centering
\label{tab:constraint}
\begin{tabular}{ccccccccccc}
\toprule
Percentile of problems & Max & 0.1th & 1th & 5th & 10th & 25th & 50th & 75th & 90th \\
\midrule
Bound Improvement & 4242.6\%  & 270.79\% & 3.07\% & 0.64\% & 0.29\% & 0.10\% & 0.03\% & 0.01\% & 0.002\% \\
\bottomrule
\end{tabular}
\end{adjustbox}

These results reveal that the head of the distribution contains substantial refinements, with the maximum observed improvement reaching up to 4000\%. This indicates that the heuristic can produce tightened bounds, potentially leading to earlier branch pruning or faster convergence.

\begin{figure}[ht]
    \centering 
    \begin{minipage}[b]{0.48\textwidth} 
        \centering
        \includegraphics[width=\textwidth]{figures/relu_bab/cifar10-resnet.png}
        \subcaption{cifar10-resnet}
        \label{fig:subfig11}
    \end{minipage}
    \hfill
    \begin{minipage}[b]{0.48\textwidth} 
        \centering
        \includegraphics[width=\textwidth]{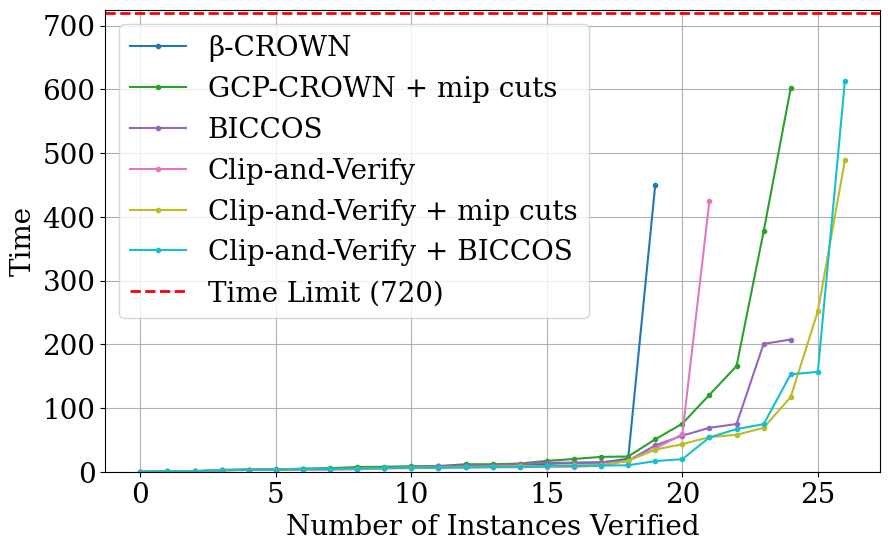}
        \subcaption{oval22}
        \label{fig:subfig12}
    \end{minipage}
    
    \vspace{0.3cm}
    
    \begin{minipage}[b]{0.48\textwidth} 
        \centering
        \includegraphics[width=\textwidth]{figures/relu_bab/cifar100.png}
        \subcaption{cifar100}
        \label{fig:subfig15}
    \end{minipage}
    \hfill
    \begin{minipage}[b]{0.48\textwidth} 
        \centering
        \includegraphics[width=\textwidth]{figures/relu_bab/tinyimagenet.png}
        \subcaption{tinyimagenet}
        \label{fig:subfig16}
    \end{minipage}
    
    \caption{ Plots for hard instances need to be solved by BaB in VNN-COMP benchmarks. It is important to note that in large-scale models with numerous properties to verify, the cutting plane method often incurs significant overhead. This is because our current approach processes all properties in batches, whereas the cutting plane method handles each property individually. Consequently, when addressing datasets like CIFAR-100 (99 properties) and TinyImageNet (199 properties), integrating with BICCOS introduces additional overhead.
    }\label{fig:vnncomprelu}
\end{figure}

\begin{figure}[ht]
    \centering 
    \begin{minipage}[b]{0.48\textwidth} 
        \centering
        \includegraphics[width=\textwidth]{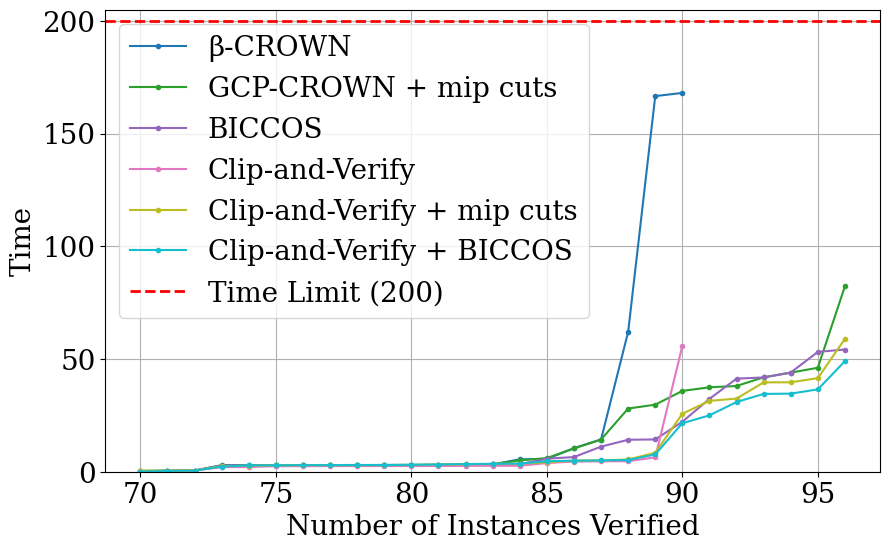}
        \subcaption{cifar\_cnn\_a\_adv}
        \label{fig:subfig21}
    \end{minipage}
    \hfill
    \begin{minipage}[b]{0.48\textwidth} 
        \centering
        \includegraphics[width=\textwidth]{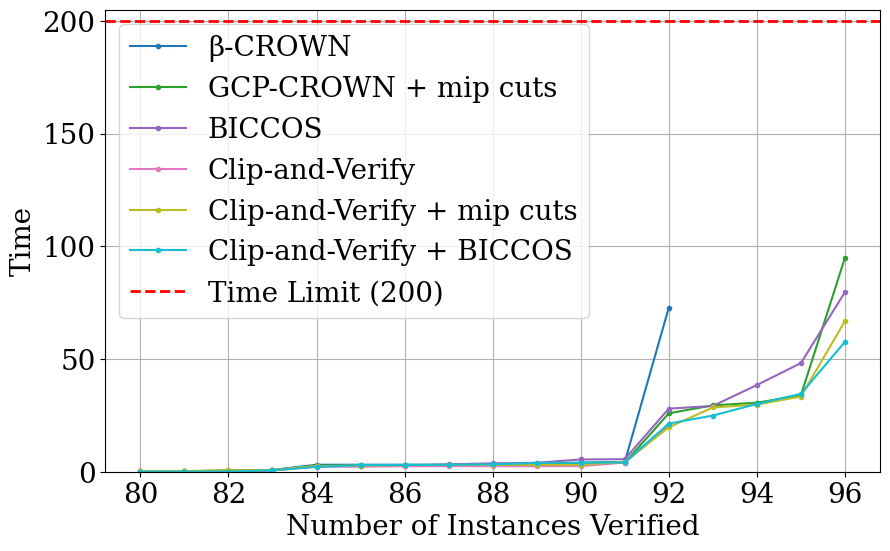}
        \subcaption{cifar\_cnn\_a\_adv4}
        \label{fig:subfig22}
    \end{minipage}
    
    \vspace{0.3cm}
    
    \begin{minipage}[b]{0.48\textwidth} 
        \centering
        \includegraphics[width=\textwidth]{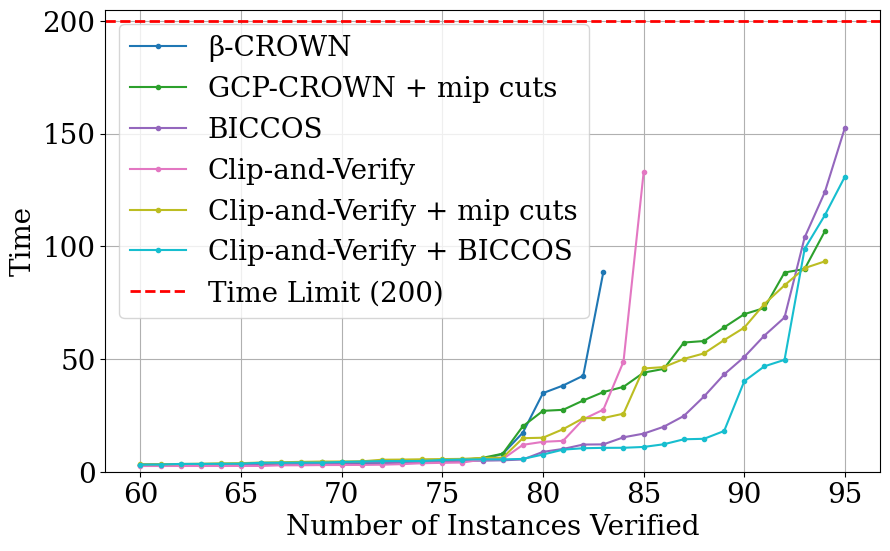}
        \subcaption{cifar\_cnn\_a\_mix}
        \label{fig:subfig23}
    \end{minipage}
    \hfill
    \begin{minipage}[b]{0.48\textwidth} 
        \centering
        \includegraphics[width=\textwidth]{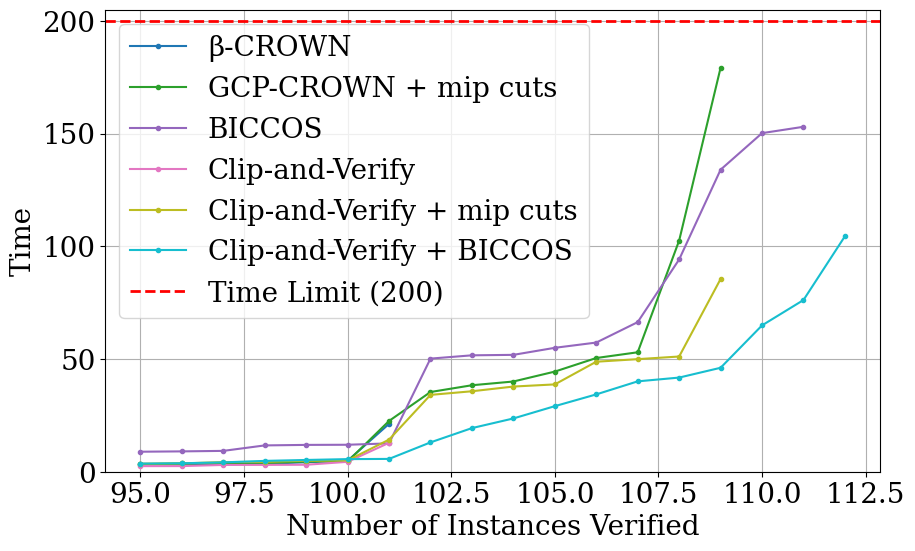}
        \subcaption{cifar\_cnn\_a\_mix4}
        \label{fig:subfig24}
    \end{minipage}

    \vspace{0.3cm}
    
    \begin{minipage}[b]{0.48\textwidth} 
        \centering
        \includegraphics[width=\textwidth]{figures/relu_bab/cifar_cnn_b_adv.png}
        \subcaption{cifar\_cnn\_b\_adv}
        \label{fig:subfig25}
    \end{minipage}
    \hfill
    \begin{minipage}[b]{0.48\textwidth} 
        \centering
        \includegraphics[width=\textwidth]{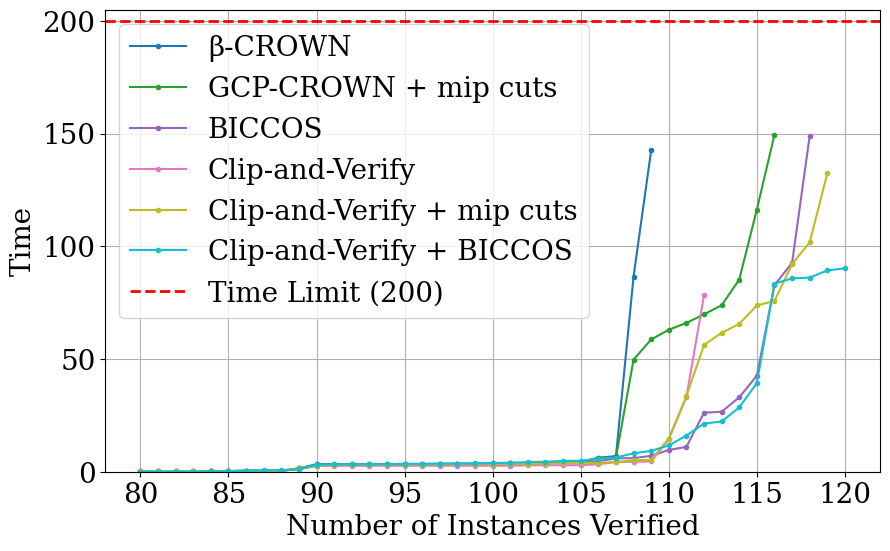}
        \subcaption{cifar\_cnn\_b\_adv4}
        \label{fig:subfig26}
    \end{minipage}
    \begin{minipage}[b]{0.48\textwidth} 
        \centering
        \includegraphics[width=\textwidth]{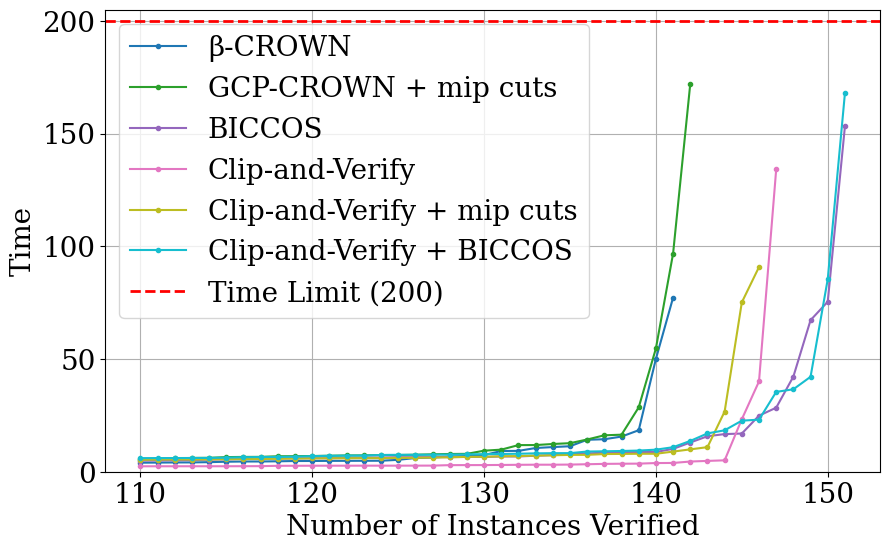}
        \subcaption{mnist\_cnn\_a\_adv}
        \label{fig:subfig26}
    \end{minipage}
    \caption{ Plots for hard instances need to be solved by BaB in SDP benchmarks.
    }
    \label{fig:sdprelu}
\end{figure}

\section{Limitation and Broader Impacts}
\label{sec:limitation}

\textbf{Limitation.} Our framework's effectiveness is influenced by the trade-off between the precision of clipping and its computational overhead. While \textbf{Complete Clipping} utilizes GPU acceleration, its scalability can be impacted when networks have very large hidden layers combined with a high number of unstable neurons and sparse constraints. The memory complexity scales as $\mathcal{O}(B \times N \times M)$, where $B$ is the number of BaB subproblems, $N$ is the number of neurons in the largest layer, and $M$ is the number of linear constraints. To mitigate this, our top-k neuron selection heuristic (Appendix D.3) strategically prioritizes the most critical neurons, balancing precision and cost.

Conversely, while \textbf{Relaxed Clipping} is highly efficient with a complexity of $\mathcal{O}(n)$ per input dimension, its effectiveness can diminish in very high-dimensional input spaces due to the curse of dimensionality, where axis-aligned relaxations may become looser.

However, the ultimate scalability of our method is not fundamentally limited by input dimensionality or layer width, but rather by the intrinsic difficulty of the verification query, which is an NP-hard problem. As demonstrated by our experiments on neural network control systems~\ref{tab:control_systems} even small networks with low-dimensional inputs can pose immense verification challenges. Clip-and-Verify proved essential in solving these hard instances. The key insight is that our approach adapts computational effort based on problem characteristics rather than being constrained by absolute dimensional limits, ensuring \textbf{practical applicability} across diverse verification scenarios.

\paragraph{Broader Impacts}
Neural network verification is crucial for ensuring the safety and reliability of AI systems in critical applications such as autonomous vehicles, medical diagnosis, and financial trading. By significantly accelerating the verification process through efficient domain reduction, our work makes formal verification more practical for larger and more complex neural networks. This advancement enables broader adoption of verification techniques in real-world applications, potentially preventing catastrophic failures and building trust in AI systems. 
Clip-and-Verify's integration with existing frameworks ensures immediate applicability, allowing organizations to implement stronger safety guarantees without substantial overhead. While this work strengthens the safety of AI systems, it is important to note that verification tools should be part of an approach to AI safety, including robust testing interpretability, and ethical guidelines.
\clearpage

\end{document}